\newtheorem{theorem}{Theorem}
\newtheorem{proposition}{Proposition}
\newtheorem{lemma}{Lemma}
\newtheorem{definition}{Definition}
\newtheorem{remark}{Remark}
\newcommand{\bE}{\mathbb{E}}
\newcommand{\bP}{\mathbb{P}}
\newcommand{\bR}{\mathbb{R}}
\newcommand{\bS}{\mathbb{S}}
\newcommand{\sC}{{\mathcal C}} 
\newcommand{\sD}{{\mathcal D}} 
\newcommand{\sE}{{\mathcal E}} 
\newcommand{\sL}{{\mathcal L}} 
\newcommand{\sN}{{\mathcal N}}
\newcommand{\sS}{{\mathcal S}}
\newcommand{\sU}{{\mathcal U}} 
\newcommand{\sX}{{\mathcal X}}
\newcommand{\st}{\ | \ }
\newcommand{\ind}[1]{1\left\{#1\right\}} 
\newcommand{\given}{\ | \ }
\newcommand{\brac}[1]{\left[#1\right]}
\newcommand{\set}[1]{\left\{#1\right\}}
\newcommand{\abs}[1]{\left\lvert #1 \right\rvert}
\newcommand{\paren}[1]{\left(#1\right)}
\newcommand{\norm}[1]{\left\|#1\right\|}
\newcommand{\ip}[2]{\left\langle #1,#2 \right\rangle}
\def\ie{{i.e.,~}}
\def\eg{{e.g.,~}}
\newcommand{\algname}[1]{\textnormal{\textsc{#1}}}
\newcommand{\ratio}[1]{r_{#1}}
\newcommand{\rev}[1]{{\color{black}{#1}}}
\newcommand{\jj}[1]{{#1}}
\newcommand{\david}[1]{{#1}}
\newcommand{\laura}[1]{{#1}}
\title{Subspace Clustering using Ensembles of $K$-Subspaces}
\author{{
\sc John Lipor}\\[2pt]
Department of Electrical \& Computer Engineering, Portland State University, \\ Portland, OR, USA \\
{{lipor@pdx.edu}}\\[6pt]
{\sc David Hong\footnote{John Lipor and David Hong made equal contributions to this work.}}\\[2pt]
Wharton Statistics, University of Pennsylvania, Philadelphia, PA, USA\\
{dahong67@wharton.upenn.edu}\\[6pt]
{\sc Yan Shuo Tan}\\[2pt]
Department of Statistics, University of California, Berkeley, Berkeley, CA, USA\\
{yanshuo@berkeley.edu}\\[6pt]
{\sc and}\\[6pt]
{\sc Laura Balzano}\\[2pt]
Department of Electrical Engineering and Computer Science, University of Michigan, \\ Ann Arbor, MI, USA\\
{girasole@umich.edu}
}
\begin{document}

\maketitle

\begin{abstract}
        Subspace clustering is the unsupervised grouping of points lying near a union of low-dimensional linear subspaces. Algorithms based directly on geometric properties of such data tend to either provide poor empirical performance, lack theoretical guarantees, or depend heavily on their initialization. We present a novel geometric approach to the subspace clustering problem that leverages ensembles of the $K$-subspaces (KSS) algorithm via the evidence accumulation clustering framework. Our
        algorithm, referred to as ensemble $K$-subspaces (EKSS), forms a co-association matrix whose $(i,j)$th entry is the number of times points $i$ and $j$ are clustered together by several runs of KSS with random initializations.
        We prove general recovery guarantees for any algorithm that forms an affinity matrix with entries close to a monotonic transformation of pairwise absolute inner products. We then show that a specific instance of EKSS results in an affinity matrix with entries of this form, and hence our proposed algorithm can provably recover subspaces under similar conditions to state-of-the-art algorithms.
    The finding is, to the best of our knowledge, the first recovery guarantee for evidence accumulation clustering and for KSS variants.
We show on synthetic data that our method performs well in the traditionally challenging settings of subspaces with large intersection, subspaces with small principal angles, and noisy data. Finally, we evaluate our algorithm on six common benchmark datasets and show that unlike existing methods, EKSS achieves excellent empirical performance when there are both a small and large number of points per subspace.
\end{abstract}

\section{Introduction}
\label{sec:intro}

In modern computer vision problems such as face recognition \cite{basri2003lambertian} and object tracking
\cite{tron2007benchmark}, researchers have found success applying the union of subspaces (UoS) model, in which data vectors lie near one of several low-rank subspaces. This model can be viewed as a generalization of principal component analysis (PCA) to the case of multiple subspaces, or alternatively a generalization of clustering models where the clusters have low-rank structure. The modeling goal is therefore to simultaneously identify these underlying subspaces and cluster the points according to their nearest subspace. Algorithms designed for this task are called \textit{subspace clustering} algorithms. This topic has received a great deal of attention in recent years
\cite{vidal2016generalized} due to various algorithms' efficacy on real-world problems such as face recognition \cite{georghiades2001from}, handwritten digit recognition \cite{lecun2016mnist}, and motion segmentation \cite{tron2007benchmark}.

Algorithms for subspace clustering can be divided into geometric methods \cite{bradley2000kplane,tseng2000nearest,agarwal2004k,zhang2009median,park2014greedy,heckel2015robust,jalali2017subspace,gitlin2018improving}, which perform clustering by directly utilizing the properties of data lying on a UoS, and self-expressive methods \cite{liu2010robust,lu2012robust,elhamifar2013sparse,you2016scalable,you2016oracle}, which leverage the fact that points lying on a UoS can be efficiently represented by other points in the same subspace. 
For many geometric methods, the inner product between points is a fundamental tool used in algorithm design and theoretical analysis.
In particular, the observation that the inner product between points on the same subspace is often greater than that between points on different subspaces plays a key role. This idea motivates the thresholded subspace clustering (TSC) algorithm \cite{heckel2015robust}, appears in the recovery guarantees of the conic subspace clustering algorithm \cite{jalali2017subspace},
and has been shown to be an effective method of outlier rejection in both robust PCA \cite{rahmani2017coherence} and subspace clustering \cite{gitlin2018improving}. However, despite directly leveraging the UoS structure in the data, geometric methods tend to either exhibit poor empirical performance, lack recovery guarantees, or depend heavily on their initialization.

In this work, we aim to overcome these issues through a set of general recovery guarantees as well as a novel geometric algorithm that achieves state-of-the-art performance across a variety of benchmark datasets. \rev{As our first contribution,} we develop recovery guarantees that match \david{the} state-of-the-art and apply to \emph{any} algorithm that
builds an affinity matrix $A$ with entries close to a monotonic \david{transformation} of pairwise \david{absolute} inner products, i.e., for which
\begin{equation}
    \abs{A_{i,j} - f\left( \abs{\ip{x_{i}}{x_{j}}} \right)} < \tau,
    \label{eq:introObservation}
\end{equation}
where $f$ is a monotonic function, $x_i, x_j$ are data points, and $\tau > 0$ is the maximum deviation.
Such affinity matrices arise in \rev{many modern big data} settings, where only approximate inner products are practically available \rev{or where deviating from inner products may be empirically desirable, but analysis is challenging. An example of the first setting is with dimensionality-reduced data, compressed measurements, or missing data, examples that have become extremely common as we design data-efficient and memory-efficient algorithms for a variety of applications. 
One would also be able to leverage known higher-order structure within the data, such as sparsity structure within each subspace cluster. Our general results would immediately admit theoretical guarantees for an algorithm that deviates from pairwise inner products when leveraging the higher-order structure, as long as the deviation is approximately monotonic.}

\rev{Our second contribution is} the ensemble $K$-subspaces (EKSS) algorithm, which builds its affinity matrix by combining the outputs of many instances of the well-known $K$-subspaces (KSS) algorithm \cite{bradley2000kplane,agarwal2004k} via the \emph{evidence accumulation} clustering framework \cite{fred2005combining}. We show that the affinity matrix obtained from the first iteration of KSS fits the observation model \eqref{eq:introObservation} and consequently enjoys strong
theoretical guarantees. To the best of our knowledge,
these results are the first theoretical guarantees characterizing \david{an} affinity matrix resulting from evidence accumulation, as well as the first recovery guarantees for any variant of the KSS algorithm. Finally, we demonstrate that EKSS achieves excellent empirical performance on \david{several} canonical benchmark datasets.

The remainder of this paper is organized as follows. In Section \ref{sec:related} we define the subspace clustering problem in detail and give an overview of the related work. In Section \ref{sec:ekss} we propose the Ensemble $K$-Subspaces algorithm. Section \ref{sec:theory} contains the theoretical contributions of this paper. We demonstrate the strong empirical performance of EKSS on a variety of datasets in Section \ref{sec:experiments}. Conclusions and future work are \david{described} in Section
\ref{sec:conclusion}.

\section{Problem Formulation \& Related Work}
\label{sec:related}

Consider a collection of points $\sX = \set{x_{1},\dots,x_{N}}$ in $\bR^{D}$ lying near a union of $K$ subspaces $\sS_{1},\dots,\sS_{K}$ having dimensions $d_{1},\dots,d_{K}$. 
Let $X \in \bR^{D \times N}$ denote the matrix whose columns are the \david{points in} $\sX$. The goal of subspace clustering is to label points in the unknown union of $K$ subspaces according to their
nearest subspace. Once the clusters have been obtained, the corresponding subspace bases can be recovered using principal components analysis (PCA).

\subsection{Subspace Clustering}

Most state-of-the-art approaches to subspace clustering rely on a \textit{self-expressive} property of the data, which informally states that points in the UoS model can be most efficiently represented by other points within the same subspace. 
These methods typically use a self-expressive data cost function that is regularized to \david{encourage} efficient representation as follows: 
\begin{eqnarray}
    \label{eq:selfExpress}
    \min_{Z} & \norm{X - XZ}_{F}^{2} + \lambda \norm{Z} \\
    \text{subject to} & \text{diag}(Z) = 0 \nonumber,
\end{eqnarray}
where $\lambda$ balances the regression and penalization terms and $\norm{Z}$ may be the 1-norm as in sparse subspace clustering (SSC) \cite{elhamifar2013sparse}, nuclear norm as in low-rank representation (which omits the constraint on $Z$) \cite{liu2010robust}, or a combination of these and other norms. An affinity/similarity matrix is then obtained as $|Z| + |Z|^{T}$, after which spectral clustering is performed. Other terms are considered in the optimization problem to provide robustness to noise and outliers, and numerous recent papers follow this framework
\cite{soltanolkotabi2012geometric,lu2012robust,vidal2014low,shen2016online}.
For large datasets, solving the above problem may be prohibitive, and algorithms such as \cite{you2016scalable,you2016oracle} employ orthogonal matching pursuit and elastic-net formulations to provide reduced computational complexity and improved connectivity.
These algorithms are typically accompanied by theoretical results that guarantee no false connections (NFC), \ie that \david{points lying in different subspaces have zero affinity.} These guarantees depend on a notion of distance between subspaces called the \emph{subspace affinity}~\eqref{eq:affinity}. Roughly stated, the closer any pair of underlying subspaces is, the more difficult the subspace clustering problem becomes. An excellent overview of these results is given in
\cite{wang2016graph}.

Aside from the self-expressive methods above, a number of geometric approaches have also been considered in the past. Broadly speaking, these methods all determine a set of $q$ ``nearest neighbors'' for each point \david{that} are used to build an affinity matrix, with labels obtained via spectral clustering. An early example of this type of algorithm is the Spectral Local Best-Fit Flats (SLBF) algorithm \cite{zhang2012hybrid}, in which neighbors are selected in terms of Euclidean distance, with the optimal number
of neighbors estimated via the introduced local best-fit heuristic. While this heuristic is theoretically motivated, no clustering guarantees accompany this approach, and its performance on benchmark datasets lags significantly behind that of self-expressive methods. The greedy subspace clustering (GSC) algorithm \cite{park2014greedy} greedily builds subspaces by adding points with largest projection in order to form an affinity matrix, with the number of neighbors fixed. This algorithm
has strong theoretical guarantees, and while its performance is still competitive, it lags behind that of self-expressive methods. Thresholded subspace
clustering (TSC) \cite{heckel2015robust} chooses neighbors based on the largest absolute inner product, and the authors prove that this simple approach obtains correct clustering under assumptions similar to those considered in the analysis of SSC. However, empirical results show that TSC performs poorly on a number of benchmark datasets. Our proposed algorithm possesses the same theoretical guarantees of TSC while also achieving excellent empirical performance.

In contrast to the above methods, the $K$-subspaces (KSS) algorithm \cite{bradley2000kplane,agarwal2004k} seeks to minimize the sum of residuals of points to their assigned subspace, \ie 
\begin{equation}
\label{eq:ksscost}
    \min_{\sC,\sU} \sum_{k=1}^{K} \hspace{1mm} \sum_{i: x_{i} \in c_{k}} \norm{x_{i} - U_{k}U_{k}^{T}x_{i}}_{2}^{2},
\end{equation}
where $\sC = \set{c_{1},\dots,c_{K}}$ denotes the set of estimated clusters and $\sU = \set{U_{1},\dots,U_{K}}$ denotes the corresponding set of orthonormal subspace bases. We claim that this is a ``natural'' choice of objective function for the subspace clustering problem since its value is zero if a perfect UoS fit is obtained. Further, in the case of noiseless data, the optimal solution to \eqref{eq:ksscost} does not depend on how close any pair of subspaces is, indicating that a global
solution to \eqref{eq:ksscost} may be more robust than other objectives to subspaces \david{with} high affinity. However, \eqref{eq:ksscost} was recently shown to be even more difficult to solve than the $K$-means problem in the sense that it is NP-hard to \emph{approximate} within a constant factor \cite{gitlin2018improving} in the worst case. As a result, researchers have turned to the use of alternating algorithms to obtain an approximate solution\david{.} Beginning with an initialization of
$K$ candidate subspace bases, \jj{KSS} \david{alternates between} (i) clustering points by nearest subspace and (ii) obtaining new subspace bases by performing PCA on the points in each cluster. The algorithm is computationally efficient and guaranteed to converge to a
local minimum \cite{bradley2000kplane,tseng2000nearest}\david{, but as} with $K$-means, the KSS output is highly dependent on initialization. It is typically applied by performing many restarts and choosing the result with minimum cost \eqref{eq:ksscost} as the output. This idea was extended to minimize the
median residual (as opposed to mean) in \cite{zhang2009median}, where a heuristic for intelligent initialization is also proposed. In \cite{balzano2012ksubspaces}, the authors use an alternating method based on KSS to perform online subspace clustering in the case of missing data.
In \cite{he2016robust}, the authors propose a novel initialization method based on ideas from \cite{zhang2012hybrid}, and then perform the subspace update step using gradient steps along the Grassmann manifold. While this method is computationally efficient and improves upon the previous performance of KSS, it lacks theoretical guarantees. 
Most recently, the authors of \cite{gitlin2018improving} show that the subspace estimation step in KSS \david{can be} cast as a robust
subspace recovery problem that can be efficiently solved using the Coherence Pursuit (CoP) algorithm \cite{rahmani2017coherence}. The authors motivate the use of CoP by proving that it is capable of rejecting outliers from a UoS and demonstrate that replacing PCA with CoP results in strong empirical performance when there are many points per subspace. However, performance is limited when there are few points per subspace, and the algorithm performance is still highly dependent on the
initialization. Moreover, CoP can be easily integrated into our proposed algorithm to provide improved performance.

Our method is based on the observation that the partially-correct clustering information from each random initialization of KSS can be leveraged using \emph{consensus clustering} in such a way that the consensus is much more informative than even the best single run.
\jj{Unlike the above-mentioned variations on KSS, our proposed approach has cluster recovery guarantees, and its empirical performance is significantly stronger.}

\subsection{Consensus Clustering}

Ensemble methods have been used in the context of general clustering for some time and
fall within the topic of \textit{consensus clustering}, with an overview of the benefits and techniques given in \cite{ghosh2011cluster}. The central idea behind these methods is to obtain many clusterings from a simple base clusterer, such as $K$-means, and then combine the results intelligently. In order to obtain different base clusterings, diversity of some sort must be incorporated. This is
typically done by obtaining bootstrap samples of the data \cite{leisch1999bagged,minaei2004ensembles}, subsampling the data to reduce computational complexity \cite{tumer2008ensemble}, or performing random projections of the data \cite{topchy2005clustering}. Alternatively, the authors of \cite{fred2002evidence,fred2002data} use the randomness in different initializations of $K$-means to obtain diversity. \david{We take this approach here} for subspace clustering. After diversity is achieved, the base
clustering results must be combined. The \textit{evidence accumulation clustering} framework laid out in
\cite{fred2005combining} combines results by voting, \ie creating a co-association matrix $A$ whose $(i,j)$th entry is equal to the number of times two points are clustered together\footnote{\jj{In the context of consensus clustering, we use the terms \emph{affinity matrix} and \emph{co-association matrix} interchangeably.}}. A theoretical framework for this approach is laid out in \cite{bulo2010pairwise}, where the entries of the co-association matrix are modeled as Binomial random variables. This approach is studied further in  
\cite{lourencco2013probabilistic,lourencco2015probabilistic}, in which the clustering problem is solved as a Bregman divergence minimization. These models result in improved clustering performance over previous work but are not accompanied by any theoretical guarantees with regard to the resulting co-association matrix. Further, in our experiments, we did not find the optimization-based approach to perform as well as simply running spectral clustering on the resulting
co-association matrix.

In the remainder of this paper, we apply ideas from consensus clustering to the subspace clustering problem. We describe our ensemble KSS algorithm and its guarantees and demonstrate the algorithm's state-of-the-art performance on \david{both} synthetic and real datasets. 

\section{Ensemble $K$-Subspaces}
\label{sec:ekss}

In this section, we describe our method for subspace clustering using ensembles of the $K$-subspaces algorithm, which we refer to as Ensemble $K$-subspaces (EKSS). 
Our key insight leading to EKSS is the fact that the partially-correct clustering information from each random initialization of KSS can be combined to form a more accurate clustering of the data.
We therefore run several random initializations of KSS and form a co-association
matrix \david{combining their results that} \jj{becomes the affinity matrix used in spectral clustering to obtain cluster labels.}

In more technical detail, our EKSS algorithm proceeds as follows. For each of $b=1,\dots,B$ base clusterings, we obtain an estimated clustering $\sC^{(b)}$ from a single run of KSS with a random initialization of candidate bases. 
The $(i,j)$th entry of the co-association matrix is the number of \david{base clusterings} for which $x_{i}$ and $x_{j}$ are clustered together.
We then threshold the co-association matrix as in
\cite{heckel2015robust} by taking the top
$q$ values from each row/column. Once this thresholded co-association matrix is formed, cluster labels are obtained using spectral clustering. Pseudocode for EKSS is given in Alg. \ref{alg:EKSS},
where \algname{Thresh} sets all but the top $q$ entries in each row/column to zero as in \cite{heckel2015robust} (pseudocode for this procedure is given in Appendix~\ref{app:pseudocode}) and \algname{SpectralClustering} \cite{ng2001spectral} clusters the data points based on the co-association matrix $A$.
Note that the number of candidates $\bar{K}$ and candidate dimension $\bar{d}$ need not match the number $K$ and dimension $d$ of the true underlying subspaces. 
Fig. \ref{fig:progression} shows the progression of the co-association matrix as $B=1, 5, 50$ base clusterings are used for noiseless data from $K = 4$ subspaces of dimension $d = 3$ in an ambient space of dimension $D = 100$ using $\bar{K} = 4$ candidates of dimension $\bar{d}=3$. We discuss the choice of parameters for EKSS in the following sections.

\begin{algorithm}[t]
    \caption{\algname{Ensemble $K$-subspaces (EKSS)}}
    \label{alg:EKSS}
    \begin{algorithmic}[1]
        \STATE \textbf{Input:} $\sX = \set{x_{1},x_{2},\dots,x_{N}} \subset \bR^D$: data, $\bar{K}$: number of candidate subspaces, $\bar{d}$: candidate dimension, $K$: number of output clusters, $q$: threshold parameter, $B$: number of base clusterings, $T$: number of \algname{KSS} iterations
        \STATE \textbf{Output:} $\sC=\set{c_1, \dots,c_K}$: clusters of $\sX$
        \FOR{$b = 1,\dots,B$ (in parallel)}
            \STATE $U_{1},\dots,U_{\bar{K}} \overset{iid}{\sim} \operatorname{Unif}(\operatorname{St}(D,\bar{d}))$ \label{step:candidates} \hfill Draw $\bar{K}$ random subspace bases 
            \STATE $c_k \gets \set{x \in \sX\ :\ \ \forall j \ \norm{U_k^Tx}_2 \geq \norm{U_j^Tx}_2 }$ for $k = 1,\dots,\bar{K}$ \hfill Cluster by projection
            \FOR{$t = 1,\dots,T$ (in sequence)}
            \STATE $U_{k} \gets \algname{PCA}\paren{ c_k, \bar{d} }$ for $k = 1,\dots,\bar{K}$ \label{step:pca} \hfill Estimate subspaces
                \STATE $c_k \gets \set{x \in \sX\ :\ \ \forall j \ \norm{U_k^Tx}_2 \geq \norm{U_j^Tx}_2 }$ for $k = 1,\dots,\bar{K}$ \hfill Cluster by projection
            \ENDFOR
            \STATE $\sC^{(b)} \gets \set{c_{1},\dots,c_{\bar{K}}}$
        \ENDFOR
        \STATE $A_{i,j} \gets \frac{1}{B}\abs{\set{b:x_i,x_j\text{ are co-clustered in }\sC^{(b)}}}$ for $i,j = 1,\dots,N$ \hfill Form co-association matrix
        \STATE $\bar{A} \gets \algname{Thresh}(A,q)$ \hfill Keep top $q$ entries per row/column
        \STATE $\sC \gets$ \algname{SpectralClustering}($\bar{A},K$) \hfill Final Clustering
    \end{algorithmic}
\end{algorithm}

\subsection{Computational Complexity}
\label{sec:complexity}

\begin{figure}[t]
    \centering
    \includegraphics[width=\textwidth]{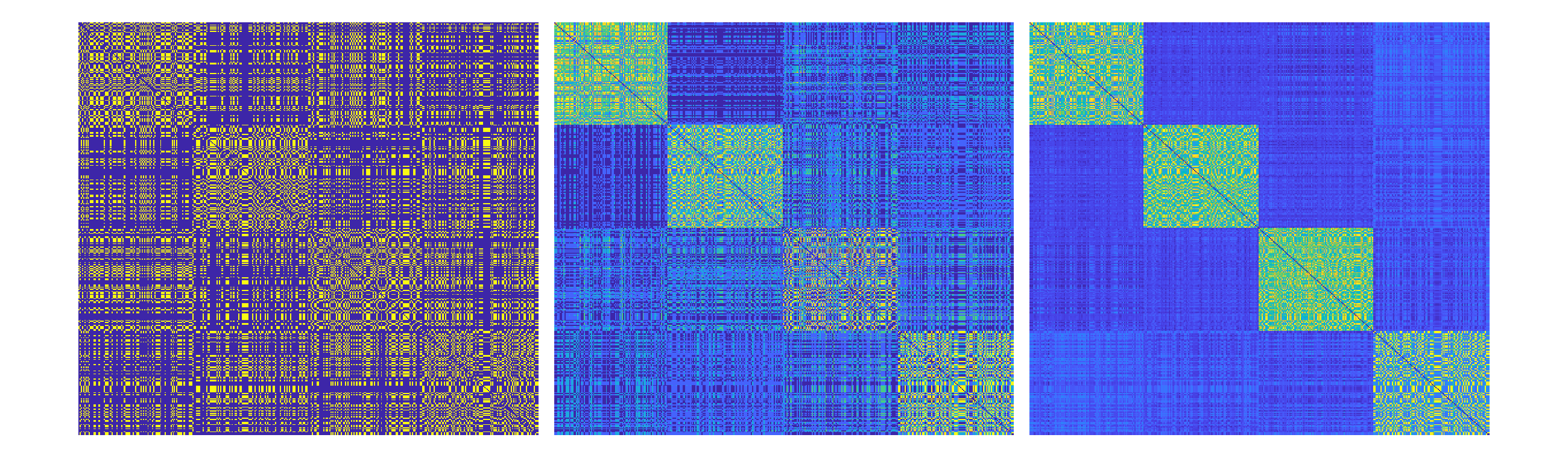}
    \caption{Co-association matrix of EKSS for $B = 1, 5, 50$ base clusterings. Data generation parameters are $D = 100$, $d = 3$, $K = 4$, $N = 400$, and the data is noise-free; the algorithm uses $\bar{K}=4$ candidate subspaces of dimension $\bar{d}=3$ and no thresholding. Resulting clustering errors are 61\%, 25\%, and 0\%.}
    \label{fig:progression}
\end{figure}

Recall the \david{relevant parameters}: $K$ is the number of output clusters, $\bar{K}$ is the number of candidate subspaces in EKSS, $\bar{d}$ is the dimension of those candidates, $N$ is the number of points, $D$ is the ambient dimension, $B$ is the number of KSS base clusterings to combine, and $T$ is the number of iterations within KSS.
To form the co-association matrix, the complexity of EKSS is $O(BT(\bar{K}D^2\bar{d} + \bar{K}D\bar{d}N))$. We run the KSS base clusterings in parallel and use very few iterations, making the functional complexity of EKSS $O(\bar{K}D^2\bar{d} + \bar{K}D\bar{d}N)$, which is competitive with existing methods. In comparison, TSC has complexity $O(DN^2)$ and SSC-ADMM has complexity $O(TN^{3})$, where $T$ is the number of ADMM iterations. 
Note that typically $N > D$ and sometimes much greater. We have not included the cost of spectral clustering, which is $O(KN^2)$. For most modern subspace clustering algorithms (except SSC-ADMM), this dominates the computational complexity as $N$ grows.

\subsection{Parameter Selection}
\label{sec:paramSelection}

EKSS requires a number of input parameters, whose selection we now discuss. As stated in Section \ref{sec:complexity}, we use a small number of KSS iterations, setting $T = 3$ in all experiments. Typically, $B$ should be chosen as large as computation time allows. In our experiments on real data, we choose $B = 1000$. The number of output clusters $K$ is required for all subspace clustering algorithms, and methods such as \david{those} described in \cite{heckel2014neighborhood} can be used
to estimate this value. Hence, the relevant parameters for selection are the candidate parameters $\bar{K}$ and $\bar{d}$ and the thresholding parameter $q$.

When possible, the candidate parameters should be chosen to match the true UoS parameters. In particular, it is advised to set $\bar{K} = K$ and $\bar{d} = d$ when they are known. In practice, a good approximating dimension for the underlying subspace is often known. For example, images of a Lambertian object under varying illumination are known to lie near a subspace with $d = 9$ \cite{basri2003lambertian} and moving objects in video are known to lie near an affine subspace with
$d = 3$ \cite{tomase1992shape}. 
However, as we will show in the following section, our theoretical guarantees hold even if there is model mismatch. Namely, the choice of $\bar{K} = 2$ and $\bar{d} = 1$ still provably yields correct clustering, though this results in a degradation of empirical performance.

The thresholding parameter $q$ can be chosen according to data-driven techniques as in \cite{heckel2014neighborhood}, or following the choice in
\cite{heckel2015robust}. In our experiments on real data, we select $q$ (or the relevant thresholding parameter in the case of SSC) by sweeping over a large range of values and choosing the value corresponding to the lowest clustering error. Note that $q$ is applied to the co-association matrix $A$, and hence the computational complexity of performing model selection is much lower than that of running the entire EKSS algorithm numerous times.

We briefly consider the parameters required by existing algorithms. SSC \cite{elhamifar2013sparse} and EnSC \cite{you2016oracle} both require two parameters to be selected when solving the sparse regression problem \eqref{eq:selfExpress}. SSC also performs thresholding on the affinity matrix, which in our experiments appears critical to performance on real data. See the author code of \cite{elhamifar2013sparse} for details. TSC requires the thresholding parameter
$q$ to be selected. To the best of our knowledge, no principled manner of selecting these parameters has been proposed
in the literature, and we consider this an important issue for future study.

\subsection{Base Clustering Accuracy}
\label{sec:clusteringAccuracy}

\begin{sloppypar}
    A natural heuristic to improve the clustering performance of EKSS is to add larger values to the co-association matrix for base clusterings believed to be more accurate, and smaller values for those believed to be less accurate. Here, we briefly describe one such approach. Note that Step 12 in \algname{EKSS} is equivalent to adding a unit weight to each entry corresponding to co-clustered points, i.e.,
$A \gets \frac{1}{B} \sum_{b = 1}^B A^{(b)} w(b)$,
where $A_{i,j}^{(b)} := \ind{x_i,x_j \text{ are clustered together in }\sC^{(b)}}$ and $w(b) = 1$. 
The key idea is that this weight $w(b)$ can instead be chosen to reflect some estimation of the quality of the $b$th clustering; we propose using the KSS cost function as a measure of clustering quality.
Let $\sC^{(b)} =
\set{c_{1}^{(b)},\dots,c_{K}^{(b)}}$ denote the $b$th base clustering, and let $\sU^{(b)} =
\set{U_{1}^{(b)},\dots,U_{K}^{(b)}}$ denote the set of
subspace bases estimated by performing PCA on the points in the corresponding clusters. The clustering quality can then be measured as
\begin{equation}
    \label{eq:factor}
    w(b) = 1 - \sum_{k = 1}^{K} \sum_{i: x_{i} \in c_{k}^{(b)}} \norm{x_{i} - U_{k}^{(b)} {U_{k}^{(b)}}^{T} x_{i}}_{2}^{2} /\norm{X}_{F}^{2},
\end{equation}
a value between 0 and 1 that decreases as the KSS cost increases. We employ this value of $w(b)$ in all experiments on real data.
\end{sloppypar}



\subsection{Alternative Ensemble Approaches}

As KSS is known to perform poorly in many cases, one may wonder whether better performance can be obtained by applying the evidence accumulation framework to more recent algorithms such as SSC and GSC. We attempted such an approach by subsampling the data to obtain diversity in SSC-OMP \cite{you2016scalable} and EnSC \cite{you2016oracle}. However, the resulting clustering performance did not always surpass that of the base algorithm run on the full dataset. Similar behavior occurred for ensembles of the GSC algorithm
\cite{park2014greedy} as well as the Fast Landmark Subspace Clustering algorithm \cite{wang2015fast}. We also experimented with MKF as a base clustering algorithm but found little or no benefit at a significant increase in computational complexity. Hence, it seems that the success of our proposed approach depends both on the evidence accumulation framework \textit{and} the use of KSS as a base clustering algorithm. Toward this end, we found that EKSS did benefit from the recent CoP-KSS algorithm
\cite{gitlin2018improving} as a base clusterer for larger benchmark datasets, as discussed in Section~\ref{sec:experiments}. The appropriate combination of ensembles of other algorithms is nontrivial and an exciting open topic for future research.

\section{Recovery Guarantees}
\label{sec:theory}

\def\affclose/{angle preserving}
\def\AffClose/{Angle Preserving}
\newcommand{\ang}[2]{#1 \vee #2}


\rev{
In this section, we present theoretical conditions that tie clustering performance to the inner products between points in the dataset. We begin by presenting a general framework that can be applied to any algorithm whose clustering is based on approximate inner products. In particular, we define the notion of an ``\affclose/'' affinity matrix and show that any \affclose/ affinity matrix can be used to obtain clustering with guarantees matching those of state-of-the-art subspace clustering
methods. In Section~\ref{sec:ekssTheory}, we show that EKSS has such an affinity matrix after the first KSS clustering step with high probability, providing the first recovery guarantees for any algorithm based on KSS. This is followed by discussion in Section \ref{sec:resultsdiscuss}. Finally in Section \ref{sec:addguarantees}, we apply our framework to achieve novel results for TSC on dimensionality-reduced data, improving on the results of \cite{heckel2017dimensionality} to show that TSC achieves correct clustering (as opposed to no false connections only) in this case. 
}

We use $N_{max}$ ($N_{min}$) throughout to refer to the maximum (minimum) number of points on any single subspace and $d_{max}$ to refer to the maximum subspace dimension. The proofs of all results in this section \david{are} in Appendix~\ref{app:proofs}.

\subsection{Recovery Guarantees for \AffClose/ Affinity Matrices}
\label{sec:generalTheory}

This section extends the NFC and connectedness guarantees of~\cite{heckel2015robust} to any algorithm that uses \affclose/ affinity matrices. The key idea is that these affinity matrices sufficiently capture the information contained in pairwise angles and obtain good recovery when the angles differentiate the clusters well. Observe that using angles need not be a ``goal'' of such methods; deviating may in fact produce better performance in broader regimes, \eg by incorporating higher order structure. Nevertheless, so long as the relative angles among points are sufficiently captured, the method immediately enjoys the guarantees of this section.

\begin{definition}[\AffClose/]
An affinity matrix $A$ is \emph{$\tau$-\affclose/} for a set of points $\sX$ with respect to a strictly increasing function \david{$f: \bR_+ \to \bR_+$} if
\begin{equation}
\abs{A_{i,j} - f\left( \abs{\ip{x_i}{x_j}} \right)} \leq \tau,
\quad i,j \in [N],
\end{equation}
where we note that $\cos^{-1}\left( \abs{\ip{x_i}{x_j}} \right)$ is the angle between the points $x_i$ and $x_j$.
\end{definition}

Note that $f$ is an arbitrary monotonic transformation \david{that takes small angles (large absolute inner products)} to large affinities and \david{takes} large angles \david{(small absolute inner products)} to small affinities, and $\tau$ quantifies how close the affinity matrix is to such a \david{transformation}. Taking $f(\alpha) = \alpha$ and $\tau = 0$ recovers the absolute inner product. 

To guarantee correct clustering (as opposed to NFC only), it is sufficient to show \david{that the thresholded affinity matrix has both NFC and exactly $K$ connected components} \cite[Appendix A]{heckel2015robust}. We formalize this fact for clarity in the proposition below.
\begin{proposition}[NFC and connectedness give correct clustering\david{~\cite[Equation (15)]{heckel2015robust}}]
    \label{prop:correctClustering}
    Assume that the \david{thresholded} affinity matrix formed by \david{an} algorithm \david{satisfies NFC with probability at least $1-\varepsilon_{1}$ and \david{given NFC} satisfies the connectedness condition with probability at least $1-\varepsilon_{2}$.} Then spectral clustering \david{correctly identifies the components} with probability at least $1 - \varepsilon_{1} - \varepsilon_{2}$.
    \rev{The probabilities here are all with respect to both the randomness in the data and the randomness in the algorithm (if any).}
\end{proposition}

\david{Thus, we study conditions under which NFC and connectedness are guaranteed; conditions for correct clustering follow. In particular, we provide upper bounds on $\tau$ that guarantee NFC (Theorem~\ref{thm:nfc}) and connectedness (Theorem~\ref{thm:cdel}).} \david{The upper bound for NFC is given by a property of the data} that we call the \emph{$q$-angular separation}, defined as follows. We \david{later} bound this quantity in a variety of contexts.

\begin{definition}[Angular Separation]
  \label{def:gap}
  The \emph{$q$-angular separation} $\phi_q$ of the points $\sX = \sX_{1} \cup \dots \cup \sX_{K}$ with respect to a strictly \david{increasing} function \david{$f:\bR_+ \to \bR_+$} is
    \begin{equation}
        \phi_q = \min_{l \in [K], i \in [N_{l}]} \frac{f\left(\abs{ \ip{x_{i}^{(l)}}{x_{\neq i}^{(l)}}}_{[q]}\right) - f\left(\max_{k \neq l, j \in [N_k]} \abs{ \ip{x_{i}^{(l)}}{x_{j}^{(k)}}} \right)}{2},
        \label{eq:taubound_nfc}
    \end{equation}
    where $x_{i}^{(l)}$ denotes the $i$th point of $\sX_{l}$, and $\abs{ \ip{x_{i}^{(l)}}{x_{\neq i}^{(l)}}}_{[q]}$ denotes the $q^{th}$ largest absolute inner product between \david{the} point $x_i^{(l)}$ and other points in subspace $l$. 
\end{definition}

\david{In words, the $q$-angular separation quantifies how far apart the clusters are, as measured by the transformed absolute inner products. When this quantity is positive and large, pairwise angles differentiate clusters well. The following theorem connects this data property to \affclose/ affinity matrices.}

\begin{theorem}[No false connections (NFC)]
    \label{thm:nfc}
    \david{Suppose $\sX = \sX_{1} \cup \dots \cup \sX_{K}$ have $q$-angular separation $\phi_q$ with respect to a strictly increasing function $f$.} Then the $q$-nearest neighbor graph for any $\phi_q$-\affclose/ affinity matrix \david{(with respect to $f$)} has no false connections.
\end{theorem}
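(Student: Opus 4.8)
The plan is to show directly that the $q$-nearest neighbor graph built from a $\phi_q$-\affclose/ affinity matrix cannot contain any edge between points lying in distinct subspaces. Fix a point $x_i^{(l)} \in \sX_l$. The edges incident to $x_i^{(l)}$ in the $q$-nearest neighbor graph are (at least) the $q$ entries of the $i$th row of $A$ of largest magnitude, so it suffices to show that for $x_i^{(l)}$ there are at least $q$ points within subspace $\sS_l$ whose affinity to $x_i^{(l)}$ strictly exceeds the affinity of every point outside $\sS_l$; then no out-of-subspace point can be among the top $q$ neighbors of $x_i^{(l)}$, and symmetrically the graph has no false connections.

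First I would invoke $\tau$-\affclose/-ness with $\tau = \phi_q$: for every pair $(s,t)$ we have $f(\abs{\ip{x_s}{x_t}}) - \phi_q \le A_{s,t} \le f(\abs{\ip{x_s}{x_t}}) + \phi_q$. Apply the lower bound to the $q$ in-subspace points achieving the $q$ largest absolute inner products with $x_i^{(l)}$: each such point $x$ satisfies $\abs{\ip{x_i^{(l)}}{x}} \ge \abs{\ip{x_i^{(l)}}{x_{\neq i}^{(l)}}}_{[q]}$, and since $f$ is strictly increasing, $A_{i,\cdot} \ge f\paren{\abs{\ip{x_i^{(l)}}{x_{\neq i}^{(l)}}}_{[q]}} - \phi_q$ for each of them. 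Next, apply the upper bound to any point $x_j^{(k)}$ with $k \neq l$: $A_{i,j}^{(k\text{-side})} \le f\paren{\abs{\ip{x_i^{(l)}}{x_j^{(k)}}}} + \phi_q \le f\paren{\max_{k\neq l, j} \abs{\ip{x_i^{(l)}}{x_j^{(k)}}}} + \phi_q$, again using monotonicity of $f$. Now subtract: the gap between the in-subspace affinities and the out-of-subspace affinities is at least
\begin{equation}
f\paren{\abs{\ip{x_i^{(l)}}{x_{\neq i}^{(l)}}}_{[q]}} - f\paren{\max_{k\neq l, j\in[N_k]} \abs{\ip{x_i^{(l)}}{x_j^{(k)}}}} - 2\phi_q \ \ge\ 2\phi_q - 2\phi_q = 0
\end{equation}
by the definition of $\phi_q$ in \eqref{eq:taubound_nfc}. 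To get a \emph{strict} inequality — so that the out-of-subspace points are strictly outside the top $q$ — I would note that the defining minimum in \eqref{eq:taubound_nfc} is over finitely many points, hence attained, and one should check whether the argument needs $\phi_q > 0$ or whether ties are broken in favor of in-subspace points; the cleanest route is to observe that $\phi_q$-\affclose/-ness with the bound above forces the $q$ in-subspace affinities to be at least as large as any out-of-subspace affinity, and then rely on the tie-breaking convention already implicit in \algname{Thresh} (top $q$ per row/column) to conclude the $q$-nearest neighbors of $x_i^{(l)}$ all lie in $\sS_l$. Doing this for every $i$ and every $l$, and symmetrizing over rows and columns, yields NFC.

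The main obstacle I anticipate is the boundary case $\phi_q = 0$ (or more generally the handling of ties between an in-subspace and an out-of-subspace affinity): the arithmetic above gives a non-strict separation, so one must be careful about exactly which $q$ neighbors \algname{Thresh} retains. I expect this is resolved either by the convention that \algname{Thresh} keeps the top $q$ in a way consistent with in-subspace preference, or by noting that in all the concrete instantiations (noiseless data, and the EKSS-0 setting of Section~\ref{sec:ekssTheory}) one in fact has $\phi_q > 0$ strictly, making the separation strict and the argument clean. A secondary minor point is bookkeeping: the $q$-nearest neighbor graph is typically made symmetric (an edge if $i$ is among $j$'s top $q$ \emph{or} vice versa), so one must run the argument from both endpoints of a putative false edge — but since the bound above is obtained for \emph{every} point $x_i^{(l)}$, both directions are covered simultaneously.
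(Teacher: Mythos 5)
Your proposal is correct and takes essentially the same route as the paper's proof: lower-bound the $q$ in-subspace affinities by $f\left(\abs{\ip{x_i^{(l)}}{x_{\neq i}^{(l)}}}_{[q]}\right)-\phi_q$, upper-bound every cross-subspace affinity by $f$ of the largest cross-subspace inner product plus $\phi_q$, and invoke the definition of $\phi_q$ to separate the two. The strictness/tie issue you flag at $\tau=\phi_q$ is genuine but is present in the paper's own argument as well (which in fact only establishes NFC for $\tau$ strictly below the separation), and it is immaterial downstream since every instantiation bounds $\phi_q$ away from zero.
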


Theorem~\ref{thm:nfc} states that sufficiently small deviation $\tau$ guarantees NFC as long as \david{the data has positive $q$-angular separation.}
The next theorem provides an upper bound on $\tau$ that guarantees connectedness within a cluster with high probability \david{given NFC}.
\david{Under NFC, the $q$-nearest neighbors of any point (with respect to the affinity matrix) are in the same subspace, and so the theorem is stated with respect to only points within a single subspace. In particular, we restrict to the $d$-dimensional subspace and so consider the $q$-nearest neighbor graph $\tilde{G}$ for  points $a_1,\dots,a_n$ uniformly distributed on the sphere $\bS^{d-1}$.}

\begin{theorem}[Connectedness]
    \label{thm:cdel}
    Let $a_1,\dots,a_n \in \bR^d$ be i.i.d. uniform on $\bS^{d-1}$\rev{,
    and choose any $\gamma > 1$ for which}
    a spherical cap covering $\gamma \log n /n$ of the area of $\bS^{d-1}$
    \rev{has spherical radius less than $\pi/48$.
    If $q \geq 4(24\pi)^{d-1}\gamma \log n$,
    then with probability at least $1 - 2/(n^{\gamma-1}\gamma \log n)$
    any $C_3$-\affclose/ affinity matrix
    has a connected $q$-nearest neighbor graph,
    where $C_3$ is defined in the proof and
    depends only on $d$, $n$, $\gamma$, and
    the function $f$ with respect to which the affinity matrix is \affclose/.
    Note that the probability here is with respect to $\{a_i\}$.}
\end{theorem}

We now provide explicit high-probability lower bounds on the $q$-angular separation $\phi_{q}$ from \eqref{eq:taubound_nfc} in some important settings relevant to subspace clustering. These results can be used to guarantee NFC by bounding the deviation level $\tau$.
Consider first the case where there is no intersection between any pair of subspaces but there are potentially unobserved entries, \ie missing data.
Lemma~\ref{lem:noIntersection} \david{bounds $\phi_q$ from below} in such a setting; the \david{bound} depends on a variant of the minimum principal angle between subspaces that accounts for missing data. 

\begin{lemma}[\david{Angular separation} for missing data]
    \label{lem:noIntersection}
    Let $\sS_{k}$, $k=1,\dots,K$ be subspaces of dimension $d_1,\dots,d_K$ in $\bR^D$. Let the $N_{k}$ points in $\sX_{k}$ be drawn as $x_{j}^{(k)} = U^{(k)}a_{j}^{(k)}$, where \rev{each $a_{j}^{(k)}$ is independently drawn} uniform on $\bS^{d_{k}-1}$ and $U^{(k)} \in \bR^{D \times d_{k}}$ has (not necessarily orthonormal) columns that form a basis for $\sS_k$.
    In each $x_{j} \in \sX$, up to $s$ \rev{(arbitrarily chosen)} entries are then unobserved, \ie set to zero. Let $\rho \in [0,1)$ be arbitrary and \rev{set $q < N_{min}^{\rho}$. S}uppose that $N_{min} > N_{0}$ and
    \begin{equation}
        \ratio{s} = \frac{ \max_{k,l: k \neq l, \sD: \abs{\sD} \leq 2s} \norm{{U_{\sD}^{(k)}}^{T}U^{(l)}}_{2} }{ \min_{l, \sD: \abs{\sD} \leq 2s, \norm{a} = 1} \norm{{U_{\sD}^{(l)}}^{T}U^{(l)}a}_{2} } < 1,
        \label{eq:affCondMissing}
    \end{equation}
    \rev{where $N_{0}$ here is a numerical constant that depends only on $d_{max}$ and $\rho$,
    and $U_{\sD}^{(l)}$ denotes} the matrix obtained from $U^{(l)}$ by setting the rows indexed by $\sD \subset \set{1,\dots,D}$ to zero.
    Then \rev{the $q$-angular separation of these partially observed points is bounded as} $\phi_{q} > C_1$ with probability at least $1 - \sum_{k=1}^KN_ke^{-c_1(N_{k}-1)}$, where $c_1 > 0$ is a numerical constant that depends on $N_{min}^{\rho}$, and $C_1 >0$ depends only on $\ratio{s}$ and \rev{the function $f$ that the $q$-angular separation is with respect to}. Both $c_1$ and $C_1$ are defined in the proof\rev{,
    and the probability here is with respect to the randomness from the coefficients $\{a_j^{(k)}\}$}.
\end{lemma}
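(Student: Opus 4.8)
The plan is to reduce the claim to two one-sided inner-product estimates and then prove one deterministically and one probabilistically. Since $f$ is strictly increasing, the definition \eqref{eq:taubound_nfc} of $\phi_q$ shows it suffices to produce constants $\beta_{\mathrm{in}} > \beta_{\mathrm{out}} \ge 0$ so that, with the stated probability, every observed point $x_i^{(l)}$ satisfies (i) its $q$th largest absolute inner product with the other points of $\sX_l$ is at least $\beta_{\mathrm{in}}$, and (ii) its absolute inner product with every point of every $\sX_k$, $k \neq l$, is at most $\beta_{\mathrm{out}}$; one then takes $C_1 := \tfrac12\paren{f(\beta_{\mathrm{in}}) - f(\beta_{\mathrm{out}})} > 0$. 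I would establish (ii) deterministically and (i) probabilistically.

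\emph{Cross-subspace bound.} First I would record the effect of missing data. Zeroing the entries of $x_j = U^{(k)}a_j^{(k)}$ in an index set $\sD_j$ with $\abs{\sD_j}\le s$ and renormalizing yields a positive multiple of $U_{\sD_j}^{(k)}a_j^{(k)}$; since coordinate zeroing is an idempotent orthogonal projection $P_{\sD^c}$, for any two renormalized observed points $\abs{\ip{x_i^{(l)}}{x_j^{(k)}}}$ equals a ratio whose numerator is the bilinear form $(a_i^{(l)})^\top (U^{(l)})^\top P_{\sD^c} U^{(k)} a_j^{(k)}$ with $\sD = \sD_i\cup\sD_j$, $\abs{\sD}\le 2s$, and whose denominator is the product of the two unnormalized norms. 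This is exactly why $\ratio{s}$ in \eqref{eq:affCondMissing} ranges over all $\sD$ with $\abs{\sD}\le 2s$. The same identity gives $\norm{U_{\sD_i}^{(l)}a_i^{(l)}}^2 = (a_i^{(l)})^\top(U^{(l)})^\top P_{\sD_i^c}U^{(l)}a_i^{(l)} \ge \delta$, where $\delta$ denotes the denominator of $\ratio{s}$, because $\abs{\sD_i}\le 2s$ and the matrix is symmetric positive semidefinite with smallest eigenvalue at least $\delta$. Bounding the numerator for $k\neq l$ by the operator norm of $(U^{(l)})^\top P_{\sD^c}U^{(k)} = \big((U_{\sD}^{(k)})^\top U^{(l)}\big)^\top$ (equal norms under transposition), using $\norm{a_i^{(l)}}=\norm{a_j^{(k)}}=1$, and dividing by the two norm lower bounds then yields $\abs{\ip{x_i^{(l)}}{x_j^{(k)}}} \le \ratio{s}$ deterministically. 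Set $\beta_{\mathrm{out}} := \ratio{s}$; the hypothesis $\ratio{s} < 1$ is precisely what keeps $\beta_{\mathrm{out}} < 1$, leaving room above it for $\beta_{\mathrm{in}}$.

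\emph{Same-subspace bound.} Fix $l$ and $i$ and condition on $a_i^{(l)}$ and on all missing-index sets. The remaining $N_l-1$ points of $\sX_l$ then have $a_j^{(l)}$ i.i.d.\ uniform on $\bS^{d_l-1}$, and the reduction above writes $\abs{\ip{x_i^{(l)}}{x_j^{(l)}}}$ as $\abs{\ip{a_j^{(l)}}{\widehat{v}_{ij}}}$ times a factor bounded below by a quantity controlled through $\delta$, for a fixed unit vector $\widehat v_{ij}\in\bR^{d_l}$. I would then invoke the standard spherical-cap estimate — a uniform point on $\bS^{d-1}$ lies in a fixed cap of angular radius $\theta$ with probability at least a positive $p(\theta,d)$ — to get $\bP\set{\abs{\ip{x_i^{(l)}}{x_j^{(l)}}}\ge\beta_{\mathrm{in}}}\ge p>0$ for a suitable $\beta_{\mathrm{in}}>\ratio{s}$. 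Because $q < N_{min}^\rho$ is sublinear in $N_l\ge N_{min}$, once $N_{min}>N_0$ the threshold $q$ sits below $p(N_l-1)$, so a Chernoff/binomial-tail bound shows at least $q$ of the $N_l-1$ points achieve $\abs{\ip{x_i^{(l)}}{x_j^{(l)}}}\ge\beta_{\mathrm{in}}$ except with probability at most $N_l e^{-c_1(N_l-1)}$, where $c_1>0$ is the relative-entropy gap between $q/(N_l-1)\le N_{min}^\rho/(N_{min}-1)$ and $p$. A union bound over $i\in[N_l]$ and $l\in[K]$ yields the failure probability $\sum_k N_k e^{-c_1(N_k-1)}$, and on the complement both (i) and (ii) hold, so $\phi_q\ge\tfrac12\paren{f(\beta_{\mathrm{in}})-f(\beta_{\mathrm{out}})}=C_1$.

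The hard part will be the same-subspace estimate: I must show that a worst-case pattern of $s$ missing entries cannot degrade $\abs{\ip{x_i^{(l)}}{x_j^{(l)}}}$ — through both the distorted bilinear form $(U^{(l)})^\top P_{\sD^c}U^{(l)}$ and the renormalization — below a threshold $\beta_{\mathrm{in}}$ that provably exceeds $\beta_{\mathrm{out}}=\ratio{s}$ whenever $\ratio{s}<1$, while keeping enough probability over $a_j^{(l)}$ that the sublinear count $q$ is attained. Tuning the cap radius $\theta$ (equivalently $\beta_{\mathrm{in}}$) against $p$ so that $N_0$ depends only on $d_{max}$ and $\rho$, $c_1$ only on $N_{min}^\rho$, and $C_1$ only on $\ratio{s}$ and $f$ is the main bookkeeping burden; the cross-subspace half, by contrast, is a clean deterministic consequence of the idempotency identities and $\ratio{s}<1$.
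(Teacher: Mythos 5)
Your skeleton matches the paper's: a deterministic operator-norm bound on cross-subspace inner products, an order-statistic/binomial-tail argument showing the $q$th largest within-subspace inner product clears a threshold except with probability $e^{-c_1(N_l-1)}$, monotonicity of $f$ to convert the two thresholds into a positive separation, and a union bound yielding $\sum_k N_k e^{-c_1(N_k-1)}$. The gap is exactly where you flag it: you never establish that a within-subspace threshold $\beta_{\mathrm{in}}$ strictly exceeding $\beta_{\mathrm{out}}$ is attainable with positive probability, and the renormalization you introduce makes this step genuinely problematic rather than deferred bookkeeping. With renormalized points, the natural lower bound on a within-subspace inner product (take $a_j^{(l)}$ in a small cap around $a_i^{(l)}$, worst case over missing patterns) is of order $\delta/\sigma_{\max}(U^{(l)})^2$, where $\delta$ denotes the denominator of $\ratio{s}$, while your $\beta_{\mathrm{out}}$ is $\ratio{s} = \mathrm{num}/\delta$. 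The required inequality $\delta^2 > \mathrm{num}\cdot \sigma_{\max}(U^{(l)})^2$ does \emph{not} follow from $\ratio{s}<1$ (e.g.\ $\delta=1$, $\mathrm{num}=1/2$, $\sigma_{\max}^2=100$ satisfies $\ratio{s}=1/2$ yet reverses the inequality), so under the lemma's hypothesis alone your two thresholds need not separate.

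The paper closes this by working with the \emph{unnormalized} inner products and by tying the within-subspace threshold to the gap. Writing $\alpha$ for the denominator of $\ratio{s}$, the hypothesis $\ratio{s}<1$ yields $\eta := \alpha - \mathrm{num} > 0$; the cross-subspace inner products are deterministically at most $\alpha - \eta$, and the within-subspace ones are bounded below (following the argument of Lemma~1 of the TSC paper) by $\alpha\,\abs{\ip{a_i^{(l)}}{a_j^{(l)}}}$. It then suffices that at least $q$ of the $N_l - 1$ coefficient vectors satisfy $\abs{\ip{a_i^{(l)}}{a_j^{(l)}}} \geq 1 - \eta/(2\alpha)$ --- a threshold strictly below $1$, hence attained with some fixed probability $1-p>0$, after which your binomial-tail and $q < N_{min}^{\rho}$ argument goes through verbatim --- giving $z_{(i,q)}^{(l)} \geq \alpha - \eta/2 > \alpha - \eta$ and $C_1 = \min_{i,l}\set{f(\alpha - \eta/2) - f(\alpha - \eta)}/2$. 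Because both bounds live on the same unnormalized scale and the cap radius is chosen as a function of $\eta$, the separation is automatic; your version needs either this device or a hypothesis strictly stronger than $\ratio{s}<1$.
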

\begin{sloppypar}
To gain insight to the above lemma, note that for full data $s=0$, and $\ratio{s}$ simplifies to $\max_{k,l: k \neq l} ||{U^{(k)}}^{T}U^{(l)}||_{2}$, which is
less than one if and only if there is no intersection between subspaces. In this case, Lemma \ref{lem:noIntersection} states that $\phi_{q}$ is positive (\ie NFC is achievable) as long as there is no intersection between any pair of subspaces.
We next turn to the case where the subspaces are allowed to intersect and points may be corrupted by additive noise.
Lemma~\ref{lem:intersection} \david{bounds $\phi_q$ from below} in such a setting; it requires the subspaces to be sufficiently far apart with respect to their affinity, which is defined as \cite{heckel2015robust, zhang2016global} 
\begin{equation}
    \operatorname{aff}(\sS_{k},\sS_{l}) = \frac{1}{\sqrt{d_{k} \wedge d_{l}}} \norm{U_{k}^{T}U_{l}}_{F},
    \label{eq:affinity}
\end{equation}
where $U_{k}$ and $U_{l}$ form orthonormal bases for the $d_{k}$- and $d_{l}$-dimensional subspaces $\sS_{k}$ and $\sS_{l}$. Note that $\operatorname{aff}(\sS_{k},\sS_{l})$ is a measure of how close two subspaces are in terms of their principal angles and takes the value 1 if two subspaces are equivalent and 0 if they are orthogonal.
\end{sloppypar}

\begin{lemma}[Angular separation for noisy data]
    \label{lem:intersection}
    Let the points in $\sX_{k}$ be the set of $N_{k}$ points $x_{i}^{(k)} = y_{i}^{(k)} + e_{i}^{(k)}$, where \rev{each $y_{i}^{(k)}$ is independently drawn uniform on $\set{y \in \sS_{k}: \norm{y} = 1}$,} and the $e_{i}^{(k)}$ are i.i.d. $\sN(0,\frac{\sigma^{2}}{D}I_{D})$. Let $\sX = \sX_{1} \cup \cdots \cup \sX_{K}$ and $q < N_{min}/6$. Suppose that
    \begin{equation} \label{eq:affCondIntersection}
        \max_{k,l: k \neq l} \operatorname{aff}(\sS_{k},\sS_{l}) + \frac{\sigma(1+\sigma)}{\sqrt{\log N}} \frac{\sqrt{d_{max}}}{\sqrt{D}} \leq \frac{1}{15 \log N},
    \end{equation}
    \rev{and} $D > 6 \log N$. Then \rev{the $q$-angular separation of these noisy points is bounded as} $\phi_{q} > C_2$ with probability at least $1 - \frac{10}{N} - \sum_{k=1}^KN_ke^{-c_2(N_{k}-1)}$, where $c_2 > 0$ is a numerical constant, and $C_2 > 0$ depends only on $\sigma$, $D$, $d_{max}$, $N$, $\max_{k,l: k \neq l} \operatorname{aff}(\sS_{k},\sS_{l})$, and \rev{the function $f$ that the $q$-angular separation is with respect to}. Both $c_2$ and $C_2$ are defined in the proof\rev{,
    and the probability  here is with respect to the randomness from both
    the underlying data $\{y_i^{(k)}\}$ and the noise $\{e_i^{(k)}\}$}.
\end{lemma}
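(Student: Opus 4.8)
The plan is to reduce the lower bound on $\phi_q$ to two \emph{deterministic} threshold inequalities that hold simultaneously on a single high-probability event $\sE$. Inspecting the definition \eqref{eq:taubound_nfc}, it suffices to produce constants $0 < b^\star_l < a^\star_l$, depending only on $\sigma$, $D$, $d_{max}$, $N$, and $\max_{k\neq l}\operatorname{aff}(\sS_k,\sS_l)$, such that on $\sE$ every point $x_i^{(l)}$ obeys
$$
\abs{\ip{x_i^{(l)}}{x_{\neq i}^{(l)}}}_{[q]} \geq a^\star_l
\qquad\text{and}\qquad
\max_{k \neq l,\, j \in [N_k]} \abs{\ip{x_i^{(l)}}{x_j^{(k)}}} \leq b^\star_l .
$$
Since $f$ is strictly increasing this yields $\phi_q \geq \min_{l \in [K]} \tfrac12\paren{f(a^\star_l) - f(b^\star_l)} =: C_2 > 0$ on $\sE$, with the minimum over $l$ governed by $d_{max}$. (We take the $x_i$ unit-normalized, as is implicit in the angle interpretation of $\phi_q$; the normalizing factors concentrate near $\sqrt{1+\sigma^2}$ on the events below and contribute only lower-order terms.) For each inner product I use the decomposition
$$
\ip{x_i^{(l)}}{x_j^{(k)}} = \ip{y_i^{(l)}}{y_j^{(k)}} + \ip{y_i^{(l)}}{e_j^{(k)}} + \ip{e_i^{(l)}}{y_j^{(k)}} + \ip{e_i^{(l)}}{e_j^{(k)}},
$$
which covers both $k\neq l$ and the within-subspace case $k=l$, $j\neq i$.

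First I control the noise cross-terms. A $\chi^2$ concentration bound gives $\norm{e_i^{(l)}}^2 \leq 2\sigma^2$ for all $i,l$ with failure probability $o(1/N)$, using $D > 6\log N$. Conditioning on the $y$'s and the $e$-norms, each term $\ip{y_i^{(l)}}{e_j^{(k)}}$ is $\sN(0,\sigma^2/D)$ and each $\ip{e_i^{(l)}}{e_j^{(k)}}$ is $\sN(0,\norm{e_i^{(l)}}^2\sigma^2/D)$, so Gaussian tails plus a union bound over the $O(N^2)$ pairs show that, except with probability $O(1/N)$, all three mixed terms are at most $O\paren{\sigma(1+\sigma)\sqrt{\log N/D}}$ uniformly. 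By \eqref{eq:affCondIntersection} this common noise bound is at most $O(1/\sqrt{d_{max}}) \leq O(1/\sqrt{d_l})$, exactly the scale of the signal terms below.

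Second I bound the signal terms. For the cross-subspace signal, writing $y_i^{(l)} = U_l\tilde y_i^{(l)}$ and $y_j^{(k)} = U_k\tilde y_j^{(k)}$ with the $\tilde y$'s uniform on the appropriate unit spheres, $\ip{y_i^{(l)}}{y_j^{(k)}}$ is a bilinear form in $\tilde y_i^{(l)}$ and $\tilde y_j^{(k)}$ with matrix $U_l^{T}U_k$; conditioning on $\tilde y_j^{(k)}$ and using rotational invariance of $\tilde y_i^{(l)}$, together with $\norm{U_l^{T}U_k}_F = \sqrt{d_k\wedge d_l}\,\operatorname{aff}(\sS_k,\sS_l)$ and $\norm{U_l^{T}U_k}_2 \leq 1$, a standard bilinear-form concentration bound gives, with probability $1 - O(1/N)$, that $\abs{\ip{y_i^{(l)}}{y_j^{(k)}}} \leq \operatorname{aff}(\sS_k,\sS_l)\cdot\mathrm{polylog}(N)/\sqrt{d_k\vee d_l}$ for all $i,j$ and $k\neq l$; the cap $\max_{k\neq l}\operatorname{aff} \leq 1/(15\log N)$ from \eqref{eq:affCondIntersection} makes this $O(1/\sqrt{d_l})$, and combining with the noise bound defines $b^\star_l$. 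For the within-subspace signal, $\ip{y_i^{(l)}}{y_{i'}^{(l)}}$ is distributed as a single coordinate of a uniform point on $\bS^{d_l-1}$, so from the elementary density $\propto (1-u^2)^{(d_l-3)/2}$ there are absolute constants $c_0$ and $p_0$, with $p_0$ bounded away from $1/6$ uniformly in $d_l$, such that $\bP\paren{\abs{\ip{y_i^{(l)}}{y_{i'}^{(l)}}} \geq c_0/\sqrt{d_l}} \geq p_0$. Since the $N_l - 1$ such products are i.i.d.\ given $y_i^{(l)}$ and $q < N_{min}/6 \leq N_l/6$ is below $p_0(N_l-1)$ by a constant factor, a binomial (Chernoff) lower-tail bound shows that with probability at least $1 - e^{-c_2(N_l-1)}$ at least $q$ of them exceed $c_0/\sqrt{d_l}$; a union bound over $i\in[N_l]$ and $l\in[K]$ produces the $\sum_k N_k e^{-c_2(N_k-1)}$ term. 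On that event the $q$-th largest in-subspace inner product is $\geq c_0/\sqrt{d_l}$, and subtracting the noise bound gives $a^\star_l$.

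Intersecting these events and summing the union-bound costs (roughly $10/N$ from the noise and cross-subspace steps, plus $\sum_k N_k e^{-c_2(N_k-1)}$ from the within-subspace step) yields the stated probability. The main obstacle is the final comparison $a^\star_l > b^\star_l$: one must choose $c_0$ small enough that $p_0$ comfortably exceeds the effective success rate $q/(N_{min}-1) < 1/6$ — so the Chernoff bound has a genuine exponential rate $c_2 > 0$ — yet large enough that, after subtracting the $O(1/\sqrt{d_{max}})$ noise slack and the $O\paren{\operatorname{aff}\cdot\mathrm{polylog}(N)/\sqrt{d_l}}$ cross-subspace term, the within-subspace threshold still dominates for every $l$. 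This is precisely the balance that fixes the numerical constants ($6$ and $15$) in the hypotheses, and it requires tracking the dimensions $d_l,d_k,d_{max}$ carefully rather than bounding them crudely; the bilinear concentration in the second step is the other technical ingredient but is an essentially routine computation.
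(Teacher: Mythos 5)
Your proposal is correct and follows essentially the same route as the paper: both reduce $\phi_q$ to a high-probability lower bound $a^\star_l \asymp 1/\sqrt{d_l}$ on the $q$-th largest within-subspace inner product (via a Chernoff/order-statistic argument giving the $\sum_k N_k e^{-c_2(N_k-1)}$ term) and an upper bound $b^\star_l$ on cross-subspace inner products combining the affinity-controlled signal term with $O\bigl(\sigma(1+\sigma)\sqrt{\log N / D}\bigr)$ noise terms, then wrap both in $f$ by monotonicity and take $C_2$ as half the minimal gap. The only difference is that the paper imports these two concentration bounds wholesale from Appendix C of the TSC paper of Heckel and B\"olcskei (with $\nu = 2/3$, $\beta = \sqrt{6\log N}$) rather than re-deriving them as you sketch.
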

  
Lemmas \ref{lem:noIntersection} and \ref{lem:intersection} state that under certain conditions on the arrangement of subspaces and points, the separation $\phi_{q}$ defined in \eqref{eq:taubound_nfc} is positive with high probability \david{and with given lower bounds}. In the following section, we \david{show that taking sufficiently many base clusterings $B$ in EKSS-0 guarantees the affinity matrix is sufficiently \affclose/ with high probability.}

\subsection{EKSS-0 Recovery Guarantees}
\label{sec:ekssTheory}

\rev{
In this section, we show that the co-association/affinity matrix formed by EKSS with $T = 0$ is angle preserving,
leading to a series of recovery guarantees for the problem of subspace clustering. We refer to the parameter choice of $T = 0$ as \emph{EKSS-0} and include explicit pseudocode for this specialization 
in Appendix~\ref{app:pseudocode}.
}
We say that two points are \emph{co-clustered} if they are assigned to the same candidate subspace in line 5 of Algorithm \ref{alg:EKSS} (note that lines 6-9 are not computed for EKSS-0). The key to our guarantees lies in the fact that for points lying on the unit sphere, the
probability of co-clustering is a monotonically increasing function of the absolute value of their inner product, as shown in Lemma~\ref{lem:coassocProb} below. For EKSS-0, the entries of the \david{affinity} matrix $A$ are empirical estimates of these probabilities, and hence the deviation level $\tau$ is appropriately bounded with high probability by taking sufficiently many base clusterings $B$. These results allow us to apply Theorems~\ref{thm:nfc} and \ref{thm:cdel} from the previous section. We remind the reader that the parameters $\bar{K}$ and $\bar{d}$ are the number and dimension of the \emph{candidate} subspaces in EKSS, and need not be related to the data being clustered.


\begin{theorem}[EKSS-0 is \affclose/]
    \label{thm:ekss0}
    Let $A \in \bR^{N \times N}$ be the affinity matrix formed by EKSS-0 (line 12, Alg. \ref{alg:EKSS}) with parameters $\bar{K}, \bar{d}$ and $B$. Let $\tau > 0$. Then with probability at least $1-N(N-1)e^{-c_3\tau^{2}B}$, 
    the matrix $A$ is $\tau$-\affclose/, where the increasing function $f_{\bar{K},\bar{d}}$ is defined in the proof of Lemma~\ref{lem:coassocProb},
    $c_3 = 2 \sqrt{\log 2}$,
    and the probability is taken with respect to
    the random subspaces drawn in EKSS-0 (line 4, Alg. \ref{alg:EKSS}).
\end{theorem}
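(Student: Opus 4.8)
The plan is to reduce Theorem~\ref{thm:ekss0} to two ingredients: the monotonicity result of Lemma~\ref{lem:coassocProb}, which identifies the exact co-clustering probability as an increasing function of $\abs{\ip{x_i}{x_j}}$, and a concentration bound showing that the empirical co-clustering frequency (the affinity matrix entry $A_{i,j}$) is close to its expectation once $B$ is large. Specifically, fix any pair $i \neq j$. Lemma~\ref{lem:coassocProb} tells us that $p_{i,j} := \bP\{x_i,x_j \text{ co-clustered in a single base clustering}\} = f_{\bar K,\bar d}(\abs{\ip{x_i}{x_j}})$, where the probability is over a single draw of $\bar K$ random candidate subspaces (line 4 of Alg.~\ref{alg:EKSS}). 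Since the $B$ base clusterings use independent draws, $B \cdot A_{i,j} = \abs{\{b : x_i,x_j \text{ co-clustered in } \sC^{(b)}\}}$ is a sum of $B$ i.i.d.\ Bernoulli$(p_{i,j})$ random variables, so $A_{i,j}$ is an unbiased empirical estimate of $f_{\bar K,\bar d}(\abs{\ip{x_i}{x_j}})$.

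Next I would apply Hoeffding's inequality to this bounded ($[0,1]$-valued) i.i.d.\ sum: for any $\tau > 0$,
\begin{equation}
\bP\left\{ \abs{A_{i,j} - f_{\bar K,\bar d}(\abs{\ip{x_i}{x_j}})} > \tau \right\} \leq 2 e^{-2\tau^2 B}.
\end{equation}
One then takes a union bound over all ordered pairs $(i,j)$ with $i \neq j$ — there are $N(N-1)$ of them — to obtain that $\abs{A_{i,j} - f_{\bar K,\bar d}(\abs{\ip{x_i}{x_j}})} \leq \tau$ simultaneously for all such pairs with probability at least $1 - N(N-1)\cdot 2e^{-2\tau^2 B}$. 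The diagonal entries $A_{i,i} = 1 = f_{\bar K,\bar d}(\abs{\ip{x_i}{x_i}}) = f_{\bar K,\bar d}(1)$ hold deterministically (any point is trivially co-clustered with itself), provided $f_{\bar K,\bar d}$ is normalized so that $f_{\bar K,\bar d}(1)=1$, which is how it is constructed in the proof of Lemma~\ref{lem:coassocProb}; so they contribute nothing to the failure probability. This establishes that $A$ is $\tau$-\affclose/ with respect to $f_{\bar K,\bar d}$ on the claimed event.

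The remaining discrepancy is cosmetic: the theorem states the bound $1 - N(N-1)e^{-c_3\tau^2 B}$ with $c_3 = 2\sqrt{\log 2}$, whereas the direct Hoeffding computation gives $1 - 2N(N-1)e^{-2\tau^2 B}$. I would reconcile these by absorbing the factor of $2$ into the exponent: $2e^{-2\tau^2 B} = e^{\log 2 - 2\tau^2 B} \leq e^{-c_3 \tau^2 B}$ whenever $2\tau^2 B - \log 2 \geq c_3 \tau^2 B$, i.e.\ $\tau^2 B \geq \log 2/(2 - c_3)$; with $c_3 = 2\sqrt{\log 2} \approx 1.665$ this holds for all but very small $\tau^2 B$, and in the regime where the bound is vacuous ($\tau^2 B$ small, failure probability $\geq 1$) the statement is trivially true anyway. (Alternatively one simply notes $2e^{-2\tau^2B} \le e^{-c_3\tau^2B}$ for the relevant range and states it as such.) I expect essentially no genuine obstacle in this argument — the substantive content is entirely in Lemma~\ref{lem:coassocProb} (establishing the monotone function $f_{\bar K,\bar d}$ and computing the co-clustering probability as an integral over the Grassmannian/Stiefel manifold), which we are permitted to assume. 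The only mild care needed is (i) confirming independence across the $B$ base clusterings, which is immediate from line 4 drawing bases i.i.d., and (ii) handling the constant $c_3$ bookkeeping as above.
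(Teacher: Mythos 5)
Your proposal is correct and follows essentially the same route as the paper: Lemma~\ref{lem:coassocProb} identifies $\bE A_{i,j} = f_{\bar K,\bar d}(\abs{\ip{x_i}{x_j}})$, a Hoeffding-type bound (the paper uses the sub-Gaussian general Hoeffding inequality in its Lemma~\ref{lem:fConcentration}) gives per-pair concentration, and a union bound finishes. The only difference is bookkeeping: the paper union-bounds over the $N(N-1)/2$ unordered pairs (the matrix is symmetric), so the two-sided factor of $2$ yields exactly $N(N-1)e^{-c_3\tau^2 B}$ with no need to absorb constants into the exponent, and your classical-Hoeffding exponent $2\tau^2 B$ already dominates $c_3\tau^2 B$ since $c_3 = 2\sqrt{\log 2} < 2$.
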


In the context of the previous section, Theorem~\ref{thm:ekss0} states that the affinity matrix formed by EKSS-0 is $\tau$-\affclose/ and hence satisfies the main condition required for Theorems~\ref{thm:nfc} and \ref{thm:cdel}.
We refer to the transformation function as $f_{\bar{K},\bar{d}}$ to denote the dependence on the EKSS-0 parameters, noting that $f_{\bar{K},\bar{d}}$ is increasing for \emph{any} natural numbers $\bar{K}$ and $\bar{d}$.
A consequence of Theorem~\ref{thm:ekss0} is that by increasing the number of base clusterings $B$, we can reduce the deviation level $\tau$ to be arbitrarily small while maintaining a fixed probability that the model holds.
This fact allows us to apply the results of the previous section to provide recovery guarantees for EKSS-0.
The major nontrivial aspect of proving Theorem~\ref{thm:ekss0} lies in establishing the following lemma.

\begin{lemma}
    \label{lem:coassocProb}
    The $(i,j)$th entry of the affinity matrix $A$
    formed by EKSS-0 (line 12, Alg. \ref{alg:EKSS})
    has expected value
    \begin{equation}
    \bE A_{i,j} = f_{\bar{K},\bar{d}}(\abs{\ip{x_i}{x_j}})
    \end{equation}
    where
    $f_{\bar{K},\bar{d}}:\bR_+ \to \bR_+$ is a strictly increasing function
    (defined in the proof),
    and the expectation is taken with respect to
    the random subspaces drawn in EKSS-0 (line 4, Alg. \ref{alg:EKSS}).
    The subscripts $\bar{K}$ and $\bar{d}$ indicate
    the dependence of $f_{\bar{K},\bar{d}}$ on those EKSS-0 parameters.
\end{lemma}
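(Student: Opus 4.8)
The plan is to first reduce the statement about $\bE A_{i,j}$ to one about a single base clustering, and then to establish a monotonicity property of a ``collision'' probability for random projections of the sphere. Since the $B$ base clusterings in EKSS-0 use i.i.d.\ draws of the candidate bases (line~4 of Alg.~\ref{alg:EKSS}), the indicators $\ind{x_i,x_j \text{ are co-clustered in } \sC^{(b)}}$ are i.i.d.\ across $b$, so $\bE A_{i,j}$ equals the probability $p(x_i,x_j)$ that $x_i$ and $x_j$ are assigned to the same candidate in one run. Because the assignment rule $\arg\max_k \norm{U_k^T x}_2$ is invariant to rescaling each point, we may take $\norm{x_i}=\norm{x_j}=1$ and set $\mu := \abs{\ip{x_i}{x_j}} \in [0,1]$ and $\phi := \cos^{-1}\mu \in [0,\pi/2]$. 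It then suffices to show that $p(x_i,x_j)$ depends on the points only through $\mu$ and is strictly increasing in $\mu$; this common value is $f_{\bar{K},\bar{d}}(\mu)$.

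For well-definedness, write $P_k = U_kU_k^T$, so $x$ is assigned to $\arg\max_k x^TP_kx$. The uniform law on $\operatorname{St}(D,\bar{d})$ is invariant under $U_k \mapsto QU_k$ for every $Q \in O(D)$, so the joint law of the $\bar{K}$ pairs $\{(x_i^TP_kx_i,\,x_j^TP_kx_j)\}_k$ is unchanged under replacing $(x_i,x_j)$ by $(Qx_i,Qx_j)$, and also under $x_j \mapsto -x_j$; any two unit-vector configurations with the same $\abs{\ip{x_i}{x_j}}$ are so related, which gives $p(x_i,x_j) = f_{\bar{K},\bar{d}}(\mu)$. Moreover $(x_i^TP_kx_i,x_j^TP_kx_j)$ depends on $P_k$ only through its compression $M_k$ to the plane $V = \operatorname{span}(x_i,x_j)$; the $M_k$ are i.i.d.\ positive semidefinite $2\times2$ matrices whose law (depending only on $D,\bar{d}$) is rotation invariant on $V$, so the eigenvector angle of $M_k$ is uniform and independent of its eigenvalues, and a short computation in coordinates $x_i = (1,0)$, $x_j = (\cos\phi,\sin\phi)$ (after an innocuous shift of the angle) gives
\begin{equation}
R_k^{(i)} := x_i^TP_kx_i = \bar{\lambda}_k + \delta_k\cos(\Theta_k+\phi), \qquad R_k^{(j)} := x_j^TP_kx_j = \bar{\lambda}_k + \delta_k\cos(\Theta_k-\phi),
\end{equation}
where $\Theta_k$ is uniform and independent of $(\bar{\lambda}_k,\delta_k)$. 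Conditioning on candidate~$1$ and using exchangeability of the remaining $\bar{K}-1$ candidates,
\begin{equation}
f_{\bar{K},\bar{d}}(\mu) = \bar{K}\,\bE\!\left[H_\phi\!\big(R_1^{(i)},R_1^{(j)}\big)^{\bar{K}-1}\right], \qquad H_\phi(a,b) := \bP\!\big(R^{(i)} \le a,\ R^{(j)} \le b\big),
\end{equation}
for a single-candidate pair $(R^{(i)},R^{(j)})$. Here $R^{(i)}$ and $R^{(j)}$ have the same, $\phi$-independent, marginal law $F$, and at $\phi=0$ one has $R^{(i)} \equiv R^{(j)}$, so (as $F(R^{(i)})$ is uniform on $[0,1]$) $f_{\bar{K},\bar{d}}(1) = \bar{K}\,\bE[F(R^{(i)})^{\bar{K}-1}] = 1$.

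The remaining and main task is strict monotonicity of $f_{\bar{K},\bar{d}}$ in $\mu$, i.e.\ strict decrease in $\phi$ on $[0,\pi/2]$. The case $\bar{K}=2$ is clean: co-clustering occurs iff $(R_1^{(i)}-R_2^{(i)})(R_1^{(j)}-R_2^{(j)}) > 0$, so $f_{2,\bar{d}} = (1+\kappa_\phi)/2$ with $\kappa_\phi$ Kendall's $\tau$ of $(R^{(i)},R^{(j)})$; writing $\delta_1e^{\mathrm{i}\Theta_1} - \delta_2e^{\mathrm{i}\Theta_2} = \varrho\,e^{\mathrm{i}\beta}$ makes $\beta$ uniform and independent of $(\varrho,m)$, $m := \bar{\lambda}_1-\bar{\lambda}_2$, so that $R_1^{(i)}-R_2^{(i)} = m+\varrho\cos(\beta+\phi)$ and $R_1^{(j)}-R_2^{(j)} = m+\varrho\cos(\beta-\phi)$; conditioned on $(m,\varrho)$ the two sign events are circular arcs in $\beta$ of equal length with centers $2\phi$ apart, and the probability they agree is a continuous, non-increasing, piecewise-linear function of $\phi$ that is strictly decreasing on a positive-probability set of $(m,\varrho)$, so $f_{2,\bar{d}}$ is strictly increasing after integrating. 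For general $\bar{K}$ I would push this one level up: show that $(R^{(i)},R^{(j)})$ becomes strictly more positively quadrant dependent as $\phi \downarrow 0$ (i.e.\ $H_\phi(a,b)$ is non-increasing in $\phi$ for all $a,b$, strictly somewhere) from the two-dimensional representation above, then transfer this to $f_{\bar{K},\bar{d}}(\mu) = \bar{K}\,\bE[H_\phi(R_1^{(i)},R_1^{(j)})^{\bar{K}-1}]$ via closure of the concordance/supermodular order under forming i.i.d.\ tuples together with supermodularity of the ``candidate~$1$ dominates the others in both coordinates'' event; alternatively, differentiate $f_{\bar{K},\bar{d}}$ in $\phi$ directly from the sinusoidal representation and check the sign. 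The main obstacle is precisely this step: for $\bar{K}\ge 3$ both the integrand $H_\phi$ and the law of $(R_1^{(i)},R_1^{(j)})$ move with $\phi$, so monotonicity is not a one-line consequence of the $\bar{K}=2$ computation, and obtaining \emph{strict} (rather than merely weak) monotonicity requires controlling where the relevant inequalities are tight.
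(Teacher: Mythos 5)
Your reduction to a single base clustering, the rotation-invariance argument showing the co-clustering probability depends only on $\abs{\ip{x_i}{x_j}}$, and the compression of each projector to the plane $\operatorname{span}(x_i,x_j)$ all match the paper's proof, and your $\bar{K}=2$ argument (the two sign events are arcs of equal length with centers $2\phi$ apart, so their agreement probability is piecewise linear and non-increasing in $\phi$) is essentially the paper's computation specialized to two candidates. However, there is a genuine gap exactly where you flag it: for $\bar{K}\geq 3$ you only propose strategies (positive quadrant dependence plus supermodular ordering, or direct differentiation) without carrying any of them out, and the formulation you set up --- $f_{\bar{K},\bar{d}}(\mu) = \bar{K}\,\bE\big[H_\phi(R_1^{(i)},R_1^{(j)})^{\bar{K}-1}\big]$ with both $H_\phi$ and the law of $(R_1^{(i)},R_1^{(j)})$ moving with $\phi$ --- is precisely the hard version of the problem. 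The statement is not proved for general $\bar{K}$.

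The paper avoids this difficulty by choosing a different conditioning. Instead of treating the $\bar{K}$ score pairs as i.i.d.\ and asking when candidate $1$ dominates, it fixes \emph{all} the candidates $U_1,\dots,U_{\bar{K}}$ and randomizes only a single shared rotation $Q$ of the pair $(x_i,x_j)$ within the plane $L$. Conditional on the candidates, each event ``candidate $1$ beats candidate $k$ at $z$'' is a two-dimensional quadratic inequality $\tilde z^T M_k \tilde z>0$, whose solution set is a union of two antipodal sectors; intersecting over $k$ gives a fixed region $S$ with sector angle $\phi(U_1,\dots,U_{\bar{K}})$, and the co-clustering probability conditional on the candidates is the closed-form expression $\frac{(\phi-\theta)_+}{\pi}+\frac{(\theta-\pi+\phi)_+}{\pi}$. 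This is manifestly non-increasing in $\theta$ for every fixed candidate configuration, strictly decreasing whenever $\phi<\pi/2$ (a positive-measure event), and the law of total probability then yields strict monotonicity of $\tilde p$ for every $\bar{K}$ at once. In your notation, this amounts to conditioning on the relative phases and amplitudes $(\bar\lambda_k,\delta_k,\Theta_k-\Theta_1)$ and integrating over a common uniform shift of all the $\Theta_k$; that is the conditioning that makes the $\bar{K}\geq 3$ case collapse to a one-dimensional geometric computation, and it is the missing ingredient in your proposal.
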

\begin{proof}
    We provide a sketch of the proof here; the full proof can be found in Appendix~\ref{app:proofs}. 
   \rev{The proof of this Lemma relies on a geometric understanding of the co-clustering of two points, reducing it to a two-dimensional geometric condition. At this stage, we use a symmetrization trick reminiscent of that used in Vapnik and Chervonenkis's proof of generalization for VC classes. This allows us to derive an easy formula for a conditional probability of co-clustering given the distributional assumptions.} 
    
    For notational compactness, we instead prove that the probability \laura{of two points being co-clustered is a \emph{decreasing} function of the angle $\theta$ between them.} 
Denote this probability by $p_{\bar K,\bar d}(\theta)$. Let $U_{1}, U_{2}, \dots, U_{\bar K} \in \bR^{D \times \bar d}$ be the $K$ candidate bases. Let $\tilde{p}(\theta)$ be the probability that any two points with corresponding angle
    $\theta$ are assigned to the candidate $U_{1}$. Then \david{by symmetry} we have $p_{\bar K, \bar d}(\theta) = K \tilde{p}(\theta)$, and it suffices to prove that $\tilde{p}$ is strictly decreasing. Without loss of generality, let $x_{i} = e_{1}$ and $x_{j} = \cos(\theta) e_{1} + \sin(\theta) e_{2}$, where $e_{m} \in \bR^{D}$ is the $m$th standard basis vector. We then have
    that
    \begin{equation*}
        \tilde{p}(\theta) = \bP \set{Qx_{i}, Qx_{j} \text{ both assigned to } U_{1}},
    \end{equation*}
    where $Q$ is an arbitrary orthogonal transformation of $\bR^{D}$. Let $E$ denote the event of interest and $L$ denote the span of $e_{1}$ and $e_{2}$. The event $E$ can then be written as
    \begin{eqnarray}
        \label{eq:quadratic}
        z^{T}QP_{L}(P_{1} - P_{k})P_{L}Qz > 0, \quad \text{for} \quad 1 < k \leq K \quad \text{and} \quad z = x_{i}, x_{j},
    \end{eqnarray}
    where $P_{L}$ denotes the orthogonal projection onto the subspace $L$ and $P_{k}$ denotes the orthogonal projection onto the subspace spanned by $U_{k}$. By restricting to $L$, \eqref{eq:quadratic} can be reduced to a two-dimensional quadratic form, and we can compute in closed form $\bP \set{E \given U_{1}, \dots, U_{\bar K}}$. Differentiating shows that this term is decreasing and hence (by the law of total probability) so is $\tilde{p}(\theta)$.
\end{proof}



It is interesting to note that the result of Lemma \ref{lem:coassocProb} does not depend on the underlying data distribution, \ie the number or arrangement of subspaces, but instead says that clustering with EKSS-0 is (in expectation) a function of the absolute inner product between points, regardless of the parameters. Thus, the results of this section all hold even with the simple parameter choice of $\bar{K} = 2$ and $\bar{d} = 1$ in Algorithm \ref{alg:EKSS}. 
Our empirical results suggest that \laura{decreasing $\bar{K}$ and} increasing $\bar{d}$ increases the probability of co-clustering. 
However, when running several iterations of KSS (EKSS with
$T > 0$), we find that it is advantageous to choose $\bar{K}$ and $\bar{d}$ to match the true parameters of the data as closely as possible, allowing KSS to more accurately model the underlying subspaces. 

Combined with the results of Section \ref{sec:generalTheory}, Theorem~\ref{thm:ekss0} enables us to quickly obtain recovery guarantees for EKSS-0, which we now present.
We first consider the case where the data are noiseless, \ie lie perfectly on a union of $K$ subspaces. Theorems~\ref{thm:noIntersectionFull} and \ref{thm:intersectionNoiseless} provide sufficient conditions on the arrangement of subspaces such that EKSS-0 achieves \emph{correct clustering} with high probability.

\begin{theorem}[EKSS-0 provides correct clustering for disjoint subspaces]
    \label{thm:noIntersectionFull}
    Let $\sS_{k}$, $k=1,\dots,K$ be subspaces of dimension $d_1,\dots,d_K$ in $\bR^D$. Let the $N_{k}$ points in $\sX_{k}$ be drawn as $x_{j}^{(k)} = U^{(k)}a_{j}^{(k)}$, where $a_{j}^{(k)}$ are i.i.d. uniform on $\bS^{d_{k}-1}$ and $U^{(k)} \in \bR^{D \times d_{k}}$ has orthonormal columns that form a basis for $\sS_k$.
    Let $\rho \in [0,1)$ be arbitrary and suppose that $N_{min} > N_{0}$, where $N_{0}$ is a constant that depends only on $d_{max}$ and $\rho$. Suppose that $q \in [c_{4} \log N_{max},N_{min}^{\rho}]$ and
    \begin{equation}
        \ratio{0} = \max_{k,l: k \neq l} \norm{{U^{(k)}}^{T}U^{(l)}}_{2} < 1,
        \label{eq:affCond}
    \end{equation}
    where $c_{4} = 12(24\pi)^{d_{max}-1}$.
    Then $\bar{A}$ obtained by EKSS-0 results in correct clustering of the data with probability at least $1 - \sum_{k = 1}^{K} \left( N_{k} e^{-c_1(N_{k}-1)} + 2N_{k}^{-2} \right) - N(N-1)e^{-c_3B\min\set{C_1,C_{3}}^2}$, where $c_1,c_3 > 0$ are numerical constants, $C_1 > 0$ depends on $\ratio{0}$ and the function $f_{\bar{K},\bar{d}}$ defined in Theorem~\ref{thm:ekss0}, and $C_{3} > 0$ depends on $d_{max}$, $N_{min}$, and $f_{\bar{K},\bar{d}}$.
\end{theorem}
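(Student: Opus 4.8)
The plan is to chain together the machinery of Sections~\ref{sec:generalTheory} and~\ref{sec:ekssTheory}. Proposition~\ref{prop:correctClustering} reduces ``correct clustering'' to two events: the thresholded affinity matrix $\bar A = \algname{Thresh}(A,q)$ has no false connections (NFC), and, given NFC, each subspace's points form a single connected component (so $\bar A$ has exactly $K$ components). I would establish NFC from Theorem~\ref{thm:nfc} together with a lower bound on the $q$-angular separation $\phi_q$, and connectedness from Theorem~\ref{thm:cdel}; Theorem~\ref{thm:ekss0} supplies the key hypothesis for both, namely that $A$ is $\tau$-\affclose/ for a small $\tau$ once $B$ is large. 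Before doing so, note that because each $U^{(k)}$ has orthonormal columns and each $a_j^{(k)}$ lies on $\bS^{d_k-1}$, every data point $x_j^{(k)}=U^{(k)}a_j^{(k)}$ has unit norm and $\ip{x_i^{(k)}}{x_j^{(l)}} = \ip{a_i^{(k)}}{{U^{(k)}}^{T}U^{(l)}a_j^{(l)}}$; in particular same-subspace inner products coincide with inner products of the underlying i.i.d.\ uniform vectors. I would fix $\gamma = 3$ throughout, take $C_1$ to be the constant of Lemma~\ref{lem:noIntersection} with $s=0$ and $f=f_{\bar K,\bar d}$ (so that the quantity $\ratio{s}$ there becomes $\ratio{0}$ of~\eqref{eq:affCond}), take $C_3 := \min_k$ of the constants of Theorem~\ref{thm:cdel} applied with $d=d_k$, $n=N_k$, $\gamma=3$, $f=f_{\bar K,\bar d}$, and set $\tau = \min\set{C_1,C_3}$.

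\emph{NFC.} Lemma~\ref{lem:noIntersection} with $s=0$ gives $\phi_q > C_1$ with probability at least $1-\sum_k N_k e^{-c_1(N_k-1)}$, using $\ratio{0}<1$, $N_{min}>N_0$, and $q<N_{min}^\rho$. Theorem~\ref{thm:ekss0}, applied at deviation level $\tau$, gives that $A$ is $\tau$-\affclose/ with respect to $f_{\bar K,\bar d}$ with probability at least $1-N(N-1)e^{-c_3\tau^2 B}$. On the intersection, $\tau \le C_1 < \phi_q$, so $A$ is $\phi_q$-\affclose/ and Theorem~\ref{thm:nfc} yields that the $q$-nearest-neighbor graph $\bar A$ has no false connections.

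\emph{Connectedness.} Condition on NFC, which forces the $q$ nearest neighbors of every point into its own subspace. Hence the restriction of $\bar A$ to $\sX_k$ is precisely the $q$-nearest-neighbor graph of $A$ restricted to $\sX_k$, and that restricted affinity remains $\tau$-\affclose/; via the inner-product-preserving identification $x_j^{(k)}\leftrightarrow a_j^{(k)}$ it is the $q$-nearest-neighbor graph of a $\tau$-\affclose/ affinity for $N_k$ i.i.d.\ uniform points on $\bS^{d_k-1}$, exactly the setting of Theorem~\ref{thm:cdel}. The choice $c_4 = 12(24\pi)^{d_{max}-1}$ is made so that $c_4\log N_{max} \geq 4(24\pi)^{d_k-1}\gamma \log N_k$ for every $k$, whence $q\in[c_4\log N_{max},N_{min}^{\rho}]\subseteq[4(24\pi)^{d_k-1}\gamma\log N_k, N_k]$; enlarging $N_0$ (still depending only on $d_{max}$ and $\rho$) makes $\gamma\in(1,N_k/\log N_k)$ and $\theta<\pi/48$ hold. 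Since $\tau\le C_3$, Theorem~\ref{thm:cdel} gives that the subgraph on $\sX_k$ is connected with probability at least $1-\tfrac{2}{N_k^{\gamma-1}\gamma\log N_k}\ge 1-2N_k^{-2}$. Union bounding over $k$, all $K$ subgraphs are connected; with NFC, $\bar A$ then has exactly $K$ connected components.

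On the intersection of all the above events, Proposition~\ref{prop:correctClustering} shows that spectral clustering on $\bar A$ recovers the clusters correctly, and a final union bound over the failure probabilities (Lemma~\ref{lem:noIntersection}: $\sum_k N_k e^{-c_1(N_k-1)}$; the $K$ invocations of Theorem~\ref{thm:cdel}: $\sum_k 2N_k^{-2}$; Theorem~\ref{thm:ekss0}: $N(N-1)e^{-c_3 B\min\set{C_1,C_3}^2}$) gives the stated bound --- the randomness of the candidate subspaces (Theorem~\ref{thm:ekss0}) and of the data (Lemma~\ref{lem:noIntersection}, Theorem~\ref{thm:cdel}) being independent, no joint analysis is needed. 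The main obstacle is not any deep estimate --- these are all discharged by the earlier results --- but checking that the hypotheses of Theorem~\ref{thm:cdel} and of Lemma~\ref{lem:noIntersection} are simultaneously satisfiable: this is what forces $\gamma=3$, hence the constant $c_4=12(24\pi)^{d_{max}-1}$ bracketing the admissible range of $q$, and the threshold $N_{min}>N_0$ (to shrink the cap radius $\theta$ below $\pi/48$, to keep $\gamma<N_k/\log N_k$, and to meet the requirements already present in Lemma~\ref{lem:noIntersection}). One should also verify that $\min_k C_3(d_k,N_k,f_{\bar K,\bar d},3)$ admits a lower bound depending only on $d_{max}$, $N_{min}$, and $f_{\bar K,\bar d}$, using the monotone dependence of the Theorem~\ref{thm:cdel} constant on $d$ and $n$.
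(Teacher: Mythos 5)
Your proposal is correct and follows essentially the same route as the paper's own proof: establish that $A$ is $\tau$-\affclose/ via Theorem~\ref{thm:ekss0}, lower-bound $\phi_q$ by $C_1$ via Lemma~\ref{lem:noIntersection} with $s=0$, obtain connectedness via Theorem~\ref{thm:cdel} with $\gamma=3$, and set $\tau=\min\set{C_1,C_3}$ before union bounding. The paper states this chain in four sentences; your write-up adds the useful verification that $c_4=12(24\pi)^{d_{max}-1}$ and the $2N_k^{-2}$ terms indeed come from the $\gamma=3$ instantiation of Theorem~\ref{thm:cdel}, which the paper leaves implicit.
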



\begin{theorem}[EKSS-0 provides correct clustering for subspaces with bounded affinity]
    \label{thm:intersectionNoiseless}
    Let $\sS_{k}$, $k=1,\dots,K$ be subspaces of dimension $d_1,\dots,d_K$ in $\bR^D$.
    Let the points in $\sX_{k}$ be a set of $N_{k}$ points drawn uniformly from the unit sphere in subspace $k$, \ie from the set $\set{x \in \sS_{k}: \norm{x} = 1}$. Let $\sX = \sX_{1} \cup \cdots \cup \sX_{K}$ and $N=\sum_k N_k$. Let $q \in \left[ c_{4} \log N_{max}, N_{min}/6 \right)$, where $c_{4} =
    12(24\pi)^{d_{max}-1}$. If
    \begin{equation*}
        \max_{k,l: k \neq l} \operatorname{aff}(\sS_{k},\sS_{l}) \leq \frac{1}{15 \log N},
    \end{equation*}
    then $\bar{A}$ obtained by EKSS-0 results in correct clustering of the data with probability at least $1 - \frac{10}{N} - \sum_{k = 1}^{K} \left( N_{k} e^{-c_2(N_{k}-1)} - 2N_{k}^{-2} \right) - N(N-1)e^{-c_3B\min\set{C_2,C_{3}}^{2}}$, where $c_2,c_3 > 0$ are numerical constants, $C_2 > 0$ depends only on $\max_{k,l: k \neq l} \operatorname{aff}(\sS_{k},\sS_{l})$, $D$, $d_{max}$, $N$, and the function $f_{\bar{K},\bar{d}}$ defined in Theorem~\ref{thm:ekss0}, and $C_{3} > 0$
    depends on $d_{max}$, $N_{min}$, and $f_{\bar{K},\bar{d}}$.
\end{theorem}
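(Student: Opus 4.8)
The plan is to assemble this result from the three ingredients already established: Theorem~\ref{thm:ekss0} (the EKSS-0 affinity matrix is $\tau$-\affclose/), Lemma~\ref{lem:intersection} (lower bound on the $q$-angular separation $\phi_q$ for data drawn uniformly on unit spheres in intersecting subspaces, with no noise, i.e.\ $\sigma=0$), and Theorems~\ref{thm:nfc} and~\ref{thm:cdel} (NFC and connectedness for \affclose/ matrices), combined through Proposition~\ref{prop:correctClustering}. First I would specialize Lemma~\ref{lem:intersection} to $\sigma=0$: the noise term in condition~\eqref{eq:affCondIntersection} vanishes, leaving exactly the hypothesis $\max_{k\neq l}\operatorname{aff}(\sS_k,\sS_l)\le 1/(15\log N)$ assumed here, and the conclusion becomes $\phi_q > C_2$ with probability at least $1 - \frac{10}{N} - \sum_k N_k e^{-c_2(N_k-1)}$, where $C_2$ now depends only on $\max_{k\neq l}\operatorname{aff}(\sS_k,\sS_l)$, $D$, $d_{max}$, $N$, and $f_{\bar K,\bar d}$. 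The requirement $q < N_{min}/6$ matches the range $q\in[c_4\log N_{max}, N_{min}/6)$ imposed in the theorem, and $D>6\log N$ is implied (or can be absorbed into $N_0$-type considerations) since $\operatorname{aff}\le 1/(15\log N)$ forces the subspaces to be nearly orthogonal, which is only possible when $D$ is not too small relative to $d_{max}$.

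Next I would fix the deviation level. Set $\tau = \min\{C_2, C_3\}$ where $C_3$ is the connectedness constant from Theorem~\ref{thm:cdel}; to invoke that theorem I take $\gamma$ so that $\gamma\log N_{max}/N_{max}$ is the cap fraction, check that the lower bound $q\ge c_4\log N_{max} = 12(24\pi)^{d_{max}-1}\log N_{max}$ meets the requirement $q\ge 4(24\pi)^{d-1}\gamma\log n$ for each subspace's point count $n=N_k$ (here $d\le d_{max}$, and the constant $12$ versus $4\gamma$ gives room for $\gamma$ slightly above $1$), and that the cap radius $\theta<\pi/48$ holds for $N_{max}$ large (subsumed by the $N_{min}>N_0$ style hypothesis if needed, though as stated the theorem does not carry an $N_0$ — I would note this is handled by the cap-radius computation for the given $q$-range). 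Then on the event $\{\phi_q>C_2\}$, since $\tau\le C_2<\phi_q$, the affinity matrix being $\tau$-\affclose/ makes it $\phi_q$-\affclose/, so Theorem~\ref{thm:nfc} gives NFC; and since $\tau\le C_3$, Theorem~\ref{thm:cdel} gives connectedness of each within-subspace $q$-NN graph with probability at least $1-2/(N_k^{\gamma-1}\gamma\log N_k)$, which I would bound by $2N_k^{-2}$ by choosing $\gamma\ge 3$ (consistent with the $c_4$ constant), matching the $2N_k^{-2}$ terms in the stated failure probability. Theorem~\ref{thm:ekss0} with this $\tau$ says $A$ is $\tau$-\affclose/ except with probability $N(N-1)e^{-c_3\tau^2 B} = N(N-1)e^{-c_3 B\min\{C_2,C_3\}^2}$.

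Finally I would union-bound the failure events — the $10/N$ and $\sum_k N_k e^{-c_2(N_k-1)}$ from the angular-separation event, the $\sum_k 2N_k^{-2}$ from the per-subspace connectedness events, and the $N(N-1)e^{-c_3 B\min\{C_2,C_3\}^2}$ from the \affclose/ event — and apply Proposition~\ref{prop:correctClustering} to conclude correct clustering with the claimed probability. The main obstacle I anticipate is bookkeeping the parameter $\gamma$ and cap-radius condition in Theorem~\ref{thm:cdel} so that (i) the single threshold range $q\in[c_4\log N_{max}, N_{min}/6)$ simultaneously satisfies the per-subspace hypotheses $q\in[4(24\pi)^{d_k-1}\gamma\log N_k, N_k]$ for every $k$, (ii) $\theta<\pi/48$, and (iii) the resulting $2/(N_k^{\gamma-1}\gamma\log N_k)$ collapses to $2N_k^{-2}$; choosing $\gamma=3$ and verifying $12(24\pi)^{d_{max}-1}\ge 4\cdot 3\cdot(24\pi)^{d_k-1}$ (true since $d_k\le d_{max}$) is the crux, and the $\theta<\pi/48$ check is a routine area estimate for the sphere cap of fractional area $3\log N_{max}/N_{max}$, valid once $N_{max}$ exceeds an absolute constant. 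Everything else is a direct substitution into results already proved.
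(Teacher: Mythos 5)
Your proposal follows essentially the same route as the paper's proof: specialize Lemma~\ref{lem:intersection} with $\sigma=0$ to get $C_2$, invoke Theorem~\ref{thm:cdel} with $\gamma=3$ to get $C_3$, set $\tau=\min\{C_2,C_3\}$ in Theorem~\ref{thm:ekss0}, and union-bound via Proposition~\ref{prop:correctClustering}. Your version is in fact more careful than the paper's (which is a four-sentence sketch containing a typo, $\min\{C_1,C_3\}$ for $\min\{C_2,C_3\}$), particularly in verifying that $c_4=12(24\pi)^{d_{max}-1}$ accommodates $\gamma=3$ in the per-subspace connectedness hypotheses.
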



We next consider two forms of data corruption. Theorem~\ref{thm:noisy} shows that the affinity matrix built by EKSS-0 has NFC in the presence of data corrupted by additive Gaussian noise. Theorem~\ref{thm:noIntersectionMissing} shows that EKSS-0 maintains NFC even in the presence of a limited number of missing (unobserved) entries.

\begin{theorem}[EKSS-0 has NFC with noisy data]
    \label{thm:noisy}
    Let $\sS_{k}$, $k=1,\dots,K$ be subspaces of dimension $d_1,\dots,d_K$ in $\bR^D$.
    Let the points in $\sX_{k}$ be the set of $N_{k}$ points $x_{i}^{(k)} = y_{i}^{(k)} + e_{i}^{(k)}$, where the $y_{i}^{(k)}$ are drawn i.i.d. from the set $\set{y \in \sS_{k}: \norm{y} = 1}$, independently across $k$, and the $e_{i}^{(k)}$ are i.i.d. $\sN(0,\frac{\sigma^{2}}{D}I_{D})$. Let $\sX = \sX_{1} \cup \cdots \cup \sX_{K}$ and $q < N_{min}/6$. If
    \begin{equation*}
        \max_{k,l: k \neq l} \operatorname{aff}(\sS_{k},\sS_{l}) + \frac{\sigma(1+\sigma)}{\sqrt{\log N}} \frac{\sqrt{d_{max}}}{\sqrt{D}} \leq \frac{1}{15 \log N},
    \end{equation*}
    with $D > 6 \log N$, then $\bar{A}$ obtained from running EKSS-0 has no false connections with probability at least $1 -  \frac{10}{N} - \sum_{k = 1}^{K} N_{k} e^{-c_2(N_{k}-1)} - N(N-1)e^{-c_3C_2^{2}B}$, where $c_2, c_3 > 0$ are numerical constants, and $C_2 > 0$ depends only on $\max_{k \neq l} \text{aff}\left( \sS_{k}, \sS_{l} \right)$, $\sigma$, $D$, $d_{max}$, $N$ and the function $f_{\bar{K},\bar{d}}$ defined in Theorem~\ref{thm:ekss0}.
\end{theorem}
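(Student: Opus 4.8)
The plan is to assemble Theorem~\ref{thm:noisy} from three ingredients already established: the lower bound on the $q$-angular separation for noisy data (Lemma~\ref{lem:intersection}), the fact that the EKSS-0 affinity matrix is $\tau$-\affclose/ with respect to the transformation $f_{\bar K,\bar d}$ (Theorem~\ref{thm:ekss0}), and the NFC guarantee for $\phi_q$-\affclose/ affinity matrices (Theorem~\ref{thm:nfc}). The work is purely in lining these up: take the deviation level $\tau$ to equal the separation lower bound $C_2$, and compose the two distinct sources of randomness (the random data $\sX$, and the random candidate subspaces drawn in line~4 of Alg.~\ref{alg:EKSS}) by conditioning on the former.

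First I would instantiate Lemma~\ref{lem:intersection} with the strictly increasing function $f = f_{\bar K,\bar d}$ furnished by Theorem~\ref{thm:ekss0} (its monotonicity is exactly Lemma~\ref{lem:coassocProb}). The noise model, the affinity condition $\max_{k\neq l}\operatorname{aff}(\sS_k,\sS_l) + \frac{\sigma(1+\sigma)}{\sqrt{\log N}}\frac{\sqrt{d_{max}}}{\sqrt D}\le\frac{1}{15\log N}$, the requirement $D>6\log N$, and $q<N_{min}/6$ are precisely the hypotheses of Theorem~\ref{thm:noisy}, so Lemma~\ref{lem:intersection} yields a constant $C_2>0$ (depending only on $\max_{k\neq l}\operatorname{aff}(\sS_k,\sS_l)$, $\sigma$, $D$, $d_{max}$, $N$, and $f_{\bar K,\bar d}$) with $\bP\{\phi_q > C_2\}\ge 1-\frac{10}{N}-\sum_{k=1}^K N_k e^{-c_2(N_k-1)}$. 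Then, conditioning on any realization of the data with $\phi_q>C_2$, I would apply Theorem~\ref{thm:ekss0} with $\tau=C_2$: over the randomness of the candidate bases, $A$ is $C_2$-\affclose/ with respect to $f_{\bar K,\bar d}$ with probability at least $1-N(N-1)e^{-c_3C_2^2 B}$, where $c_3 = 2\sqrt{\log 2}$.

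On the intersection of these events, $\abs{A_{i,j}-f_{\bar K,\bar d}(\abs{\ip{x_i}{x_j}})}\le C_2<\phi_q$ for all $i,j$, so $A$ is $\phi_q$-\affclose/; since \algname{Thresh}$(A,q)$ keeps the top $q$ entries in each row/column, $\bar A$ is the $q$-nearest-neighbor graph appearing in Theorem~\ref{thm:nfc}, and that theorem (applicable because the data has $q$-angular separation $\phi_q$ with respect to $f_{\bar K,\bar d}$) gives NFC. Finally, writing $\bP\{\text{NFC fails}\} = \bE_{\sX}\bP\{\text{NFC fails}\mid\sX\} \le \bP\{\phi_q\le C_2\} + \sup_{\sX:\,\phi_q>C_2}\bP\{A\text{ not }C_2\text{-\affclose/}\mid\sX\}$ and substituting the two bounds gives the stated probability $1-\frac{10}{N}-\sum_{k=1}^K N_k e^{-c_2(N_k-1)}-N(N-1)e^{-c_3C_2^2 B}$.

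The substantive analysis is entirely contained in Lemma~\ref{lem:intersection} (bounding noisy inner products and hence $\phi_q$) and Theorem~\ref{thm:ekss0} (concentration of the co-association entries); the remaining steps are bookkeeping, and the only places requiring care are: (i) ensuring the transformation $f$ is the \emph{same} object in the angular-separation bound and the \affclose/ bound, which forces one to feed $f_{\bar K,\bar d}$ into Lemma~\ref{lem:intersection}; (ii) correctly composing the two probability statements, which live on different randomness (Theorem~\ref{thm:ekss0} holds for \emph{every} fixed data set, so the right tool is the conditioning/union bound above, not a naive union over one probability space); and (iii) checking that the output of \algname{Thresh} is exactly the $q$-nearest-neighbor graph to which Theorem~\ref{thm:nfc} refers and that $q<N_{min}/6$ meets the $q$-budget of Lemma~\ref{lem:intersection}. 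I expect (ii) to be the subtlest point to state cleanly, though it is not a genuine obstacle.
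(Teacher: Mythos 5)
Your proposal is correct and follows essentially the same route as the paper's proof, which likewise chains Lemma~\ref{lem:intersection}, Theorem~\ref{thm:ekss0}, and Theorem~\ref{thm:nfc}; your version is in fact more careful, since you make the conditioning over the two sources of randomness explicit and correctly set $\tau = C_2$ (the paper's proof writes $\tau = \min\{C_1, C_3\}$, an apparent copy-paste slip from the preceding proofs). No gaps.
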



\begin{theorem}[EKSS-0 has NFC with missing data]
    \label{thm:noIntersectionMissing}
    Let the $n$ points in $\sX_{k}$ be drawn as $x_{j}^{(k)} = U^{(k)}a_{j}^{(k)}$, where $a_{j}^{(k)}$ are i.i.d. uniform on $\bS^{d-1}$
    and the entries of $U^{(k)} \in \bR^{D \times d}$ are i.i.d. $\sN(0,\frac{1}{D})$.
    Let $\rho \in [0,1)$ be arbitrary and suppose that $n > N_{0}$, where $N_{0}$ is a constant that depends only on $d$ and $\rho$. Suppose that $q < n^{\rho}$, and assume that in each $x_{j} \in \sX$ up to $s$ arbitrary entries are unobserved, \ie set to 0. Let $\sX = \sX_{1} \cup \cdots \cup \sX_{K}$. If
    \begin{equation}
        D \laura{-3c_{5}d - c_{5} \log K} \geq s\left( c_{5}\log\left( \frac{De}{2s} \right) + c_{6} \right),
        \label{eq:missCond}
    \end{equation}
    then $\bar{A}$ obtained by EKSS-0 has no false connections with probability at least $1 - Ne^{-c_{1}(n-1)} - N(N-1)e^{-c_{3}C_{1}^{2}B} - 4e^{-c_{7}D}$, where 
    $c_{1}, c_{3}, c_{5}, c_{6}, c_{7} > 0$, are numerical constants and $C_{1} > 0$ depends only on the ratio $\ratio{s}$ defined in \eqref{eq:affCondMissing} and the function $f_{\bar{K},\bar{d}}$ defined in Theorem~\ref{thm:ekss0}.
\end{theorem}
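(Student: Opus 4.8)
The plan is to assemble Theorem~\ref{thm:noIntersectionMissing} from three ingredients that are already available: Lemma~\ref{lem:noIntersection}, which lower-bounds the $q$-angular separation $\phi_q$ under the deterministic condition $\ratio{s} < 1$; Theorem~\ref{thm:ekss0}, which says the EKSS-0 affinity matrix is $\tau$-\affclose/ with respect to $f_{\bar K,\bar d}$ with probability at least $1 - N(N-1)e^{-c_3\tau^2 B}$; and Theorem~\ref{thm:nfc}, which says a $\phi_q$-\affclose/ affinity matrix has no false connections. The only genuinely new work is that the bases $U^{(k)}$ are now \emph{random} Gaussian matrices rather than fixed, so the hypothesis $\ratio{s} < 1$ of Lemma~\ref{lem:noIntersection} is itself a random event; condition~\eqref{eq:missCond} is exactly what makes that event overwhelmingly likely.

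First I would establish an auxiliary concentration statement: under~\eqref{eq:missCond}, with probability at least $1 - 4e^{-c_7 D}$ one has $\ratio{s} \le 1/2$ (any fixed constant below $1$ works). For the numerator of $\ratio{s}$, zeroing rows of $U^{(k)}$ only decreases $\norm{{U_{\sD}^{(k)}}^{T}U^{(l)}}_2$, so it suffices to bound $\norm{{U^{(k)}}^{T}U^{(l)}}_2$ for the $\binom{K}{2}$ pairs of independent $D \times d$ Gaussian matrices with $\sN(0,1/D)$ entries; standard singular-value bounds for Gaussian matrices plus a union bound over pairs give $\norm{{U^{(k)}}^{T}U^{(l)}}_2 \lesssim \sqrt{(d + \log K)/D}$, which is small by~\eqref{eq:missCond}. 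For the denominator I would write ${U_{\sD}^{(l)}}^{T}U^{(l)} = {U^{(l)}}^{T}U^{(l)} - \sum_{m \in \sD} u_m u_m^{T}$, where $u_m$ is the $m$th row of $U^{(l)}$; the first term lies within a small $\varepsilon$ of $I_d$ in operator norm with high probability (Gaussian Gram concentration, needing $D \gtrsim d$), and the operator norm of the second term, uniformly over all $\binom{D}{2s}$ subsets $\sD$ with $\abs{\sD} \le 2s$, is bounded by a small constant once $D \gtrsim s\log(De/2s)$ --- this restricted-submatrix bound, via an $\varepsilon$-net over the sphere together with $\chi^2$-tail bounds and a union bound over subsets, is precisely where the right-hand side of~\eqref{eq:missCond} and the $c_5\log(De/2s)$ term come from. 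Hence $\min_{\norm{a}=1}\norm{{U_{\sD}^{(l)}}^{T}U^{(l)}a}_2 \ge \sigma_{\min}({U^{(l)}}^{T}U^{(l)}) - \max_{\abs{\sD}\le 2s}\norm{\sum_{m\in\sD}u_mu_m^{T}}_2$ is bounded below by a positive constant, uniformly over $l$ and $\sD$, and combining the two bounds gives $\ratio{s} \le 1/2$.

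Conditioning on this event, Lemma~\ref{lem:noIntersection} applies with $f = f_{\bar K,\bar d}$, $d_{max} = d$, $N_k = n$, and $N = Kn$ (its hypotheses $q < N_{min}^{\rho}$ and $N_{min} > N_0$ being exactly those assumed here), producing a constant $C_1 = C_1(\ratio{s}, f_{\bar K,\bar d}) > 0$ --- bounded below by the positive constant $C_1(1/2, f_{\bar K,\bar d})$ --- with $\phi_q > C_1$ with conditional probability at least $1 - \sum_{k} N_k e^{-c_1(N_k-1)} = 1 - Ne^{-c_1(n-1)}$. Next, Theorem~\ref{thm:ekss0} with $\tau = C_1$ shows the EKSS-0 affinity matrix $A$ is $C_1$-\affclose/ with respect to $f_{\bar K,\bar d}$ with probability at least $1 - N(N-1)e^{-c_3 C_1^2 B}$, the randomness now being over the candidate subspaces drawn in line~4 of Algorithm~\ref{alg:EKSS} and independent of the data. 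On the intersection of all three events, $\phi_q > C_1$ and $A$ is $C_1$-\affclose/; since $\tau$-\affclose/ is monotone in $\tau$, $A$ is also $\phi_q$-\affclose/, so Theorem~\ref{thm:nfc} gives that the $q$-nearest-neighbor graph of $A$ has no false connections, and hence the thresholded matrix $\bar A$ --- whose support is exactly that graph after symmetrization --- has NFC. A union bound over the three failure events gives the claimed $1 - Ne^{-c_1(n-1)} - N(N-1)e^{-c_3 C_1^2 B} - 4e^{-c_7 D}$. Note that, unlike the correct-clustering theorems, we never invoke the connectedness Theorem~\ref{thm:cdel}, which is why neither a lower bound on $q$ nor a $C_3$ term appears.

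The main obstacle is the restricted-isometry-type bound in the second paragraph: controlling $\min_{\abs{\sD}\le 2s}\sigma_{\min}({U_{\sD}^{(l)}}^{T}U^{(l)})$ \emph{uniformly} over all ways of deleting $2s$ rows of a Gaussian basis. The union bound over the $\binom{D}{2s}$ subsets must be absorbed by a Gaussian concentration estimate, and it is exactly this trade-off that forces the $s\log(De/2s)$ term in~\eqref{eq:missCond} and the $4e^{-c_7 D}$ failure probability in the statement; everything after that is bookkeeping built on the three cited results.
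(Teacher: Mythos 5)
Your assembly is essentially the paper's proof: condition on the event $\ratio{s}<1$ (which \eqref{eq:missCond} guarantees with probability $1-4e^{-c_7 D}$), apply Lemma~\ref{lem:noIntersection} with $N_k=n$ and $d_k=d$ to get $\phi_q > C_1$, apply Theorem~\ref{thm:ekss0} with $\tau=C_1$, conclude NFC via Theorem~\ref{thm:nfc}, and union-bound the three failure events; you also correctly observe that Theorem~\ref{thm:cdel} plays no role here, which is why no lower bound on $q$ appears. The only divergence is that the paper obtains the high-probability bound on $\ratio{s}$ by directly citing \cite[Lemma 4]{heckel2015robust}, whereas you sketch a proof of it. Your strategy for that step (Gaussian operator-norm concentration plus a union bound over the $\binom{D}{2s}$ subsets, which is exactly where the $s\log(De/2s)$ term and the $4e^{-c_7D}$ probability originate) is the right one, but one claim in it is false as stated: zeroing rows of $U^{(k)}$ does \emph{not} monotonically decrease $\norm{{U_{\sD}^{(k)}}^{T}U^{(l)}}_2$. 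Writing ${U_{\sD}^{(k)}}^{T}U^{(l)} = {U^{(k)}}^{T}U^{(l)} - \sum_{m\in\sD}u_m^{(k)}{u_m^{(l)}}^{T}$ (with $u_m^{(k)}$ the $m$th row of $U^{(k)}$) shows that two nearly orthogonal bases can become correlated after deleting rows, so the numerator requires the same uniform-over-$\sD$ control you develop for the denominator rather than a reduction to the full-data cross-Gram. This is fixable with the identical $\varepsilon$-net and subset union-bound machinery (and is handled inside the cited lemma), so it affects neither the structure of the argument nor the stated conclusion.
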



\subsection{Discussion of Results}
\label{sec:resultsdiscuss}

The data model considered in Theorems \ref{thm:noIntersectionFull}-\ref{thm:noIntersectionMissing} is known as the ``semi-random'' model \cite{soltanolkotabi2012geometric}, due to the fixed arrangement of subspaces with randomly-drawn points, and has been analyzed widely throughout the subspace clustering literature \cite{soltanolkotabi2012geometric,soltanolkotabi2014robust,heckel2014subspace,heckel2015robust,wang2016graph}.
Our guarantees under this model are identical (up to constants and log factors) to those for TSC and SSC (see \cite[Section VII]{heckel2015robust} for further discussion of their guarantees). The key difference between our results and those of TSC is that we pay at most a $N(N-1) e^{-c_{3}\min{\set{C_{1},C_{2},C_{3}}}^{2}B}$ penalty in recovery probability due to the approximate observations of the transformed inner products. Although our experiments indicate that EKSS-0 appears to have no benefits over TSC, we do find that by running
a small number of KSS iterations, significant performance improvements are achieved. \laura{While the above analysis holds only for the case of $T = 0$, letting $T > 0$ is guaranteed to not increase the KSS cost function \cite{bradley2000kplane}. In our experiments, we found that setting $T > 0$ uniformly improved clustering performance, and our empirical results indicate that EKSS is in fact more robust (than EKSS-0 and TSC) to subspaces with small principal angles. }

While the explicit choice of $B$ is tied to the unknown function $f_{\bar{K},\bar{d}}$, our results provide intuition for setting this value; namely, the
closer the underlying subspaces (in terms of principal angles), the more \david{base clusterings} required. The inverse dependence on $\log N$ in Theorems \ref{thm:intersectionNoiseless} and \ref{thm:noisy} indicates a tension as the problem size grows. On one hand, \david{points from the same subspace are more likely to be close when $N$ is large, improving the angular separation}. On the other \david{hand, points are also more likely to fall near the intersection of subspaces, potentially degrading the angular separation.}
In all experimental results, we see that both EKSS and TSC perform better with larger $N$. Finally, we note that the
leading $O\left( N^{2} \right)$ coefficient in the above probabilities results from applying a union bound \david{and is likely conservative.}

\rev{
\subsection{Additional Recovery Guarantees}
\label{sec:addguarantees}

As mentioned at the start of Section \ref{sec:theory}, our recovery guarantees have application beyond the analysis of EKSS-0. In this section, we show that our framework for analyzing angle-preserving affinities strengthens the results of \cite{heckel2017dimensionality} to show that TSC yields correct clustering (as opposed to NFC only) after linear dimensionality reduction. We finally show empirically that this holds also for EKSS-0, when clustering is applied to data that have been transformed by both linear and nonlinear dimensionality reduction. 


We first state our result for TSC in the widely-studied case of linear dimensionality reduction. Unlike the results of \cite{heckel2017dimensionality,wang2018theoretical}, our framework allows us to prove that TSC achieves correct clustering.
We consider a linear dimensionality reduction to $p$ dimensions. Let $\Phi \in \bR^{p \times D}$ and suppose that for all $x \in \sX$ simultaneously, we have
\begin{equation}
    \label{eq:approxIsom}
    (1-\tau) \norm{x}_{2}^{2} \leq \norm{\Phi x}_{2}^{2} \leq (1+\tau) \norm{x}_{2}^{2}
\end{equation}
with probability at least $1 - 2e^{-c_{3}\tau^{2}p}$.
This holds, e.g., for random projections
as long as $p > (C/\tau^{2}) \log N$ \cite{vershynin2018high}.

\begin{theorem}[TSC provides correct clustering on dimensionality-reduced data]
    \label{thm:dimReducedTSC}
    Consider the setting of Thm.~\ref{thm:intersectionNoiseless} and assume that
    \begin{equation*}
        \max_{k,l: k \neq l} \operatorname{aff}(\sS_{k},\sS_{l}) \leq \frac{1}{15 \log N}.
    \end{equation*}
    Assume dimensionality reduction satisfying \eqref{eq:approxIsom} is applied to the data $\sX$ before clustering. Then $\bar{A}$ obtained by TSC results in correct clustering of the data with probability at least
    \begin{equation*}
        1 - \frac{10}{N} - \sum_{k = 1}^{K} \left( N_{k} e^{-c_2(N_{k}-1)} - 2N_{k}^{-2} \right) - 2e^{-\tilde{c}\min\set{C_{2},C_{3}}^{2}p},
    \end{equation*}
where $c_2, c_{3} > 0$ are numerical constants, $\tilde{c}$ is the constant from \eqref{eq:approxIsom}, $C_2 > 0$ depends only on \\ $\max_{k,l: k \neq l} \operatorname{aff}(\sS_{k},\sS_{l})$, $D$, $d_{max}$, and $N$, and $C_{3} > 0$
    depends on $d_{max}$ and $N_{min}$. 
\end{theorem}

To compare our results with those of \cite{heckel2017dimensionality}, we consider the requirement for NFC only.
To achieve NFC with high probability,
$\tau < C_2$ alone is sufficient
(by Theorem 4.4 and Lemma 4.2).
Specializing the proof of Lemma 4.2
to this dimensionality reduced setting yields
the sufficient condition
\begin{equation*}
    \tau < \frac{1}{3\sqrt{d}} - \frac{2 \sqrt{6}}{15 \sqrt{d}} = \frac{\bar{c}}{\sqrt{d}},
\end{equation*}
where $\bar{c} = \frac{5 - 2\sqrt{6}}{15}$.
This condition can be met by random projections
with probability at least $1 - 2e^{-c_{3}\tau^{2}p}$
as long as
\begin{equation*}
    p > \frac{C}{\tau^{2}} \log N > \frac{C}{\bar{c}^{2}} d \log N.
\end{equation*}

The NFC condition for TSC is also analyzed in \cite[Theorem 3.1]{heckel2017dimensionality}.
Their analysis does not incur the probability penalty incurred in our analysis, but they instead require a stricter condition on the affinity:
\begin{equation*}
    \max_{k \neq l} \text{aff}(\sS_{k},\sS_{l}) \leq \frac{1}{15 \log N} - \frac{\sqrt{11}}{\sqrt{3c_{3}}} \frac{\sqrt{d}}{\sqrt{p}},
\end{equation*}
where $c_{3}$ is the same as above.
Since affinity is nonnegative,
this condition is only feasible when
\begin{equation*}
    \frac{1}{15 \log N} - \frac{\sqrt{11}}{\sqrt{3c_{3}}} \frac{\sqrt{d}}{\sqrt{p}} \geq 0,
\end{equation*}
or equivalently,
\begin{equation*}
    p \geq \frac{825}{c_{3}} d \log^2 N.
\end{equation*}
Hence, both analyses result in a similar lower bound on the projected dimension, with the tradeoff being that \cite[Theorem 3.1]{heckel2017dimensionality} requires a stricter assumption on the affinity, whereas our analysis allows for a slightly higher probability of failure. However, our analysis also yields the novel result of \textit{correct clustering} for TSC as long as $\tau$ is sufficiently small.

The above result covers the case of linear dimensionality reduction in the simple case where $A_{i,j} = \abs{\ip{x_{i}}{x_{j}}}$. Extending to nonlinear dimensionality reduction and/or the interaction of dimensionality reduction operations with more complex notions of similarity in clustering, such as that obtained by EKSS/EKSS-0, is challenging due to (1) the lack of theoretical guarantees on popular dimensionality reduction techniques such as 
UMAP \cite{mcinnes2018umap} and (2) the behavior of the composition of dimensionality reduction with the monotonic functions from EKSS-0 whose precise form is unknown. 
That said, a common feature of dimensionality reduction techniques is preservation of local distances and local angles, which is a key principle in our theory and a key attribute of EKSS-0.

Therefore we here provide empirical evidence that the affinity produced by EKSS-0 is $\tau$-angle preserving when applied after both linear and nonlinear dimensionality reduction; See Fig.~\ref{fig:dimReduction}. The figure shows the empirical probability of co-clustering as a function of angle between points after clustering via EKSS-0 on full data, data reduced via a random Gaussian projection, and data reduced via UMAP. For both the full data and linear dimensionality reduction cases, the resulting probability monotonically
decreases with angle, i.e., is monotonically increasing with absolute inner product as desired. For UMAP, the co-clustering probability is monotone until the angle between vectors is approximately 0.2 radians. While the function is not entirely monotone, if enough data is available, it may be likely that the $q$ nearest neighbors of each point lie within the monotone region, making the NFC and connectedness results in Theorems \ref{thm:nfc} and \ref{thm:cdel} applicable. Generalizing our theory to this setting, where the monotonicity of the function degrades as angles become orthogonal, would be an interesting future direction. }

\begin{figure}[t]
 \rev{
    \centering
\includegraphics[width=0.4\textwidth]{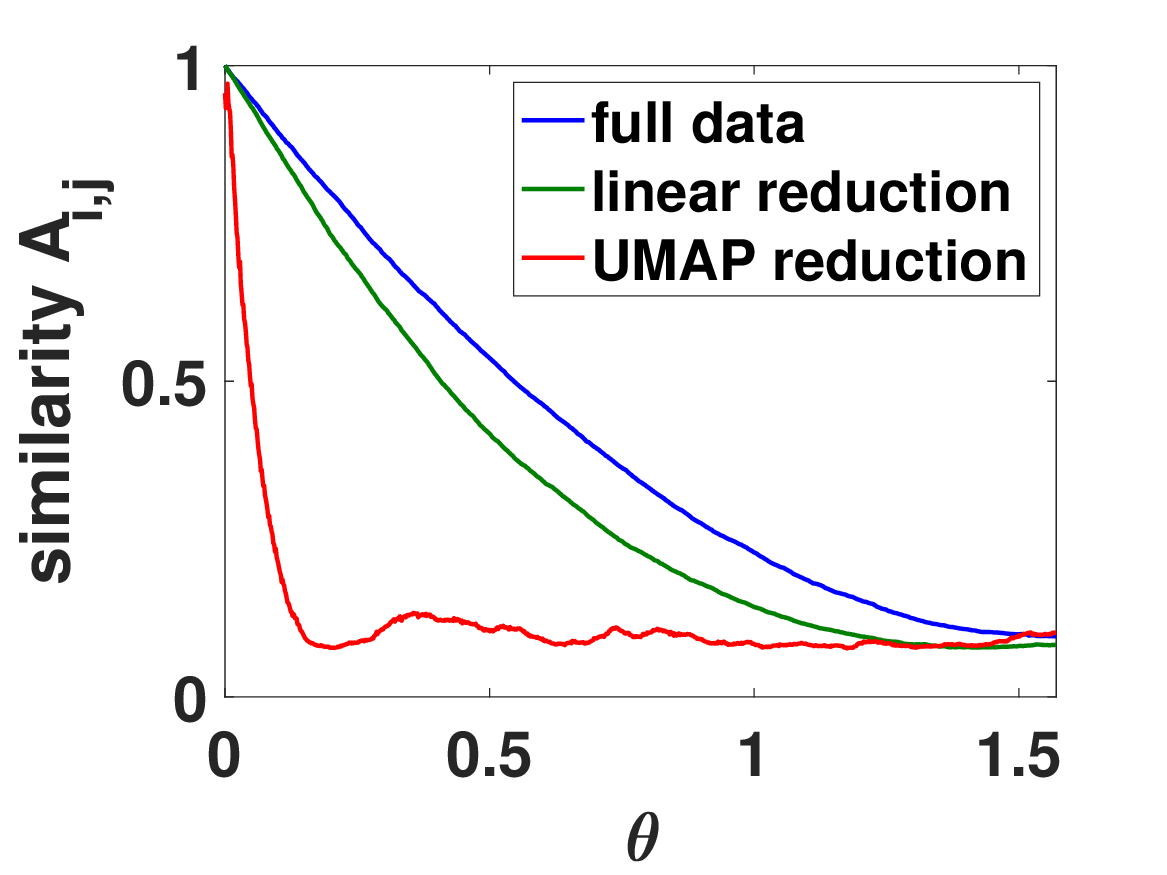}
    \caption{Empirical estimate of similarity $A_{i,j}$ as a function of angle between vectors when clustering via EKSS-0. Points are drawn from $\bR^{100}$ and reduced to $\bR^{20}$ using linear dimensionality reduction via Gaussian random projection and nonlinear dimensionality reduction via UMAP with cosine similarity. EKSS-0 uses $\bar{K} = 10$ candidate subspaces of dimension $\bar{d} = 2$. Clustering probability (similarity) is monotonically decreasing in angle after applying linear dimensionality reduction, and for UMAP up
    to an angle of 0.2 radians.}
    \label{fig:dimReduction}
    }
\end{figure}

\section{Experimental Results}
\label{sec:experiments}

\sloppypar In this section, we demonstrate the performance in terms of clustering error (defined in Appendix~\ref{app:clustErr}) of EKSS on both synthetic and real datasets. 
We first show the performance of our algorithm as a function of the relevant problem parameters and verify that EKSS-0 exhibits the same empirical performance as TSC, as expected based on our theoretical guarantees. We also show that EKSS can recover subspaces that either have large intersection or are extremely close.
We then demonstrate on benchmark datasets that EKSS not only improves over previous geometric methods, but that it achieves state-of-the-art results competitive with those obtained by self-expressive methods.

\subsection{Synthetic Data}

For all experiments in this section, we take $q = \max(3,\left\lceil N_{k}/20 \right\rceil)$ for EKSS-0 and TSC and $q = \max(3,\left\lceil N_{k}/6 \right\rceil)$ for EKSS, where $\left\lceil c \right\rceil$ denotes the largest integer greater than or equal to $c$. We set $B = 10,000$ for EKSS-0 and EKSS. When the angles between subspaces are not explicitly specified, it is assumed that the subspaces are drawn uniformly at random from the set of all $d$-dimensional subspaces of $\bR^{D}$.
For all experiments, we draw points uniformly at random from the unit sphere in the corresponding subspace and show the mean error over 100 random problem instances. We use the code provided by the authors for TSC and SSC. We employ the ADMM implementation of SSC and choose the parameters that result in the best performance in each scenario. 

\begin{figure*}[t]
    \centering
\includegraphics[width=\textwidth]{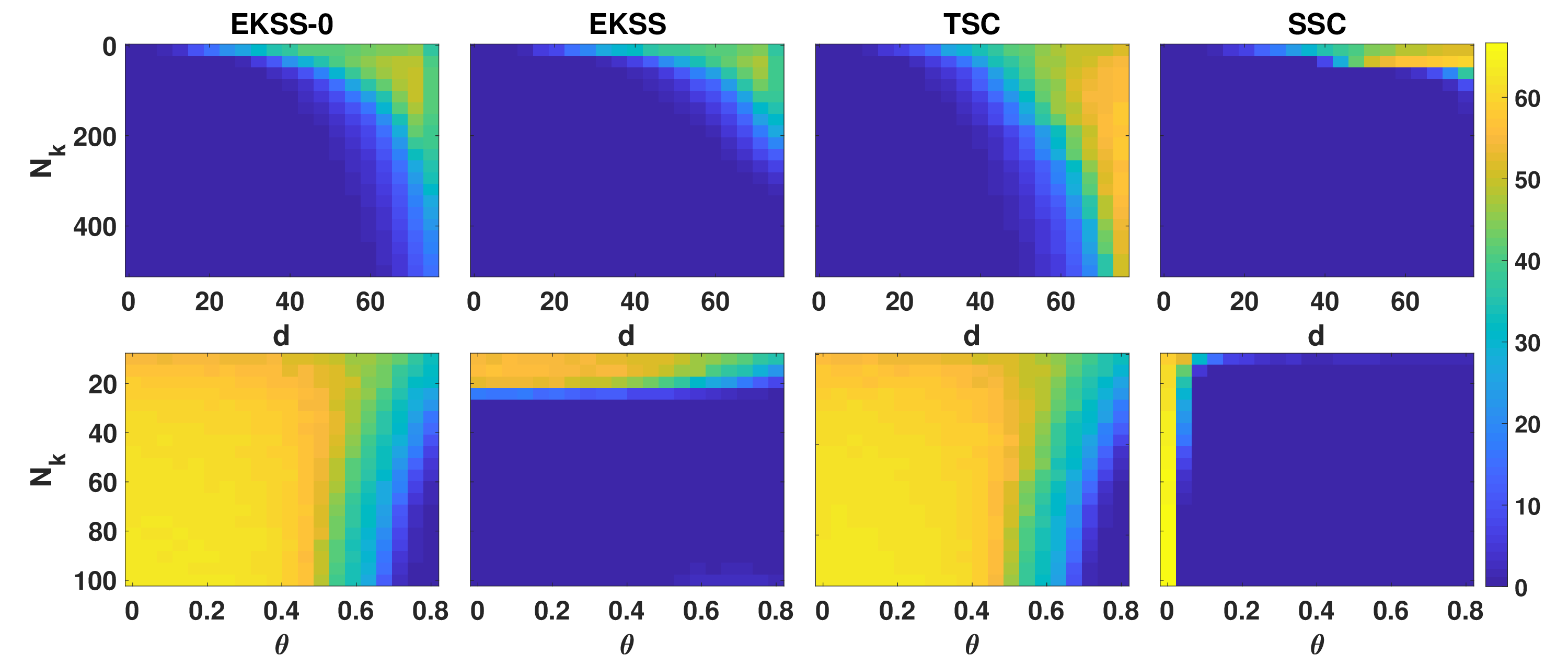}
    \caption{Clustering error (\%) for proposed and state-of-the-art subspace clustering algorithms as a function of problem parameters $N_k$, number of points per subspace, and true subspace dimension $d$ or angle between subspaces $\theta$. Fixed problem parameters are $D = 100$, $K = 3$.}
    \label{fig:heatmaps}
\end{figure*}

We explore the influence of some relevant problem parameters on the EKSS algorithm in Fig. \ref{fig:heatmaps}.
We take the ambient dimension to be $D = 100$, the number of subspaces to be $K = 3$, and generate noiseless data.
We first consider the dependence on subspace dimension and the number of points per subspace. The top row of Fig. \ref{fig:heatmaps} shows the misclassification rate as the number of points per subspace ranges from $10-500$ and the
subspace dimension ranges from $1-75$. When $2d>D$ (\ie $d \geq 51$), pairs of subspaces necessarily have intersection, and the intersection dimension grows with $d$. First, the figures demonstrate that EKSS-0 achieves roughly the same performance as TSC, resulting in correct clustering even in the case of subspaces with large intersection. Second, we see that EKSS can correctly cluster for subspace dimensions larger than that of TSC as long as there are sufficiently many points per subspace.
For large subspace dimensions with a moderate number of points per subspace, SSC achieves the best performance.

We next explore the clustering performance as a function of the distance between subspaces, as shown in the second row of Fig. \ref{fig:heatmaps}. We set the subspace dimension to
$d = 10$ and generate $K = 3$ subspaces such that the principal angles between subspaces $\sS_{1}$ and $\sS_{2}$, as well as those between $\sS_{1}$ and $\sS_{3}$ are $\theta$, for 20 values in the range $\left[ 0.001, 0.8 \right]$. Most strikingly, EKSS is able to resolve subspaces with even the smallest separation. This stands in contrast to \david{TSC; it} fails in this regime because when the subspaces are extremely close, the inner products between points on different subspaces can be nearly as large as those within the same
subspace. 
Similarly, in the case of SSC, points on different subspaces can be used to regress any given point with little added cost, and so it fails at very small subspace angles. 
However, as long as there is still some separation between subspaces, EKSS is able to correctly cluster all points. 
The theory presented here does not capture this phenomenon, and recovery guarantees that take into account multiple iterations of KSS are an important topic for future work.


\begin{figure}[t]
    \centering
\includegraphics[width=0.4\textwidth]{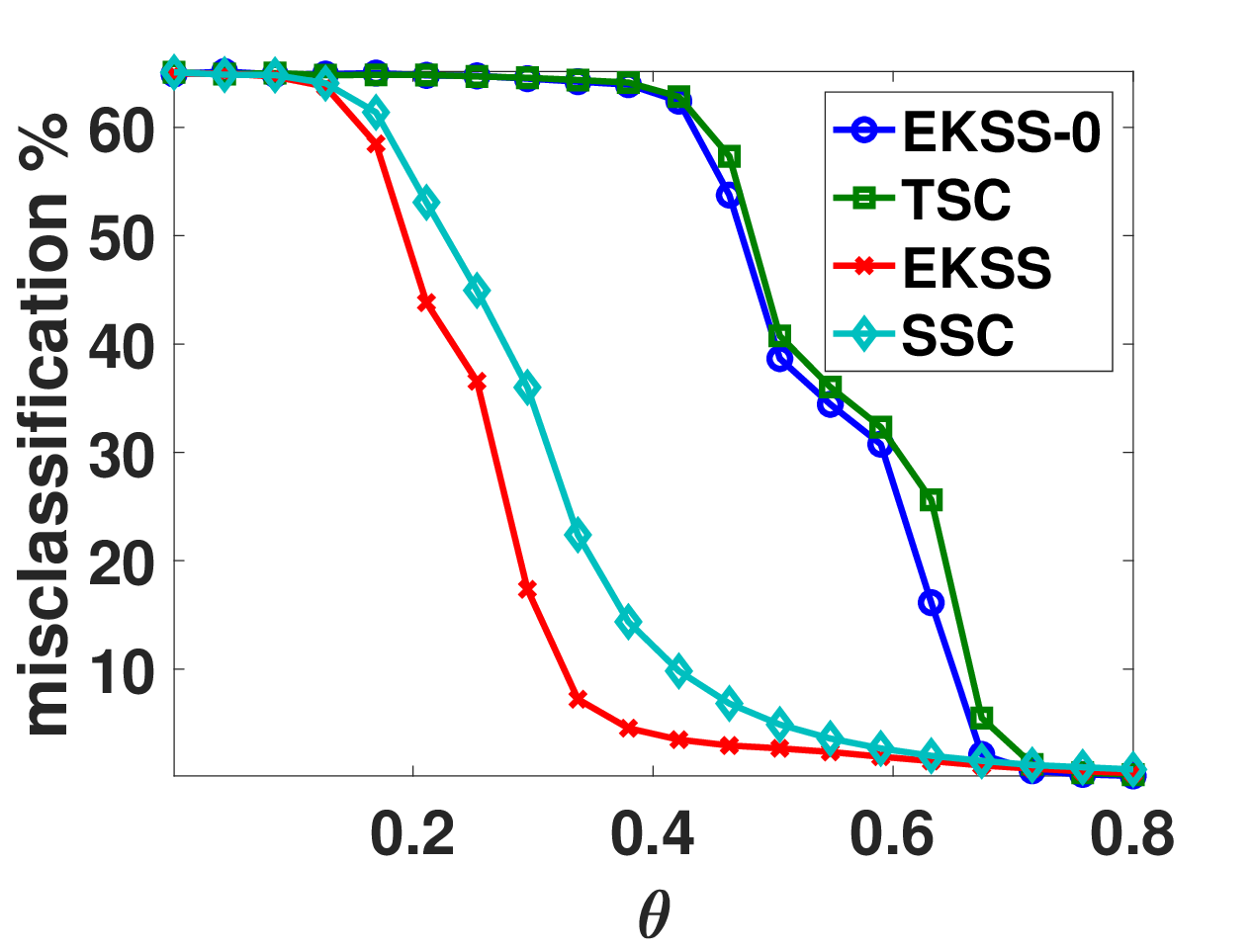}
    \caption{Clustering error (\%) as a function of subspace angles with noisy data. Problem parameters are $D = 100$, $d = 10$, $K = 3$, $N_{k} = 500$, $\sigma^{2} = 0.05$.}
    \label{fig:comp}
\end{figure}

As a final comparison, we show the clustering performance with noisy data. Fig. \ref{fig:comp} shows the clustering error as a function of the angle between subspaces for the case of $K = 3$ subspaces of dimension $d = 10$, with $N_{k} = 500$ points corrupted by zero-mean Gaussian noise with covariance $0.05 I_{D}$. We again consider 20 values of the angle $\theta$ between 0.001 and 0.08. EKSS-0 and TSC
obtain similar performance, and more importantly EKSS is more robust
to small subspace angles than SSC, even in the case of noisy data.

\subsection{Benchmark Data}

In this section, we show that EKSS achieves competitive subspace clustering performance on a variety of datasets commonly used as benchmarks in the subspace clustering literature. We consider the Hopkins-155 dataset \cite{tron2007benchmark}, the cropped Extended Yale Face Database B \cite{georghiades2001from,lee2005acquiring}, COIL-20 \cite{nene1996coil20} and COIL-100 \cite{nene1996coil100} object databases, the USPS dataset provided by
\cite{cai11graph}, and 10,000 digits of the MNIST handwritten digit database \cite{lecun2016mnist}, where we obtain features using a scattering network \cite{bruna2013invariant} as in \cite{you2016oracle}. Descriptions of these datasets and the relevant problem parameters are included in Appendix~\ref{app:expDetails}. We compare the
performance of EKSS to several benchmark algorithms: KSS \cite{bradley2000kplane}, CoP-KSS \cite{gitlin2018improving}, Median K-Flats (MKF) \cite{zhang2009median}, TSC \cite{heckel2015robust}, the ADMM implementation of SSC \cite{elhamifar2013sparse}, SSC-OMP \cite{you2016scalable}, and Elastic Net Subspace Clustering (EnSC) \cite{you2016oracle}. For all algorithms, we selected the parameters that yielded the lowest clustering error, performing extensive model
selection where possible. We point out that this method of parameter selection requires knowledge of the ground truth labels, which are typically unavailable in practice. 
For the larger USPS and MNIST datasets, we obtained a small benefit by replacing PCA (line 7, Alg.~\ref{alg:EKSS}) with the more robust Coherence Pursuit, i.e., we use CoP-KSS as a base clustering algorithm instead of KSS.
Further implementation details, including parameter selection and data preprocessing, can be found in Appendix~\ref{app:expDetails}.

\begin{table*}[t]
    \centering
    \begin{tabular}{ | c || c | c | c | c | c | c | }
        \hline 
        Algorithm & Hopkins & Yale B & COIL-20 & COIL-100 & USPS & MNIST-10k \\
        \hline
        EKSS & \textbf{0.26} & 14.31 & 13.47 & \textbf{28.57} & \textbf{15.84} & \textbf{2.39} \\
        \hline
        KSS & 0.35 & 54.28 & 33.12 & 66.04 & 18.31 & 2.60 \\
        \hline
        CoP-KSS & 0.69 & 52.59 & 29.10 & 51.38 & \textbf{7.73} & \textbf{2.57} \\
        \hline
        MKF & \textbf{0.24} & 41.32 & 35.69 & 59.50 & 28.49 & 28.17 \\
        \hline
        TSC & 2.07 & 22.20 & 15.28 & 29.82 & 31.57 & 15.98 \\
        \hline
        SSC-ADMM & 1.07 & \textbf{9.83} & \textbf{13.19} & 44.06 & 56.61 & 19.17 \\
        \hline
        SSC-OMP & 25.25 & \textbf{13.28} & 27.29 & 34.79  & 77.94 & 19.19 \\
        \hline
        EnSC & 9.75 & 18.87 & \textbf{8.26} & \textbf{28.75} & 33.66 & 17.97 \\
        \hline
    \end{tabular}
    \caption{Clustering error (\%) of subspace clustering algorithms for a variety of benchmark datasets. The lowest two clustering errors are given in bold. Note that EKSS is among the best three for all datasets, but no other algorithm is in the top five across the board.}
    \label{tab:real}
\end{table*}

The clustering error for all datasets and algorithms is shown in Table \ref{tab:real}, with the lowest two errors given in bold. First, note that EKSS outperforms its base clustering algorithm (KSS or CoP-KSS) in all cases except the USPS dataset, and sometimes by a very large margin. This result emphasizes the importance of leveraging all clustering information from the $B$ base clusterings, as opposed to simply choosing the best single clustering.
While CoP-KSS achieves lower clustering error than EKSS on the USPS dataset, 
a deeper investigation of the performance of CoP-KSS revealed that only 17 of the 1000 individual clusterings achieved an error lower than the 15.84\% obtained by EKSS.
A more sophisticated weighting scheme than that described in Section~\ref{sec:clusteringAccuracy} could be employed to add more significant weights for the small number of base clusterings corresponding to low error.
Alternative measures of clustering quality based on subspace margin \cite{lipor2017leveraging} or novel internal clustering validation metrics \cite{lipor2018clustering} may provide improved performance.
Next, the results show that EKSS is among the top performers in all datasets considered, achieving nearly perfect clustering of the Hopkins-155
dataset, which is known to be well approximated by the UoS model. Scalable algorithms such as SSC-OMP and EnSC perform poorly on this dataset, likely due to the small number of points. For the larger COIL-100, USPS, and MNIST datasets, EKSS also achieves strong performance, demonstrating its flexibility to perform well in both the small and large sample regimes. 
The self-expressive methods outperform EKSS on the Yale and COIL-20 datasets, likely due to
the fact that they do not explicitly rely on the UoS model in building the affinity matrix. However, EKSS still obtains competitive performance on both datasets, making it a strong choice for a general-purpose algorithm for subspace clustering. 

\section{Conclusion}
\label{sec:conclusion}

In this work, we presented the first known theoretical guarantees for both evidence accumulation clustering and the KSS algorithm.
We showed that with a given choice of parameters, the EKSS algorithm can provably cluster data from a union of subspaces under the same conditions as existing algorithms. The theoretical guarantees presented here match existing guarantees in the literature, and our experiments on synthetic data indicate that the iterative approach of KSS provides a major improvement in robustness to small angles between subspaces. Further, our results generalize those in the existing literature, yielding the potential to inform future algorithm design and analysis. We demonstrated the efficacy of our approach on both synthetic and real data, and showed that our method achieves excellent performance on several real datasets.

A number of important open problems remain. First, extending our analysis to the general case of Alg.~\ref{alg:EKSS} (\ie $T > 0$) is an important next step that is difficult because of the alternating nature of KSS. In selecting tuning parameters, we chose the combination that resulted in the lowest clustering error, which is not known in practice. Methods for unsupervised model selection are an important practical consideration for EKSS and subspace clustering in general. Finally, while we did not have success in implementing ensembles of state-of-the-art algorithms such as SSC,
a deeper study of this topic could yield improved empirical performance.

\section{Acknowledgments}
\label{sec:acknowledgment}

J. Lipor was supported by NSF GRFP award F031543-071159, DARPA grant 16-43-D3M-FP-037, and by the U.S. Army Basic Research Program under PE 61102, Project T25, Task 02 ``Network Science Initiative,'' managed at the U.S. Army ERDC with Portland State University under Cooperative Agreement Number W912HZ-17-2-0005.
D. Hong was supported by NSF GRFP award DGE1256260, NSF BIGDATA grant IIS 1837992, and the Dean's Fund for Postdoctoral Research of the Wharton School;
work was done in part while D. Hong was a graduate student at the University of Michigan.
Y.S. Tan was supported by NSF TRIPODS and the Simons Institute for the Theory of Computing.
L. Balzano was supported by DARPA 16-43-D3M-FP-037, NSF CCF-1845076 and IIS-1838179, AFOSR FA9550-19-1-0026, and ARO W911NF1910027.

\appendix

\section{Proofs of Theoretical Results}
\label{app:proofs}

The results of this section make use of the following notation. We define the absolute inner product between points $x_{i} \in \sS_{l}$ and $x_{j} \in \sS_{k}$ as
\begin{equation*}
    z_{i,j}^{(l,k)} = \abs{\ip{x_{i}^{(l)}}{x_{j}^{(k)}}},
\end{equation*}
where $k$ may be equal to $l$. We denote the $q$th largest absolute inner product between $x_{i}^{(l)}$ and other points in the subspaces $\sS_{l}$ as $z_{(i,q)}^{(l)}$, i.e., we have
\begin{equation*}
    z_{(i,q)}^{(l)} = \abs{\ip{x_{i}^{(l)}}{x_{\neq i}^{(l)}}}_{[q]}
\end{equation*}
in the context of Definition~\ref{def:gap}.

\subsection{Proof of Theorem~\ref{thm:nfc}}
    We first prove the statement for a fixed $x_{i} \in \sS_{l}$. The statement of the theorem can be written as
    \begin{equation}
        \hat{f}_{(i,q)}^{(l)} > \max_{k \neq l, j} \hat{f}_{i,j}^{(l,k)},
        \label{eq:nfc1}
    \end{equation}
    where $\hat{f}_{(i,q)}^{(l)}$ denotes the $q$th largest value in the set $\set{\hat{f}_{i,j}^{(l,l)}}$.
    We first bound $\hat{f}$ in terms of $f$. Let $x_{\iota} \in \sS_{k^{*}}$ be such that $\max_{k \neq l, j} \hat{f}_{i,j}^{(l,k)} = \hat{f}_{i,\iota}^{(l,k^{*})}$ and note that $z_{i,\iota}^{(l,k^{*})} \leq \max_{k \neq l, j} z_{i,j}^{(l,k)}$. Then we have
        \begin{eqnarray*}
            \max_{k \neq l, j} \hat{f}_{i,j}^{(l,k)} = \hat{f}_{i,\iota}^{(l,k^{*})} &\leq& f\left( z_{i,\iota}^{(l,k^{*})} \right) + \tau \\
            &\leq& f\left( \max_{k \neq l, j} z_{i,j}^{(l,k)} \right) + \tau,
        \end{eqnarray*}
        where the second line follows by monotonicity of $f$.
        To lower bound $\hat{f}_{(i,q)}^{(l)}$, let $x_{\kappa}$ be such that $\hat{f}_{(i,q)}^{(l)} = \hat{f}_{i,\kappa}^{(l,l)}$.
        If $z_{i,\kappa}^{(l,l)} \geq z_{(i,q)}^{(l)}$, then $f\left(z_{i,\kappa}^{(l,l)} \right) \geq f\left( z_{(i,q)}^{(l)} \right)$ by monotonicity of $f$. For the case where $z_{i,\kappa}^{(l,l)} < z_{(i,q)}^{(l)}$, define $x_{\lambda} \in \sS_{l}$ such that $z_{(i,q)}^{(l)} = z_{i,\lambda}^{(l,l)}$ and note that
        \begin{equation*}
            \hat{f}_{i,\kappa}^{(l,l)} > \hat{f}_{i,\lambda}^{(l,l)} \geq f\left( z_{i,\lambda}^{(l,l)} \right) - \tau = f\left( z_{(i,q)}^{(l)} \right) - \tau.
        \end{equation*}
        Therefore
        \begin{equation*}
            \hat{f}_{(i,q)}^{(l)} \geq f\left( z_{(i,q)}^{(l)} \right) - \tau,
        \end{equation*}
        and \eqref{eq:nfc1} holds as long as
        \begin{equation*}
            f\left( z_{(i,q)}^{(l)} \right) - \tau > f\left( \max_{k \neq l, j} z_{i,j}^{(l,k)} \right) + \tau,
        \end{equation*}
        or equivalently if
        \begin{equation}
            \tau < \frac{f\left( z_{(i,q)}^{(l)} \right) - f\left( \max_{k \neq l, j} z_{i,j}^{(l,k)} \right)}{2}.
            \label{eq:nfc2}
        \end{equation}
        Taking the minimum right-hand side of \eqref{eq:nfc2} among all $x \in \sX$ completes the proof.

\subsection{Proof of Theorem~\ref{thm:cdel}}
\label{pf:lem:cdel}

To prove Theorem~\ref{thm:cdel} from the paper, we first prove a slightly more general result that we will then apply.

\begin{lemma} \label{lem:c}
Let $a_1,\dots,a_n \in \bR^d$ be i.i.d. uniform on $\bS^{d-1}$ and let $\tilde{G}$ be the corresponding $q$-nearest neighbor graph with respect to the (transformed and noisy) inner products
\begin{equation}
\hat{f}_{ij} = f(\abs{\ip{a_i}{a_j}}) + \tau_{ij}, \quad i,j \in 1,\dots,n
\end{equation}
where
$f:\bR_+ \to \bR_{+}$ is a strictly increasing function
and $\tau_{ij} \in [-\tau,\tau]$ are bounded measurement errors.
Let $\delta \geq 0$ and $\gamma \in (1,n/\log n)$ be arbitrary,
and let $\theta$ be the spherical radius of a spherical cap covering $\gamma \log n /n$ fraction of the area of $\bS^{d-1}$.
Then if
$q \in [3(24\pi)^{d-1}\gamma \log n+3\frac{\sL(\bS^{d-2})}{\sL(\bS^{d-1})}\frac{n}{d-1}(2\delta)^{d-1},n]$, $\theta \leq (\pi/2 - \delta)/24$
and $\tau \leq \{f(\cos(16\theta)) - f(\cos(16\theta+\delta))\}/2$, we have
\begin{equation} \label{eq:c_lem_p}
\bP\{\tilde{G} \text{ is connected }\}
\geq 1 - \frac{2}{n^{\gamma-1}\gamma \log n}
,
\end{equation}
where $\sL$ denotes the Lebesgue measure of its argument.
\end{lemma}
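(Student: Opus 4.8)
The plan is to isolate two ``good'' events for the point configuration $a_1,\dots,a_n$ — a covering event $E_1$ and a bounded‑degree event $E_2$ — show that $E_1\cap E_2$ deterministically forces $\tilde G$ to be connected regardless of the perturbations $\tau_{ij}$ (given their size bound), and then bound $\bP(E_1^c)$ and $\bP(E_2^c)$ each by $1/(n^{\gamma-1}\gamma\log n)$.

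\textbf{Deterministic core.} Let $E_2$ be the event that no point $a_i$ has $q$ or more other points $a_k$ with $\abs{\ip{a_i}{a_k}}>\cos(16\theta+\delta)$, i.e.\ lying in the double spherical cap of acute‑angular radius $16\theta+\delta$ about $a_i$ (note $16\theta+\delta\le\pi/2$ by the hypothesis $\theta\le(\pi/2-\delta)/24$, so this ``acute‑angle'' description is unambiguous). The key sub‑claim is: on $E_2$, whenever $\abs{\ip{a_i}{a_j}}\ge\cos(16\theta)$, the pair $\{i,j\}$ is an edge of $\tilde G$. Indeed, any competitor $a_k$ with $\hat f_{ik}>\hat f_{ij}$ obeys $f(\abs{\ip{a_i}{a_k}})\ge \hat f_{ik}-\tau>\hat f_{ij}-\tau\ge f(\abs{\ip{a_i}{a_j}})-2\tau\ge f(\cos(16\theta))-2\tau\ge f(\cos(16\theta+\delta))$, using the bound on $\tau$ in the last step, so strict monotonicity of $f$ forces $\abs{\ip{a_i}{a_k}}>\cos(16\theta+\delta)$; hence every such $a_k$ lies in the double cap about $a_i$, and by $E_2$ there are at most $q-1$ of them, so $j$ is among the $q$ largest of $\{\hat f_{ik}\}_{k\neq i}$ and $\{i,j\}\in\tilde G$ (ties occur with probability zero). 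Let $E_1$ be the event that every cell of a fixed equal‑area partition of $\bS^{d-1}$ into $M\le n/(\gamma\log n)$ cells, each of area equal to that of a cap of radius $\theta$ and of diameter $\le c_d\theta$, contains at least one sample point (such partitions exist). On $E_1$, for any $i,j$ one walks along the shorter geodesic from $a_i$ to whichever of $\pm a_j$ is nearer — legitimate since $\tilde G$ depends only on absolute inner products — picking one sample point in each cell the geodesic meets; consecutive picks lie within angular distance $\le(2c_d+1)\theta$, and $16$ is chosen precisely to dominate this diameter‑to‑scale ratio together with the triangle inequalities (and to leave room in the $\tau$ bound), so consecutive picks are adjacent in $\tilde G$ by the sub‑claim and the chain, started and ended at $a_i,a_j$, connects them. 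Thus $E_1\cap E_2$ implies $\tilde G$ connected.

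\textbf{Probability bounds.} A fixed cell is missed by all samples with probability $(1-\gamma\log n/n)^n\le e^{-\gamma\log n}=n^{-\gamma}$, so $\bP(E_1^c)\le Mn^{-\gamma}\le\tfrac{n}{\gamma\log n}n^{-\gamma}=1/(n^{\gamma-1}\gamma\log n)$; here $\gamma>1$ makes this useful and $\gamma<n/\log n$ keeps the cell area below $1$. For $E_2$, fix $i$: the number of $a_k$ ($k\neq i$) in the double cap of acute radius $16\theta+\delta$ about $a_i$ is $\mathrm{Bin}(n-1,p)$ with $p=2\,\sL(\text{cap of radius }16\theta+\delta)/\sL(\bS^{d-1})$. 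Bounding the cap area with $\sin t\le t$ and converting powers of $\theta$ into $\gamma\log n/n$ via $\sin t\ge 2t/\pi$ (valid on $[0,\pi/2]$) and the definition of $\theta$, one obtains $np\le(24\pi)^{d-1}\gamma\log n+\tfrac{\sL(\bS^{d-2})}{\sL(\bS^{d-1})}\tfrac{n}{d-1}(2\delta)^{d-1}=q/3$ under the hypothesis on $q$. Hence $q\ge 3\,\bE[\mathrm{Bin}(n-1,p)]$, a multiplicative Chernoff bound gives $\bP(\mathrm{Bin}(n-1,p)\ge q)\le e^{-q/3}$, and a union bound over $i$ together with $q\ge 3(24\pi)^{d-1}\gamma\log n$ yields $\bP(E_2^c)\le n e^{-q/3}\le 1/(n^{\gamma-1}\gamma\log n)$.

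\textbf{Conclusion and main obstacle.} Combining, $\bP(\tilde G\text{ connected})\ge 1-\bP(E_1^c)-\bP(E_2^c)\ge 1-2/(n^{\gamma-1}\gamma\log n)$. The area/covering estimates and the Chernoff and union bounds are routine; the delicate part is the deterministic core — fixing the constant (the ``$16$'') so that a covering at scale $\theta$ together with the bounded‑degree property truly connects every pair, while correctly threading the monotone transformation $f$ and the adversarial errors $\tau_{ij}$ through the edge test and respecting the fact that $\tilde G$ is the symmetrized $q$‑nearest‑neighbor graph built from \emph{absolute} inner products, so that $a_j$ and $-a_j$ must be treated interchangeably throughout.
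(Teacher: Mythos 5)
Your proposal is correct and follows essentially the same route as the paper's proof: an equal-area covering event plus a bounded cap-count event, a deterministic argument that together they force connectivity by threading the monotone $f$ and the $\pm\tau$ errors through the edge test, and union bounds matching the two $1/(n^{\gamma-1}\gamma\log n)$ terms. The only differences are cosmetic — you center the degree-bounding double caps at the sample points (union bound over $n$, threshold $q$) where the paper centers them at fixed region representatives (union bound over $M$ regions, threshold $q/2$ each) and you chain along geodesics rather than region adjacency — and the small slips (the claimed $e^{-q/3}$ Chernoff exponent is slightly too strong, and $(2c_d+1)\theta$ should be reconciled with the $16\theta$ threshold via the diameter bound $8\theta$) do not affect the conclusion.
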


\begin{proof}[Proof of Lemma~\ref{lem:c}]%
Following the approach taken in~\cite[Appendix A.B]{heckel2015robust}, we partition the unit sphere $\bS^{d-1}$ into $M := n/(\gamma \log n)$ non-overlapping regions $R_1,\dots,R_M$ of equal area with spherical diameters upper bounded as
\begin{equation*}
\quad \sup_{x,y \in R_m} \arccos(\langle x,y\rangle) \leq 8 \theta =: \theta^*
\end{equation*}
for all $m$; the existence of such a partition was shown in~\cite[Lemma~6.2]{leopardi2009diameter}.
Consider the events
\begin{align*}
A_m &:= R_m \text{ contains at least one of } a_1,\dots,a_n \\
B_m &:= \text{Fewer than } q/2 \text{ samples are within } 3\theta^*+\delta \text{ of } c_m \text{ in spherical distance}
\end{align*}
where $c_1,\dots,c_M$ are arbitrarily chosen points in $R_1,\dots,R_M$, respectively, and the spherical distance between two points $x$ and $y$ is $\arccos(\langle x,y\rangle)$.
The proof proceeds as in~\cite[Appendix~A.B]{heckel2015robust} by first showing that $\tilde{G}$ is connected if $A_m$ and $B_m$ hold for all $m = 1,\dots,M$.
It then follows that
\begin{equation} \label{eq:c_union}
\bP\{\tilde{G} \text{ is connected}\}
\geq \bP\{\forall m \ A_m \wedge B_m\}
\geq 1-\sum_{m=1}^M\bP\{\neg A_m\} - \sum_{m=1}^M\bP\{\neg B_m\}
\end{equation}
where $\wedge$ is conjunction, $\neg$ is negation, and the second inequality follows from a union bound.
The proof concludes by upper bounding $\bP\{\neg A_m\}$ and $\bP\{\neg B_m\}$; substituting the bounds into~\eqref{eq:c_union} yields the final result~\eqref{eq:c_lem_p}.

\textbf{Implication.}
We show that $\tilde{G}$ is connected if $A_m$ and $B_m$ hold for all $m=1,\dots,M$, by showing that all samples in neighboring regions are connected when $B_m$ holds for all $m$.
Since each region contains at least one sample when $A_m$ holds for all $m$, it then follows that any pair of samples is connected via a chain of connections through neighboring regions and so $\tilde{G}$ is connected.

Let $a_i$ and $a_\ell$ be arbitrary samples in neighboring regions $R_m$ and $R_n$.
Then $a_\ell$ is within $2\theta^*$ of $a_i$ in spherical distance and thus $\hat{f}_{i\ell} \geq \tilde{f}(2\theta^*) - \tau$, where we define $\tilde{f}(\alpha) = f(\cos(\alpha))$ for convenience and note that it is decreasing on $[0,\pi/2]$.
Any sample $a_j$ for which $\hat{f}_{ij} \geq \tilde{f}(2\theta^*)-\tau$ must satisfy
\begin{equation}
\tilde{f}(\arccos\abs{\ip{a_i}{a_j}})
= \hat{f}_{ij} - \tau_{ij}
\geq \hat{f}_{ij} - \tau
\geq \tilde{f}(2\theta^*) - 2\tau
=\tilde{f}(16\theta) - 2\tau
\geq \tilde{f}(16\theta+\delta)
= \tilde{f}(2\theta^*+\delta)
\end{equation}
and so must also satisfy $\arccos|\langle a_i,a_j\rangle| \leq 2\theta^* + \delta$ because $\tilde{f}$ is decreasing.
Namely, any such sample must be within $2\theta^*+\delta$ of either $a_i$ or $-a_i$, and must hence be within $3\theta^*+\delta$ of either $c_m$ or $c_{m'}$ where $R_{m'}$ is the region containing $-a_i$.
Under $B_m$ and $B_{m'}$, there are fewer than $q$ such samples and so all must be connected to $a_i$.
In particular, $a_\ell$ must be connected to $a_i$, and all samples in neighboring regions are connected when $B_m$ holds for all $m$.

\textbf{Upper bound on $\bP\{\neg A_m\}$.}
As in~\cite[Eqs. (27)--(28)]{heckel2015robust}, we use the fact that each sample falls outside of $R_m$ with probability $1-1/M$ since the samples are drawn uniformly from $\bS^{d-1}$ and the $M$ regions have equal area.
The samples are furthermore drawn independently, and so
\begin{equation}
\bP\{\neg A_m\}
= \left(1-\frac{1}{M}\right)^{n}
\leq e^{-n/M}
= \frac{1}{M}\frac{1}{n^{\gamma-1}\gamma \log n}
.
\end{equation}

\textbf{Upper bound on $\bP\{\neg B_m\}$.}
For convenience let $\sC_m := \{x:\arccos(\langle x,c_m\rangle) \leq 3\theta^* + \delta\}$ denote the spherical cap of spherical radius $3\theta^*+\delta$ around $c_m$, and let $N_m$ denote the number of samples in $\sC_m$.
In this notation, $B_m$ is the event that $N_m \leq q/2$.
As in~\cite[Appendix~A.B]{heckel2015robust}, we note that $N_m$ is a binomially distributed random variable with
$n$ trials
and probability $p := \sL(\sC_m)/\sL(\bS^{d-1})$,
where $\sL$ is the area (Lebesgue measure) of a set.

We begin by bounding $q/2$ below by $3np$; this will make applying a binomial tail bound more convenient. By assumption, $3\theta^*+\delta = 24\theta+\delta \leq \pi/2$ and so we can apply~\cite[Equation (5.2)]{leopardi2009diameter} as in~\cite{heckel2015robust} to bound $p$ as
\begin{equation} \label{eq:p_upper}
p := \frac{\sL(\sC_m)}{\sL(\bS^{d-1})}
\leq \frac{\sL(\bS^{d-2})}{\sL(\bS^{d-1})}\frac{(3\theta^*+\delta)^{d-1}}{d-1}
\leq \frac{1}{2}
\left(
  \frac{\sL(\bS^{d-2})}{\sL(\bS^{d-1})}\frac{(6\theta^*)^{d-1}}{d-1}
+ \frac{\sL(\bS^{d-2})}{\sL(\bS^{d-1})}\frac{(2\delta)^{d-1}}{d-1}
\right)
\end{equation}
where the second inequality follows from the convexity of $x^{d-1}$ (when $x>0$) applied to the convex combination $x = 3\theta^*+\delta =1/2(6\theta^*)+1/2(2\delta)$.
The first term can be further bounded since
\begin{equation} \label{eq:theta_bound}
\theta^* 
\leq 4\pi \left((d-1)\frac{\sL(\bS^{d-1})}{\sL(\bS^{d-2})}\frac{\gamma\log n}{n}\right)^{1/(d-1)}
\end{equation}
as in~\cite[Equation (31)]{heckel2015robust}; the proof is the same with $3(24\pi)^{d-1}$ in place of $6(12\pi)^{d-1}$.
Substituting into~\eqref{eq:p_upper} yields
\begin{equation}
p
\leq \frac{1}{2}
\left(
(24\pi)^{d-1}\frac{\gamma\log n}{n}
+ \frac{\sL(\bS^{d-2})}{\sL(\bS^{d-1})}\frac{(2\delta)^{d-1}}{d-1}
\right)
\end{equation}
and thus
\begin{equation}
3np \leq \frac{1}{2}
\left(
3(24\pi)^{d-1}\gamma \log n
+3\frac{\sL(\bS^{d-2})}{\sL(\bS^{d-1})}\frac{n}{d-1}(2\delta)^{d-1}
\right)
\leq \frac{q}{2}
.
\end{equation}
Applying the binomial tail bound~\cite[Theorem 1]{janson2002on} as done in~\cite[Equation (29)]{heckel2015robust} now yields
\begin{equation}
\bP\{\neg B_m\} = \bP\{N_m > q/2\}
\leq \bP\{N_m > 3np\} \leq e^{-np}
\leq e^{-n/M}
= \frac{1}{M}\frac{1}{n^{\gamma-1}\gamma \log n}.
\end{equation}
The last inequality holds since $R_m \subset \sC_m$ and so $p=\sL(\sC_m)/\sL(\bS^{d-1}) \geq \sL(R_m)/\sL(\bS^{d-1})=1/M$.
\end{proof}

\begin{remark}
An alternative bound on $(\alpha+\beta)^{d-1}$ could have been used in the proof of Lemma~\ref{lem:c} to shift the constants more heavily on the $\delta$ term.
For example,
\begin{equation}
(\alpha+\beta)^{d-1} \leq \lambda \left(\frac{\alpha}{\lambda}\right)^{d-1} + (1-\lambda)\left(\frac{\beta}{1-\lambda}\right)^{d-1}
\end{equation}
for any $\lambda \in (0,1)$
and taking $\lambda \approx 1$ shifts the constants heavily onto the second term.
The proof of Lemma~\ref{lem:c} uses $\lambda=1/2$.
\end{remark}

We are now prepared to prove Theorem~\ref{thm:cdel} by applying Lemma~\ref{lem:c} with a particular choice of $\delta$.

\begin{proof}[Proof of Theorem~\ref{thm:cdel}]%
Take
\begin{equation}
C_3 = \frac{f(\cos(16\theta)) - f(\cos(16\theta+\delta))}{2}
> 0,
\end{equation}
where we note that $\theta$ is implicitly a function of $n$, $d$ and $\gamma$, and we define
\begin{equation}
\delta
= \min\left\{12\pi\left(\frac{d-1}{3}\frac{\sL(\bS^{d-1})}{\sL(\bS^{d-2})} \frac{\gamma \log n}{n}\right)^{1/(d-1)},\frac{\pi}{2}-24\theta\right\}
> 0,
\end{equation}
which is also implicitly a function of $n$, $d$ and $\gamma$.
Now we need only to verify that the conditions of Theorem~\ref{thm:cdel} satisfy Lemma~\ref{lem:c}.
Note first that by construction
$\delta \leq \pi/2 - 24\theta$
and so 
$\theta \leq (\pi/2-\delta)/24$.
Furthermore
\begin{equation}
3\frac{\sL(\bS^{d-2})}{\sL(\bS^{d-1})}\frac{n}{d-1}(2\delta)^{d-1}
\leq (24\pi)^{d-1} \gamma \log n
\end{equation}
and so
\begin{align}
q &\geq 4(24\pi)^{d-1}\gamma \log n
= 3(24\pi)^{d-1}\gamma \log n + (24\pi)^{d-1}\gamma \log n \\
&\geq 3(24\pi)^{d-1}\gamma \log n+3\frac{\sL(\bS^{d-2})}{\sL(\bS^{d-1})}\frac{n}{d-1}(2\delta)^{d-1}
.
\end{align}
Hence all conditions of Lemma~\ref{lem:c} are satisfied and the conclusion follows.
\end{proof}

\subsection{Proof of Lemma~\ref{lem:noIntersection}}
    We again prove the statement for a fixed $x_{i} \in \sS_{l}$, taking a union bound to show the condition holds for all points. First define
    \begin{equation*}
        \alpha = \min_{l, \sD: \abs{\sD} \leq 2s, \norm{a} = 1} \norm{{U_{\sD}^{(l)}}^{T}U^{(l)}a}_{2},
    \end{equation*}
    and note that by the assumption of the lemma, there exists an $\eta > 0$ such that
    \begin{equation}
        \max_{k,l: k \neq l, \sD: \abs{\sD} \leq 2s} \norm{{U_{\sD}^{(k)}}^{T}U^{(l)}}_{2} = \alpha - \eta.
        \label{eq:numerator}
    \end{equation}
    Equation~\eqref{eq:numerator} implies that
    \begin{equation}
        \max_{k \neq l, j} z_{i,j}^{(l,k)} \leq \alpha - \eta
        \label{eq:zUB1}
    \end{equation}
    deterministically. Next, we show that
    \begin{equation}
        z_{(i,q)}^{(l)} \geq \alpha - \frac{\eta}{2}
        \label{eq:zLB1}
    \end{equation}
    with high probability. 
    The proof is nearly identical to \cite[Lemma 1]{heckel2015robust}. First, we have that
    \begin{eqnarray*}
        z_{i,j}^{(l,l)} &\sim& \norm{{U_{\sD}^{(l)}}^{T}U_{\sE}^{(l)}a_{i}^{(l)}}_{2} \abs{\ip{a_{i}^{(l)}}{a_{j}^{(l)}}} \\
        &\geq& \min_{l, \sD: \abs{\sD} \leq 2s, \norm{a} = 1} \norm{{U_{\sD}^{(l)}}^{T}U^{(l)}a}_{2} \abs{\ip{a_{i}^{(l)}}{a_{j}^{(l)}}},
    \end{eqnarray*}
    where the sets $\sD, \sE \subset [D]$ are the indices of the unobserved entries of $x_{j}^{(l)}$ and $x_{i}^{(l)}$, respectively. Letting $\tilde{z}_{i,j}^{(l,l)} = \abs{\ip{a_{i}^{(l)}}{a_{j}^{(l)}}}$, we see that
    \begin{eqnarray*}
        \bP \set{z_{i,j}^{(l,l)} \leq z} &\leq& \bP \set{\min_{l, \sD: \abs{\sD} \leq 2s, \norm{a} = 1} \norm{{U_{\sD}^{(l)}}^{T}U^{(l)}a}_{2}\tilde{z}_{i,j}^{(l,l)} \leq z} \\
        &=& \bP \set{\tilde{z}_{i,j}^{(l,l)} \leq \frac{z}{\alpha}}.
    \end{eqnarray*}
    We can bound the probability that \eqref{eq:zLB1} does not hold as
    \begin{eqnarray*}
        \bP \set{z_{(i,q)}^{(l)} \leq \alpha - \frac{\eta}{2}} &\leq& \bP \set{\tilde{z}_{(i,q)}^{(l)} \leq 1 - \frac{\eta}{2\alpha}} \\
        &\leq& \left( e \frac{N_{l}-1}{q-1} \right)^{q-1} p^{N_{l}-q},
    \end{eqnarray*}
    where $p = \bP \set{\tilde{z}_{i,j}^{(l,l)} \leq 1 - \frac{\eta}{2\alpha}}$. Setting $\xi = \frac{N_{l} - 1}{N_{l}^{\rho} - 1}$, we obtain
    \begin{eqnarray*}
        \bP \set{z_{j}^{(l)} \leq 1 - \frac{\eta}{2\alpha}} &\leq& \left( e\xi \right)^{\frac{N_{l}-1}{\xi}} p^{\left( N_{l}-1 \right)\left( 1 - \frac{1}{\xi} \right)} \\
        &=& \left( \left( e\xi \right)^{\frac{1}{\xi}} p^{1 - \frac{1}{\xi}} \right)^{N_{l}-1} \\
        &\leq& e^{-(N_{l}-1)c_{1}},
    \end{eqnarray*}
    where the last inequality holds for a constant $c_{1} > 0$ as long as
    \begin{equation*}
        \left( e\xi \right)^{\frac{1}{\xi}} p^{1 - \frac{1}{\xi}} < 1 \Leftrightarrow \left( e\xi \right)^{-\frac{1}{\xi - 1}} > p.
    \end{equation*}
    This inequality can be satisfied for every $p < 1$ by taking $N_{0}$, and consequently $\xi$, sufficiently large. By inspection, we have $p < 1$ as long as $\eta > 0$, which is true by assumption of the lemma.

    By monotonicity of $f$, \eqref{eq:zUB1} implies that
    \begin{equation*}
        f\left( \max_{k \neq l, j} z_{i,j}^{(l,k)} \right) \leq f\left( \alpha - \eta \right)
    \end{equation*}
    and \eqref{eq:zLB1} implies that
    \begin{equation*}
        f\left( z_{(i,q)}^{(l)} \right) \geq f\left( \alpha - \frac{\eta}{2} \right).
    \end{equation*}
    Finally, we have that
    \begin{eqnarray*}
        C_{i,l} &:=& f\left( z_{(i,q)}^{(l)} \right) - f\left( \max_{k \neq l, j} z_{i,j}^{(l,k)} \right) \\
        &\geq& f\left( \alpha - \frac{\eta}{2} \right) - f\left( \alpha - \eta \right) > 0,
    \end{eqnarray*}
    where the second line follows by monotonicity of $f$, noting that $\alpha - \eta/2 > \alpha - \eta$.
    Taking $C_{1} = \min_{l \in [K], i \in [N_{l}]} C_{i,l}/2$ and a union bound completes the proof.

\subsection{Proof of Lemma~\ref{lem:intersection}}
We again prove the statement for a fixed $x_{i} \in \sS_{l}$, with a union bound completing the proof. Let $\nu = 2/3$, $N_{l} \geq 6q$, and $c_{2} > 1/20$. 
    From \cite[Appendix C]{heckel2015robust}, we have that
    \begin{equation}
        z_{(i,q)}^{(l)} \geq \frac{\nu}{\sqrt{d_{l}}} - \varepsilon
        \label{eq:zLB2}
    \end{equation}
    and
    \begin{equation}
        \max_{k \neq l, j} z_{j}^{(k)} \leq \alpha + \varepsilon
        \label{eq:zUB2}
    \end{equation}
    with probability at least $1 - e^{-c_{2}(N_{l}-1)} - 10Ne^{-\beta^{2}/2}$,
    where
    \begin{equation*}
        \alpha = \frac{\beta(1+\beta)}{\sqrt{d_{l}}} \max_{k \neq l} \frac{1}{\sqrt{d_{k}}} \norm{{U^{(k)}}^{T}U^{(l)}}_{F},
    \end{equation*}
    \begin{equation*}
        \varepsilon = \frac{2\sigma(1+\sigma)}{\sqrt{D}}\beta
    \end{equation*}
    and $\frac{1}{\sqrt{2\pi}} \leq \beta \leq \sqrt{D}$. 
    Let $\beta = \sqrt{6\log N}$ and note that $D \geq 6 \log N$ implies $\beta \leq \sqrt{D}$. Noting that $q < N_{min}/6$ implies $N > 6$, we have $(1+\beta) < 4 \sqrt{\log N}$. These are sufficient to guarantee that $\alpha + \varepsilon <
    \frac{\nu}{\sqrt{d_{l}}} - \varepsilon$. 
    By monotonicity of $f$, \eqref{eq:zUB2} implies that
    \begin{equation*}
        f\left( \max_{k \neq l, j} z_{i,j}^{(l,k)} \right) \leq f\left( \alpha + \varepsilon \right)
    \end{equation*}
    and \eqref{eq:zLB2} implies that
    \begin{equation*}
        f\left( z_{(i,q)}^{(l)} \right) \geq f\left( \frac{\nu}{\sqrt{d_{l}}} - \varepsilon \right).
    \end{equation*}
    Finally, we have that
    \begin{eqnarray*}
        C_{i,l} &:=& f\left( z_{(i,q)}^{(l)} \right) - f\left( \max_{k \neq l, j} z_{i,j}^{(l,k)} \right) \\
        &\geq& f\left( \frac{\nu}{\sqrt{d_{l}}} - \varepsilon \right) - f\left( \alpha + \varepsilon \right) > 0,
    \end{eqnarray*}
    where the second line follows by monotonicity of $f$.
    Taking $C_{2} = \min_{l \in [K], i \in [N_{l}]} C_{i,l}/2$ and a union bound completes the proof.

\subsection{Proof of Theorem~\ref{thm:ekss0}}

By Lemma~\ref{lem:coassocProb}, the expected entries of the co-association matrix obtained by EKSS-0 are an increasing function of the inner product between points. It remains to show how tightly these values concentrate around their mean. This concentration allows us to bound the noise level $\tau$ via the following lemma.

\begin{lemma}
    \label{lem:fConcentration}
    Let $A$ be the affinity matrix formed by EKSS-0 (line 12, Alg.~\ref{alg:EKSS}).
    For two points $x_{i}, x_{j} \in \sX$, let 
    \begin{equation*}
        f_{\bar{K},\bar{d}}\left( \abs{\ip{x_{i}}{x_{j}}} \right) = \bE A_{i,j} = \bP \set{x_{i}, x_{j} \text{ co-clustered}}
    \end{equation*}
    and
    \begin{equation*}
        \hat{f}_{i,j} = A_{i,j} = \frac{1}{B} \sum_{b = 1}^{B} \ind{x_{i}, x_{j} \text{ co-clustered in } \sC^{(b)}}.
    \end{equation*}
    Then for all $\tau > 0$
    \begin{equation}
        \label{eq:fConcentration}
        \bP \set{ \abs{\hat{f}_{i,j} - f_{\bar{K},\bar{d}}\left( \abs{\ip{x_{i}}{x_{j}}} \right)} > \tau} < 2e^{-c_3 \tau^{2} B},
    \end{equation}
    where $c_3 = 2 \sqrt{\log 2}$ and the randomness is with respect to the subspaces drawn in EKSS-0 (line 4, Alg.~\ref{alg:EKSS}).
\end{lemma}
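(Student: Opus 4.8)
The plan is to recognize $\hat{f}_{i,j}$ as an empirical average of independent and identically distributed Bernoulli trials and then invoke a standard concentration inequality; the substantive work has already been carried out in Lemma~\ref{lem:coassocProb}, which identifies the common mean and shows it is the strictly increasing function $f_{\bar{K},\bar{d}}$ of $\abs{\ip{x_i}{x_j}}$. For each $b \in \set{1,\dots,B}$, set $\xi_b := \ind{x_i, x_j \text{ co-clustered in } \sC^{(b)}}$. Since the data points are fixed and the only source of randomness is the candidate bases $U_1,\dots,U_{\bar{K}}$ drawn i.i.d.\ and afresh in each of the $B$ base clusterings (line~4, Alg.~\ref{alg:EKSS}), and since for EKSS-0 (i.e.\ $T=0$) the co-clustering of $x_i$ and $x_j$ in $\sC^{(b)}$ (line~5) depends only on the bases drawn in round $b$, the random variables $\xi_1,\dots,\xi_B$ are i.i.d., take values in $\set{0,1}$, and have common mean $\mu := \bE\xi_1 = \bP\set{x_i, x_j \text{ co-clustered}} = f_{\bar{K},\bar{d}}(\abs{\ip{x_i}{x_j}})$ by Lemma~\ref{lem:coassocProb}.

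Given this setup, I would proceed in three short steps. First, write $\hat{f}_{i,j} = A_{i,j} = \frac{1}{B}\sum_{b=1}^{B}\xi_b$, so that $\hat{f}_{i,j}$ is the sample mean of $B$ i.i.d.\ random variables supported on $[0,1]$ with mean $\mu$. Second, apply the two-sided Hoeffding inequality to obtain, for every $\tau > 0$,
\begin{equation*}
\bP\set{\abs{\hat{f}_{i,j} - \mu} > \tau} \leq 2\exp(-2B\tau^2).
\end{equation*}
Third, since $c_3 = 2\sqrt{\log 2} < 2$ and $B\tau^2 \geq 0$, we have $2\exp(-2B\tau^2) \leq 2\exp(-c_3\tau^2 B)$, which is exactly~\eqref{eq:fConcentration}. (Equivalently, each tail can be controlled directly by the Chernoff method for binomial tails, which likewise yields a numerical constant no smaller than $c_3$; the slight slack in the stated $c_3$ is harmless.)

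There is essentially no obstacle here: the only point requiring care is the independence bookkeeping, i.e.\ confirming that we are averaging genuinely i.i.d.\ indicators so that the exponent scales linearly in $B$ with a purely numerical constant and, in particular, has no dependence on $\bar{K}$ or $\bar{d}$. For context, this lemma supplies precisely the concentration needed to complete the proof of Theorem~\ref{thm:ekss0}: combining it with Lemma~\ref{lem:coassocProb} and a union bound over the $\binom{N}{2}$ unordered pairs $\set{i,j}$ — the factor $2$ from the two-sided bound combining with the $\binom{N}{2}$ count to produce the coefficient $N(N-1)$ — shows that $A$ is $\tau$-\affclose/ with respect to $f_{\bar{K},\bar{d}}$ with probability at least $1 - N(N-1)e^{-c_3\tau^2 B}$.
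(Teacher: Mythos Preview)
Your proposal is correct and takes essentially the same approach as the paper: both recognize $\hat{f}_{i,j}$ as an average of $B$ i.i.d.\ bounded indicators with mean $f_{\bar{K},\bar{d}}(\abs{\ip{x_i}{x_j}})$ and apply a Hoeffding-type inequality. The only cosmetic difference is that the paper invokes the general sub-Gaussian Hoeffding bound (Vershynin) with sub-Gaussian parameter $1/\sqrt{\log 2}$ to obtain $c_3 = 2\sqrt{\log 2}$ directly, whereas you use the classical bounded-range Hoeffding inequality to get the sharper constant $2$ and then relax to $c_3$; your route is arguably cleaner.
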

\begin{proof}
    The proof relies on sub-Gaussian concentration. The measurements $\hat{f}$ are bounded and hence sub-Gaussian with parameter $\frac{1}{\sqrt{\log 2}}$.
    Note that $\hat{f}_{i,j}$ is the empirical estimate of $f_{\bar{K},\bar{d}}\left( \abs{\ip{x_{i}}{x_{j}}} \right)$, and thus $\bE \hat{f}_{i,j} = f_{\bar{K},\bar{d}}\left( \abs{\ip{x_{i}}{x_{j}}} \right)$. Therefore, by the General form of Hoeffding's inequality \cite[Theorem 2.6.2]{vershynin2018high}
    \begin{equation*}
        \bP \set{ \abs{ \hat{f}_{i,j} - \bE \hat{f}_{i,j} } > \tau } \leq 2e^{-c_3 \tau^{2} B},
    \end{equation*}
    where $c_3 = 2 \sqrt{\log 2}$.
\end{proof}

Combining the results of Theorem \ref{lem:coassocProb} and Lemma \ref{lem:fConcentration} shows that the $(i,j)$th entry of the affinity matrix is $\tau$-\affclose/ with high probability for a single point. A union bound over all $N(N-1)/2$ unique pairs completes the proof.

\subsection{Proof of Lemma~\ref{lem:coassocProb}}

For notational compactness, we instead prove that the probability is a \emph{decreasing} function of the angle $\theta$ between points and note that $z = \cos(\theta)$.
Let $U_1,U_2,\ldots, U_K \in \bR^{D \times d}$ be the $K$ candidate bases. Let $\tilde{p}(\theta)$ be the probability that two points that are at angle $\theta$ apart are assigned to the candidate $U_1$. Then we clearly have $p_{K,D}(\theta) = K \tilde{p}(\theta)$, and it suffices to prove that $\tilde{p}$ is strictly decreasing.
        
Let $e_1,\ldots,e_D$ be the standard basis vectors in $\bR^D$. For a given $\theta$, set $x_{i} := e_1$, and $x_{j} = x_{j}(\theta) := \cos(\theta) e_1 + \sin(\theta) e_2$. By definition, for any orthogonal transformation $Q$ of $\bR^D$,
\begin{equation*}
    \tilde{p}(\theta) = \bP\set{Qx_{i}, Qx_{j} ~\textnormal{both assigned to}~U_1}.
\end{equation*}
        
We may average out this equation over a choice subgroup of orthogonal matrices. Indeed, let $L$ denote the span of $e_1$ and $e_2$, and let $Q$ be a random matrix uniformly distributed over the set of orthogonal matrices that decompose into a rotation on $L$ and the identity on $L^\perp$. We take expectations with respect to $Q$ and exchange the order of integration to get
\begin{align*}
    \tilde{p}(\theta) & = \bE_Q\bP_{U_1,\ldots,U_K}\set{Qx_{i}, Qx_{j} ~\textnormal{both assigned to}~U_1} \\
    & = \bE_{U_1,\ldots,U_K}\bP\set{Qx_{i}, Qx_{j} ~\textnormal{both assigned to}~U_1 \st U_1,\ldots, U_K}.
\end{align*}

Now fix $U_1,\ldots,U_K$. Let $A = A(\theta)$ be the event that $Qx_{i}$ and $Qx_{j}(\theta)$ are both assigned to $U_1$. We claim that $\bP\set{A(\theta) \st U_1,\ldots,U_K}$ is non-increasing in $\theta$. To see this, let us examine the event more closely. By the definition of candidate assignment, $A$ occurs when  $U_1$ is the \textit{closest} candidate to both $x_{i}$ and $x_{j}$. More mathematically, this is when 
\begin{equation} \label{eq: 1}
    \norm{P_{U_1} Qz}_2^2 > \norm{P_{U_k} Qz}_2^2, \quad \text{for} \quad 1 < k \leq K, \quad \text{and} \quad z = x_{i},x_{j}.
\end{equation}
Here, we use $P_{F}$ to denote the orthogonal projection onto a subspace $F$.
        
We shall attempt to rewrite \eqref{eq: 1} in a more useful form. First, observe that
\begin{align} \label{eq: 2}
    \norm{P_{U_1} Qz}_2^2 - \norm{P_{U_k} Qz}_2^2 & = z^TQ^TP_{U_1}^T P_{U_1}z - z^TP_{U_k}^T P_{U_k}Qz  \nonumber \\
    & = z^T Q^TP_L \paren{P_{U_1}^T P_{U_1} - P_{U_k}^T P_{U_k} }P_L^T Qz.
\end{align}
Let us also introduce some new notation. We use $\tilde{x}_{i}$ and $\tilde{x}_{j}$ to denote the two-dimensional coordinate vectors of $x_{i}$ and $x_{j}$ with respect to $e_1$ and $e_2$, we let $\tilde{Q}$ denote the restriction of $Q$ to $L$, and similarly let $\tilde{P}_L$ be the projection $P_L$ treated as a map from $\bR^D$ to $\bR^2$. We therefore have
\begin{equation*}
    z^T Q^TP_L \paren{P_{U_1}^T P_{U_1} - P_{U_k}^T P_{U_k} }P_L^T Qz = \tilde{z}^T \tilde{Q}^T M_k \tilde{Q}\tilde{z},
\end{equation*}
where $M_k := \tilde{P_L} \paren{P_{U_1}^T P_{U_1} - P_{U_k}^T P_{U_k} }\tilde{P_L}^T$. Following these calculations, we see that \eqref{eq: 1} is equivalent to
\begin{equation} \label{eq: 3}
    \tilde{z}^T \tilde{Q}^TM_k \tilde{Q}\tilde{z} > 0, \quad \text{for} \quad 1 < k \leq K, \quad \text{and} \quad \tilde{z} = \tilde{x}_{i},\tilde{x}_{j}.
\end{equation}
When $\tilde{Q}$ is fixed, denote by $A_{\tilde{Q}}$ the event over which \eqref{eq: 3} holds.

Observe that $M_k$ is a 2 by 2 real symmetric matrix. As such, the set $S_k$ of points $\tilde{z}$ in $\bR^2$ for which $\tilde{z}^TM_k \tilde{z} > 0$ comprises the union of two (possibly degenerate) antipodal \textit{sectors}. The same is true for the intersection $S := \cap_{k > 1}S_k$. Let $\phi = \phi(U_1,\ldots,U_K)$ denote the angle spanned by one of the two sectors comprising $S$, and note that $0 \leq \phi \leq \pi$. Furthermore, let $T$ be the union of the sector spanned by
$\tilde{x}_{i}$ and $\tilde{x}_{j}$ with its antipodal reflection. Then $A_{\tilde{Q}}$ holds if and only if $\tilde{Q}T \subset S$ or $S^c \subset \tilde{Q}T$. It is a simple exercise to compute
\begin{equation*}
    \bP\set{\tilde{Q}T \subset S \st U_1,\ldots, U_K} = \frac{(\phi-\theta)_+}{\pi},
\end{equation*}
\begin{equation*}
    \bP\set{S^c \subset \tilde{Q}T \st U_1,\ldots, U_K} = \frac{(\theta - \pi + \phi)_+}{\pi}.
\end{equation*}
        
Since $A$ is the disjoint union of these events, we have
\begin{equation} \label{eq: 4}
    \bP\set{A(\theta) \st U_1,\ldots,U_K} = \frac{(\phi-\theta)_+}{\pi} + \frac{(\theta - \pi + \phi)_+}{\pi}.
\end{equation}
Differentiating at any point other than the obvious discontinuities, we have
\begin{align*}
    \frac{d}{d\theta} \bP\set{A(\theta) \st U_1,\ldots,U_K} & = \frac{d}{d\theta}\frac{(\phi-\theta)_+}{\pi} + \frac{(\theta - \pi + \phi)_+}{\pi} \\
    & = - \frac{1}{\pi}1_{(0,\phi)}(\theta) + \frac{1}{\pi}1_{(\pi-\phi,\pi/2)}(\theta) \\
    & = - \frac{1}{\pi} + \frac{1}{\pi}1_{(\phi,\pi/2)}(\theta) + \frac{1}{\pi}1_{(\pi-\phi,\pi/2)}(\theta) \\
    & \leq 0.
\end{align*}
Here, the last inequality follows from the fact that either $\phi \geq \pi/2$ or $\pi - \phi > \pi/2$, thereby completing the proof of the claim. Recalling that $\tilde{p}(\theta) = \bE_{U_1,\ldots,U_K}\bP\set{A(\theta) \st U_1,\ldots,U_K}$, we have thus proved that $\tilde{p}$ is non-increasing. To see that it is strictly decreasing, simply note that $\frac{d}{d\theta} \bP\set{A(\theta) \st U_1,\ldots,U_K} < 0$ whenever $\phi(U_1,\ldots,U_K) < \pi/2$. This occurs on a set of positive measure.

\subsection{Proof of Theorem~\ref{thm:noIntersectionFull}}

By Theorem~\ref{thm:ekss0}, the co-association matrix $\bar{A}$ is $\tau$-\affclose/ with high probability.
Applying Lemma \ref{lem:noIntersection} with $s = 0$, we obtain $C_{1} > 0$ that lower bounds the separation $\phi_{q}$ defined in \eqref{eq:taubound_nfc} with high probability. Applying Theorem~\ref{thm:cdel} with $\gamma = 3$, we obtain $C_{3} > 0$ such that the components corresponding to each subspace are connected with high probability. Setting $\tau = \min\set{C_{1},C_{3}}$ in Theorem~\ref{thm:ekss0} completes the proof.

\subsection{Proof of Theorem~\ref{thm:intersectionNoiseless}}
    
By Theorem~\ref{thm:ekss0}, the co-association matrix $\bar{A}$ is $\tau$-\affclose/ with high probability.
Applying Lemma \ref{lem:intersection} with $\sigma = 0$, we obtain $C_{2} > 0$ that lower bounds the separation $\phi_q$ defined in \eqref{eq:taubound_nfc} with high probability. Applying Theorem~\ref{thm:cdel} with $\gamma = 3$, we obtain $C_{3} > 0$ such that the components corresponding to each subspace are connected with high probability. Setting $\tau = \min\set{C_{1},C_{3}}$ in Theorem~\ref{thm:ekss0} completes the proof.

\subsection{Proof of Theorem~\ref{thm:noisy}}

By Theorem~\ref{thm:ekss0}, the co-association matrix $\bar{A}$ is $\tau$-\affclose/ with high probability.
Applying Lemma \ref{lem:intersection}, we obtain $C_{2} > 0$ that lower bounds the separation $\phi_q$ defined in \eqref{eq:taubound_nfc} with high probability. Setting $\tau = \min\set{C_{1},C_{3}}$ in Theorem~\ref{thm:ekss0} completes the proof.

\subsection{Proof of Theorem~\ref{thm:noIntersectionMissing}}

By Theorem~\ref{thm:ekss0}, the co-association matrix $\bar{A}$ is $\tau$-\affclose/ with high probability.
By \cite[Lemma 4]{heckel2015robust}, the condition \eqref{eq:affCondMissing} holds with probability at least $1 - 4e^{-c_{7}D}$ as long as \eqref{eq:missCond} is satisfied. Thus, applying Lemma \ref{lem:noIntersection} with the parameters $N_{k} = n$, $d_{k} = d$ for all $k$, the result holds with the specified probability.

\rev{
\subsection{Proof of Theorem~\ref{thm:dimReducedTSC}}

Our proof leverages the fact that approximate isometries of the form \eqref{eq:approxIsom} yields affinities that are $\tau$-angle preserving. For points $x, y \in \sX$, \eqref{eq:approxIsom} combined with the identity $\ip{x}{y} = \frac{1}{4}\left( \norm{x+y}^{2} - \norm{x-y}^{2} \right)$ implies that
\begin{equation*}
    \abs{\ip{x}{y} - \ip{\Phi x}{\Phi y}} \leq \tau
\end{equation*}
with probability at least $1 - 2e^{-c_{3}\tau^{2}p}$.
Therefore, the affinity matrix formed by setting $A_{ij} = \abs{\ip{\Phi x_{i}}{\Phi x_{j}}}$ is $\tau$-angle preserving with the same probability, i.e.,
\begin{equation*}
    \abs{A_{ij} - \abs{\ip{x_{i}}{x_{j}}}}
    \leq \abs{\ip{\Phi x_{i}}{\Phi x_{j}} - \ip{x_{i}}{x_{j}}}
    \leq \tau
    .
\end{equation*}
The remainder of the proof then follows that of Thm.~\ref{thm:intersectionNoiseless}.
}

\section{Algorithmic and Simulation Details}
\label{app:simDetails}

In this section, we include implementation details beyond those included in the main body. We first provide pseudocode for the \algname{Thresh} and \algname{EKSS-0} algorithms. We then describe all preprocessing steps and parameters used for our experiments on real data.

\subsection{Pseudocode}
\label{app:pseudocode}

In Algorithm \ref{alg:Threshold} is the pseudocode for the \algname{Thresh} routine used in the EKSS algorithm, which results in the same connectivity as thresholding in TSC \cite{heckel2015robust}. Algorithm~\ref{alg:EKSS0} gives the pseudocode for the EKSS-0 algorithm, which is analyzed in Section~\ref{sec:theory}.

\begin{algorithm*}[h]
    \caption{\algname{Affinity Threshold (Thresh)}}
    \label{alg:Threshold}
    \begin{algorithmic}[1]
        \STATE \textbf{Input:} $A \in \brac{0,1}^{N \times N}$: affinity matrix, $q$: threshold parameter
        \STATE \textbf{Output:} $\bar{A} \in \brac{0,1}^{N \times N}$: thresholded affinity matrix
        \FOR{$i = 1,\dots,N$}
        \STATE $Z^\text{row}_{i,:} \gets A_{i,:}$ with the smallest $N-q$ entries set to zero. \hfill Threshold rows
        \STATE $Z^\text{col}_{:,i}\ \gets A_{:,i}$ with the smallest $N-q$ entries set to zero. \hfill Threshold columns
        \ENDFOR
        \STATE $\bar{A} \gets \frac{1}{2}\paren{Z^\text{row}+Z^\text{col}}$ \hfill Average
    \end{algorithmic}
\end{algorithm*}

\begin{algorithm}[h]
    \caption{\algname{EKSS-0}}
    \label{alg:EKSS0}
    \begin{algorithmic}[1]
        \STATE \textbf{Input:} $\sX = \set{x_{1},x_{2},\dots,x_{N}} \subset \bR^D$: data, $\bar{K}$: number of candidate subspaces, $\bar{d}$: candidate dimension, $K$: number of output clusters, $q$: threshold parameter, $B$: number of base clusterings,
        \STATE \textbf{Output:} $\sC=\set{c_1, \dots,c_K}$: clusters of $\sX$
        \FOR{$b = 1,\dots,B$ (in parallel)}
            \STATE $U_{1},\dots,U_{\bar{K}} \overset{iid}{\sim} \operatorname{Unif}(\operatorname{St}(D,\bar{d}))$  \hfill Draw $\bar{K}$ random subspace bases 
            \STATE $c_k \gets \set{x \in \sX\ :\ \ \forall j \ \norm{U_k^Tx}_2 \geq \norm{U_j^Tx}_2 }$ for $k = 1,\dots,\bar{K}$ \hfill Cluster by projection
            \STATE $\sC^{(b)} \gets \set{c_{1},\dots,c_{\bar{K}}}$
        \ENDFOR
        \STATE $A_{i,j} \gets \frac{1}{B}\abs{\set{b:x_i,x_j\text{ are co-clustered in }\sC^{(b)}}}$ for $i,j = 1,\dots,N$ \hfill Form affinity matrix
        \STATE $\bar{A} \gets \algname{Thresh}(A,q)$ \hfill Keep top $q$ entries per row/column
        \STATE $\sC \gets$ \algname{SpectralClustering}($\bar{A},K$) \hfill Final Clustering
    \end{algorithmic}
\end{algorithm}

\subsection{Clustering Error}
\label{app:clustErr}

The clustering error, which is the metric used for all experimental results, is computed by matching the true labels and the labels output by a given clustering algorithm,
\begin{equation*}
    \text{err} = \frac{100}{N} \left( 1 - \max_{\pi} \sum_{i,j} Q_{\pi(i)j}^{\text{out}}Q_{ij}^{\text{true}} \right),
\end{equation*}
where $\pi$ is a permutation of the cluster labels, and $Q^{\text{out}}$ and $Q^{\text{true}}$ are the output and ground-truth labelings of the data, respectively, where the $(i,j)$th entry is one if point $j$ belongs to cluster $i$ and is zero otherwise.

\subsection{Experiments on Benchmark Data}
\label{app:expDetails}

\begin{table}[h]
    \centering
    \begin{tabular}{ | c || c | c | c | c | c | c | c | }
        \hline 
        Dataset & $N$ & $K$ & $D$ \\
        \hline
        Hopkins-155 & 39-556 & 2-3 & 30-200 \\
        \hline
        Yale & 2432 & 38 & 2016 \\
        \hline
        COIL-20 & 1440 & 20 & 1024 \\
        \hline
        COIL-100 & 7200 & 100 & 1024 \\
        \hline
        USPS & 9298 & 10 & 256 \\
        \hline
        MNIST-10k & 10000 & 10 & 500 \\
        \hline
    \end{tabular}
    \caption{Datasets used for experiments with relevant parameters; $N$: total number of samples, $K$: number of clusters, $D$: ambient dimension.}
    \label{tab:datasets}
\end{table}

In this section, we describe the benchmark datasets used in our experiments, as well as any preprocessing steps and the parameters selected for all algorithms. All datasets are normalized so that each column lies on the unit sphere in the corresponding ambient dimension, as is common in the literature \cite{soltanolkotabi2012geometric,heckel2015robust,he2016robust}. Table \ref{tab:datasets} gives a summary of all datasets considered.

The Hopkins-155 dataset \cite{tron2007benchmark} consists of 155 motion sequences with $K = 2$ in 120 of sequences and $K = 3$ in the remaining 35. In each sequence, objects moving along different trajectories each lie near their own affine subspace of dimension at most 3. We perform no preprocessing steps on this dataset.

The Extended Yale Face Database B \cite{georghiades2001from,lee2005acquiring} consists of 64 images of each of 38 different subjects under a variety of lighting conditions. Each image is of nominal size $192 \times 168$ and is known to lie near a 9-dimensional subspace \cite{basri2003lambertian}. We downsample so that each image is of size $48 \times 42$, as in \cite{elhamifar2013sparse}. For EKSS, KSS, CoP-KSS, MKF, and TSC, we perform an initial whitening as in
\cite{zhang2012hybrid,heckel2015robust} by removing the first two singular components of the dataset and then project the data onto its first 500 principal components to reduce the computational complexity of these methods. Whitening resulted in worse performance for all other algorithms, so we omitted this step.

\begin{table*}[h]
    \resizebox{\textwidth}{!}{
    \centering
    \begin{tabular}{ | c || c | c | c | c | c | c |}
        \hline 
        Algorithm & Hopkins & Yale & COIL-20 & COIL-100 & USPS & MNIST-10k \\
        \hline
        EKSS & $d = 3, q = 2$ & $d = 2, q = 6$ & $d = 2, q = 6$ & $d = 8, q = 7$ & $d = 13, q = 3$ & $d = 13, q = 72$ \\
        \hline
        KSS & $d = 3$ & $d = 3$ & $d = 1$ & $d = 5$ & $d = 9$ & $d = 13$ \\
        \hline
        CoP-KSS & $d = 4$ & $d = 6$ & $d = 9$ & $d = 1$ & $d = 7$ & $d = 18$ \\
        \hline
        MKF & $d = 3$ & $d = 17$ & $d = 19$ & $d = 18$ & $d = 20$ & $d = 20$ \\
        \hline
        TSC & $q = 3$ & $q = 3$ & $q = 4$ & $q = 4$ & $q = 3$ & $q = 3$ \\
        \hline
        SSC-ADMM & $\rho = 0.1, \alpha = 226.67$ & $\rho = 0.1, \alpha = 670$ & $\rho = 0.8, \alpha = 5$ & $\rho = 1, \alpha = 20$ & $\rho = 1, \alpha = 20$ & $\rho = 1, \alpha = 20$ \\
        \hline
        SSC-OMP & $\varepsilon = 2^{-52}, k_{max} = 2$ & $\varepsilon = 2^{-52}, k_{max} = 2$ & $\varepsilon = 2^{-52}, k_{max} = 2$ & $\varepsilon = 2^{-52}, k_{max} = 2$ & $\varepsilon = 2^{-52}, k_{max} = 29$ & $\varepsilon = 2^{-52}, k_{max} = 17$ \\
        \hline
        EnSC & $\lambda = 0.01, \alpha = 98$ & $\lambda = 0.88, \alpha = 3$ & $\lambda = 0.99, \alpha = 3$ & $\lambda = 0.95, \alpha = 3$ & $\lambda = 0.95, \alpha = 50$ & $\lambda = 0.95, \alpha = 3$ \\
        \hline
    \end{tabular}}
    \caption{Parameters used in experiments on real datasets for all algorithms considered.}
    \label{tab:parameters}
\end{table*}

The COIL-20 \cite{nene1996coil20} and COIL-100 \cite{nene1996coil100} datasets consist of 72 images of 20 and 100 distinct objects (respectively) under a variety of rotations. All images are of size $32 \times 32$. On both datasets, we whiten by removing the first singular component when it improves algorithm performance.

The USPS dataset provided by \cite{cai11graph} contains 9,298 total handwritten digits of size $16 \times 16$ with roughly even label distribution. No preprocessing is performed on this dataset.

The MNIST dataset \cite{lecun2016mnist} contains a total of 70,000 handwritten digits, of which we consider only the 10,000 ``test'' images. The images have nominal size $29 \times 29$, and we use the output of the scattering convolutional network \cite{bruna2013invariant} of size 3,472 and then project onto the first 500 principal components as in \cite{you2016oracle}.

For all algorithms, we set $K$ to be the correct number of clusters. For EKSS, we set $B = 1000$ and $T = 3$ for all datasets except MNIST, for which we set $T = 30$. Due to the benefits demonstrated in \cite{gitlin2018improving}, we employed CoP-KSS instead of KSS as a base clustering algorithm for the USPS and MNIST datasets.
For a fair comparison to KSS, CoP-KSS, and MKF, we ran $1000$ trials of each and use the clustering result that achieves the lowest clustering error.
The parameters used for all experiments are shown in Table \ref{tab:parameters}, with the most common parameters given among the 155 datasets for the Hopkins database. For the Hopkins, Yale, and COIL-20 datasets, we performed extensive model sweeps over a wide range of values for each parameter for each algorithm. For the larger COIL-100, USPS, and MNIST-10k datasets, this was infeasible for SSC-ADMM and EnSC, so the values were instead chosen from an intelligently-selected subset of parameters.



\bibliographystyle{IEEEtran}
\bibliography{Bibliography}



\end{document}